\documentclass{article} 
\usepackage{preprint_style,times}

\usepackage{amsmath,amsfonts,bm}

\def\eqref#1{equation~\ref{#1}}

\def\1{\bm{1}}

\DeclareMathAlphabet{\mathsfit}{\encodingdefault}{\sfdefault}{m}{sl}
\SetMathAlphabet{\mathsfit}{bold}{\encodingdefault}{\sfdefault}{bx}{n}

\def\gD{{\mathcal{D}}}

\def\gL{{\mathcal{L}}}

\def\gN{{\mathcal{N}}}

\def\gT{{\mathcal{T}}}

\def\sP{{\mathbb{P}}}

\newcommand{\pdata}{p_{\rm{data}}}

\newcommand{\E}{\mathbb{E}}

\newcommand{\R}{\mathbb{R}}

\newcommand{\KL}{\mathbb{D}_{\mathrm{KL}}}

\newcommand{\xhdr}[1]{{\noindent\bfseries #1}.}
\newcommand{\cut}[1]{}

\newcommand{\namelong}{\textsc{Normalizing Flow Maps}\xspace}
\newcommand{\nameshort}{\textsc{NFM}\xspace}
\newcommand{\nameshorts}{\textsc{NFM}\textnormal{s}\xspace}
\newcommand{\nfmr}{\textsc{NFM-R}\xspace}

\newcommand{\nfmd}{\textsc{NFM-D}\xspace}

\newcommand{\pbase}{p_{\rm{base}}}

\newcommand{\cifar}{CIFAR-$10$\xspace}
\newcommand{\imnet}{ImageNet-$64$\xspace}

\usepackage[framemethod=TikZ]{mdframed}
\mdfdefinestyle{MyFrame}{%
    linecolor=black,
    outerlinewidth=.3pt,
    roundcorner=5pt,
    innertopmargin=1pt, 
    innerbottommargin=1pt, 
    innerrightmargin=1pt,
    innerleftmargin=1pt,
    backgroundcolor=black!0!white}

\mdfdefinestyle{MyFrame2}{%
    linecolor=white,
    outerlinewidth=1pt,
    roundcorner=1pt,
    innertopmargin=3pt,
    innerbottommargin=2pt,
    innerrightmargin=7pt,
    innerleftmargin=7pt,
    backgroundcolor=black!3!white}

\mdfdefinestyle{MyFrameEq}{%
    linecolor=white,
    outerlinewidth=0pt,
    roundcorner=0pt,
    innertopmargin=0pt,
    innerbottommargin=0pt,
    innerrightmargin=7pt,
    innerleftmargin=7pt,
    backgroundcolor=black!3!white}

\mdfdefinestyle{nfmdefinitionstyle}{
    backgroundcolor=black!3,
    linecolor=black!35,
    linewidth=0.5pt,
    roundcorner=0pt,
    innerleftmargin=9pt,
    innerrightmargin=9pt,
    innertopmargin=6pt,
    innerbottommargin=6pt,
    skipabove=10pt,
    skipbelow=10pt
}

\usepackage{hyperref}
\usepackage{url}

\usepackage[utf8]{inputenc} 
\usepackage[T1]{fontenc}    
\usepackage{url}            
\usepackage{booktabs}       
\usepackage{threeparttable}
\usepackage{amsfonts}       
\usepackage{nicefrac}       
\usepackage{microtype}      
\usepackage[dvipsnames]{xcolor}

\usepackage[utf8]{inputenc} 
\usepackage[T1]{fontenc}    
\usepackage{hyperref}       
\usepackage{booktabs}       
\usepackage{enumitem}
\usepackage{mathtools}
\usepackage{todonotes}
\usepackage{amssymb,fge}
\usepackage{wrapfig}
\usepackage{bbm}
\usepackage{mathrsfs}
\usepackage{soul}
\usepackage{array}
\usepackage{multirow}
\usepackage{algorithm}
\usepackage{tikz}
\usepackage{subcaption}
\usepackage{wrapfig}
\usetikzlibrary{positioning,arrows.meta,shapes.geometric}
\usepackage{algorithm}
\usepackage{listings}

\usepackage{nicefrac}       
\usepackage{microtype}      
\usepackage{xcolor}         
\usepackage{colortbl}
\usepackage{enumitem}
\usepackage{xspace}
\usepackage{lineno}
\usepackage{titletoc}

\usepackage{amsmath, amsthm, mathtools}
\usepackage{thmtools}
\makeatletter
\@ifundefined{newcounteralias}{}{%
  \renewcommand\thmt@autorefsetup{%
    \@xa\def\csname\thmt@envname autorefname\@xa\endcsname\@xa{\thmt@thmname}%
  }%
}
\makeatother
\usepackage{cleveref}
\usepackage[normalem]{ulem}
\usepackage{changepage}
\usepackage{float}
\usepackage{needspace} 
\usepackage{makecell}
\usepackage{pifont}
\usepackage{tikz}
\usepackage{fontawesome5}

\newcommand{\xmark}{\ding{55}} 
\definecolor{sectiongray}{gray}{0.90}
\definecolor{refgray}{gray}{0.55}
\usepackage{algorithm}
\usepackage{algorithmic}
\newtheorem{theorem}{Theorem}
\newtheorem{lemma}[theorem]{Lemma}

\newtheorem{proposition}[theorem]{Proposition}

\newtheorem{remark}[theorem]{Remark}
\theoremstyle{definition}
\newmdtheoremenv[
    style=nfmdefinitionstyle
]{nfmdefinition}{Definition}
\theoremstyle{plain}

\newtheoremstyle{nfmremarktext}
    {5pt}{0pt}
    {\normalfont}{}
    {\bfseries}{.}{0.5em}{}

\theoremstyle{nfmremarktext}
\newmdtheoremenv[
    backgroundcolor=white,
    linecolor=black!25,
    linewidth=0.4pt,
    roundcorner=0pt,
    topline=true,
    bottomline=true,
    leftline=true,
    rightline=true,
    innerleftmargin=9pt,
    innerrightmargin=9pt,
    innertopmargin=6pt,
    innerbottommargin=6pt,
    skipabove=8pt,
    skipbelow=8pt
]{nfmremark}{Remark}
\theoremstyle{plain}

\hypersetup{
	colorlinks=true,       
	linkcolor=RoyalBlue,        
	citecolor=RoyalBlue,        
	filecolor=magenta,     
	urlcolor=RoyalBlue         
}

\definecolor{ink}{HTML}{1B2436}
\definecolor{muted}{HTML}{6E788A}
\definecolor{blue}{HTML}{3F6FA0}
\definecolor{green}{HTML}{21865B}
\definecolor{red}{HTML}{A64B45}
\definecolor{orange}{HTML}{D8872F}
\definecolor{purple}{HTML}{7155A6}
\definecolor{stream}{HTML}{CDD5E1}
\definecolor{rule}{HTML}{D9DEE7}

\title{Normalizing Flow Maps}

\author{%
Francesco M. Ruscio $^{1,2,3}$\thanks{Equal Contribution.}
\hspace{0.01em}
\thanks{Correspondence to \texttt{fruscio@aithyra.at},\, and \texttt{\{t.hitge,joey.bose\}@imperial.ac.uk}.}\hspace{0.01em} , Timothy James Hitge$^{2*\dagger}$, Fred Zhangzhi Peng$^4$, \\
\textbf{Danyal Rehman$^{1,5,6}$}, \textbf{Zander W. Blasingame$^1$}, \textbf{Nicholas M. Boffi$^7$},\\
\textbf{Michael M. Bronstein$^{1,8}$}, \textbf{Alexander Tong$^1$}, \textbf{Avishek Joey Bose$^{2,5\dagger}$} \\
 $^1$AITHYRA, $^2$Imperial College London, $^3$TU Wien, $^4$Duke University \\ $^5$Mila -- Qu\'ebec AI Institute, $^6$MIT, $^7$Carnegie Mellon University, $^8$University of Oxford\\[3pt]
{\small\normalfont
\faGlobe \enspace
\href{https://the-bose-lab.github.io/normalizing-flow-maps/}{\texttt{https://the-bose-lab.github.io/normalizing-flow-maps/}}
}
\\
{\small\normalfont
\faGithub \enspace
\href{https://github.com/francescoshox/NormalizingFlowMaps}{\texttt{https://github.com/francescoshox/NormalizingFlowMaps}}
}
}

\newif\ifshowcomments

\ifshowcomments
  \newcommand{\inlineComment}[3]{\textcolor{#2}{[#3 -\textit{#1}]}}
  \newcommand{\shox}[1]{\inlineComment{Shox}{magenta}{#1}}
  
  \newcommand{\tim}[1]{\inlineComment{Tim}{cyan}{#1}}
  \newcommand{\NB}[1]{\inlineComment{Nick}{red}{#1}}
  \newcommand\zander[1]{\noindent{\color{green} {\bf \fbox{Zander}} {\it#1}}}
\else
  \newcommand{\inlineComment}[3]{}
  \newcommand{\shox}[1]{}
  \newcommand{\tim}[1]{}
  \newcommand{\NB}[1]{}
  \newcommand{\zander}[1]{}
\fi

\newcommand{\headlinepanel}[2]{%
    \begin{tikzpicture}
        \node[inner sep=0] (img) {\includegraphics[width=\linewidth]{images/#1}};
        \node[anchor=south west, text=white, font=\scriptsize\bfseries,
              fill=black, fill opacity=0.45, text opacity=1, inner sep=2pt]
              at ([xshift=3pt,yshift=3pt]img.south west) {#2};
    \end{tikzpicture}%
}

\iclrfinalcopy 
\begin{document}

\begingroup
\renewcommand{\thefootnote}{}

\endgroup

\maketitle

\begin{abstract}
\looseness=-1
Scalable approaches to learning neural dynamical systems have placed diffusion, flow matching, and few-step generators in flow maps at the forefront of generative modeling. Such state-of-the-art models have limited access to explicit model densities despite evolving from classical generative modeling approaches that leveraged tractable likelihoods, e.g., Normalizing Flows and Autoregressive models. In this paper, we combine these threads of few-step generation and tractable sample likelihoods and introduce \namelong (\nameshort), a class of \emph{exactly invertible pushforward maps} that generalizes normalizing flows and flow maps. More precisely, \nameshorts learn the integrator of an underlying dynamical system, but crucially, they provide efficient exact likelihood evaluation by construction. 
To enable scalable training of \nameshorts, we derive two complementary axes of consistency inherent to \nameshorts that yield (self) distillation objectives: (1) pointwise matching, which unlocks $\ell_2$-based flow-map style regression losses, and (2) distributional matching, which matches pushforward distributions to the marginals of a prescribed forward process.
On conditional image generation on \cifar and \imnet, \nameshorts achieve state-of-the-art sample quality among parameter-matched normalizing flows, while supporting likelihood-based RLHF. For equilibrium sampling in Boltzmann Generators, \nameshorts outperform prior state-of-the-art normalizing flows on both single-system and transferable peptide benchmarks. 
\end{abstract}

\section{Introduction}
\label{sec:introduction}

\looseness=-1
Learning a transport map between a tractable reference distribution and the empirical distribution of complex data has become the de facto paradigm in generative modeling, with varied application domains spanning audiovisual media~\citep{ho2022video, liu2023audioldm}, biomolecular modeling~\citep{abramson2024accurate, watson2023novo}, and scientific simulation~\citep{shu2023physics, du2024conditional}. Driving this widespread adoption is the ability to leverage standard alignment techniques such as best-of-$N$ selection~\citep{dalle}, inference-time steering~\citep{dhariwal2021guidance,ho2022classifier,Bansal_2023_CVPR,ma2025inference}, and post-training algorithms~\citep{black2024training, liu2025flow} which effectively sharpen the model distribution---by increasing sample rewards assessed by an oracle. However, each generated sample necessitates computationally expensive numerical integration of an ODE or SDE---producing a substantial inference tax that computationally challenges the steering and post-training pipeline~\citep{wohlwend2025boltz,scarpellini2026few}.

\looseness=-1
Indeed, while typical Autoregressive LLMs have a mature post-training stack---e.g., PPO~\citep{schulman2017proximal}, GRPO~\citep{shao2024deepseekmathpushinglimitsmathematical}, etc.---corresponding algorithmic frameworks for diffusion and flow model post-training are both vast and varied~\citep{wallace2024diffusion, prabhudesai2023aligning, clark2024directly, imagereward}.
The crucial distinction is that LLMs natively provide tractable sample likelihoods, whereas evaluating exact densities for diffusion and flow-matching models is computationally challenging. Specifically, evaluating sample likelihoods of flow models requires integrating the instantaneous change of variables along the underlying ODE, computing a high-dimensional network Jacobian, and necessitates stochastic trace estimation~\citep{chenneuralode,meyer2021hutch++}. Instead, it is computationally more attractive to target the reward-tilted distribution, by modifying the drift of the underlying dynamics through path-space objectives. These objectives, however, crucially rely on high-variance value estimates of the terminal reward.
More precisely, value function estimation necessitates propagating terminal rewards through noisy intermediate states,
which requires backpropagating through time \citep{ben-hamu2024dflow}--or a noisy single-step denoising approximation~\citep{blasingame2025greed,lee2026reward}.
These algorithmic features combine and give rise to tilting objectives that are inherently high variance, often leading to unstable post-training and mode collapse.

\begin{figure*}[t]
  \centering
\includegraphics[width=\textwidth]{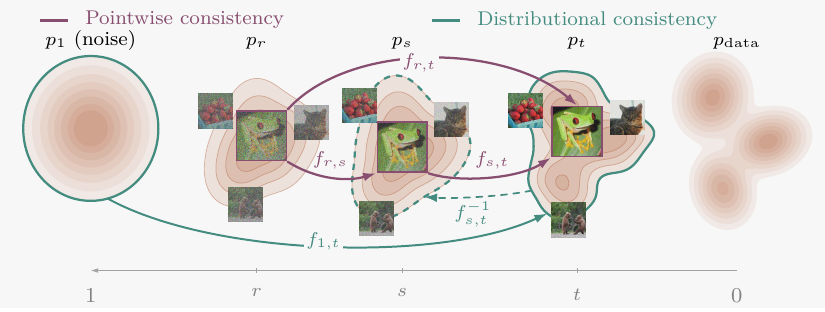}
  \vspace{-20pt}
  \caption{
    \small \looseness=-1 Overview of \nameshort, which performs generative transport along a stochastic interpolant. \nfmr exploits pointwise matching at the particle level, while \nfmd leverages distributional matching.
  }
  \vspace{-25pt}
  \label{fig:normalizing-flow-map}
\end{figure*}

\looseness=-1
As generative models are deployed at scale, the cumulative cost of repeated sampling for generation, steering, and post-training makes inference efficiency a central design consideration.
This practical motivation has led to the emergence of flow maps~\citep{boffi2025flow,geng2025mean,boffi2025how}, that directly learn \emph{the flow integrator}, effectively compressing numerical simulation into few network evaluations. Although the exact ODE flow is a diffeomorphism, standard parameterizations are not invertible, nor do they provide tractable Jacobian determinants. While flow maps reduce the inference tax, they do not resolve the crucial likelihood-access gap---limiting their usage in scientific applications like Boltzmann Generators~\citep{noe2019boltzmann}. As a result, classical Normalizing Flows (NF)~\citep{tabak_density_2010,tabak_family_2013,dinh_density_2017,rezende_variational_2015} trained with MLE remain an attractive exact likelihood-based model class. However, their practical deployment critically relies on noise injection into the clean data distribution for stable training, which unfortunately eliminates the promised access to cheap sample likelihood~\citep{chen_iterative_2026}.
 Taken together, these considerations motivate a new generative model that combines few-step sampling of flow maps with exact sample likelihoods for scalable, likelihood-based post-training.

\looseness=-1
\xhdr{Current work}
In this paper, we introduce \namelong (\nameshort), a new generative model family, depicted in~\cref{fig:normalizing-flow-map}, that is purpose-built to deliver two objectives of interest: 1.) \underline{\emph{Exact sample likelihoods}} and 2.) \underline{\emph{Few-step inference}}; thereby generalizing classical invertible normalizing flows and endowing flow maps with tractable likelihoods. An \nameshort defines a time-conditioned generative transport operator that is tied to the law of a known stochastic interpolant. However, unlike existing model classes, \nameshort by construction defines a family of tractable invertible maps connecting arbitrary points along a prescribed probability path. Consequently, \nameshorts provide exact sample likelihoods \emph{without} expensive numerical integration. Importantly, restricting this family to the prior and data endpoints recovers a normalizing flow, and querying arbitrary time pairs yields the few-step transport characteristics of a flow map.  As a result, \nameshorts are naturally suited for applications that require sample likelihoods, such as Boltzmann generators~\citep{noe2019boltzmann} and RLHF post-training. 

\looseness=-1
To train \nameshorts, we operationalize two complementary matching principles inherent to \nameshorts, which in turn lead to self-distillation training objectives. In particular, our key insight is that \nameshorts must satisfy a known stochastic interpolant at a \emph{pointwise} level and/or match its \emph{marginal distribution}. The former condition unlocks regression-based training common in flow maps, but adapted to \nameshort, while the latter is an altogether new algorithmic framework that trains few-step generators by exploiting \nameshorts family of (end-point) invertible maps. We  summarize our main contributions:

\begin{enumerate}[topsep=0pt, partopsep=0pt, itemsep=0pt, parsep=0pt, leftmargin=*]
\item \looseness=-1 \xhdr{Model class} We introduce \nameshort, a new generative model class, that learns an invertible pushforward map, while natively supporting few-step sampling and exact sample likelihoods.

\item \looseness=-1 \xhdr{Training algorithms} Defined by matching principles (\S\ref{sec:model_computational_properties}), we design two instantiations in \nfmr and \nfmd, which leverages pointwise and distributional matching respectively. In particular, \nfmr optimizes $\ell_2$-regression objectives. Meanwhile, \nfmd matches the induced pushforward distributions along the marginal law of the known forward process.

\item \looseness=-1 \xhdr{Empirical analysis} We evaluate \nameshort across image generation, post-training, and equilibrium sampling. On \cifar and \imnet, \nameshort improves over MLE-trained NFs in FID and FLD while for BGs \nameshort improves over prior continuous flow approaches. We also demonstrate that exact tractable likelihoods of \nameshort enable direct post-training unlike diffusion models. 
\end{enumerate}

\section{Background and Preliminaries}
\label{sec:background}
\looseness=-1
Given a dataset of samples, $\gD = \{x^k\}^n_{k=1}$, from a data distribution $\pdata \in \sP(\R^d)$, and a tractable base distribution $\pbase \in \sP(\R^d)$, generative models based on dynamical systems parametrize a map $f^\theta:\mathbb{R}^{d}\mapsto \mathbb{R}^d$---with learnable parameters $\theta$. The induced pushforward distribution is thus $p^\theta \coloneqq [f^\theta]_{\#}(\pbase)$, and the goal of learning is then to approximate $p^\theta(x) \approx \pdata(x)$ in a distributional sense. We use superscripts to designate indices, e.g., $k \in [n]$, and reserve subscripts to indicate a notion of time $t \in [0,1]$ associated with the mass transport problem. Finally, we set $p_{1}\coloneqq \pbase = \mathcal{N}(0,I)$ to be the base distribution, $p_{0}\coloneqq \pdata$ the data distribution and $p^{\theta}=[f^\theta]_{\#}(p_{1})$. 

\subsection{Stochastic Interpolants and Flow Maps}
\label{sec:stochastic_interpolants}

\looseness=-1
Given $\sP(\R^d)$ the space of probability distributions on $\R^d$, a \emph{stochastic interpolant}~\citep{albergo2023building,albergo2025stochastic} $I:[0,1]\times\R^d\times\R^d\to\R^d$ is a stochastic process that mixes samples from the base and target measures at the particle level $(t, x_0, x_1) \mapsto  I_t(x_0,x_1) := x_t = \gamma_t x_0 + \beta_t x_1$, for a choice of smooth functions $\gamma$ and $\beta$, with boundary constraints $\gamma_0 = 1$, $\gamma_1 = 0$, $\beta_0 = 0$ and $\beta_1 = 1$. Given a coupling, $\pi$, of base and target measures $(x_0, x_1) \sim \pi(x_0, x_1)$, the stochastic interpolant defines a probability path that connects the measures $p_0$ and $p_1$ in time 
\begin{equation}
\label{eq:marginal_prob_path}
    p_t = \mathrm{Law}(I_t), \quad \text{such that} \, \int \pi(x_0, x_1)d x_0 = p_1(x_1) \, \int \pi(x_0, x_1)d x_1 = p_0(x_0).
\end{equation}
\looseness=-1
The marginal law in~\cref{eq:marginal_prob_path} defines a \emph{probability-flow} in time $t\in[0,1]$ that pushes samples from $p_0$ to $p_1$ via the velocity field $v_t(x) = \mathbb{E}_{x_t \sim p_t(x_t)}[ \partial I_t / \partial t \mid I_t = x_t]$, which averages over all interpolants. Unfortunately, this marginal velocity is computationally inaccessible. Instead, the conditional probability flow $p_t(\cdot | x_0, x_1)$ indexed for all $t\in(0,1)$ can be leveraged to train a neural ODE~\citep{chenneuralode}. Specifically, setting $\gamma_t \equiv 1-t$ and $\beta_t \equiv t$ recovers well-known methods such as flow-matching~\citep{lipman2023flow,liu2023flow,albergo2023building}, under the linear interpolant $x_t=(1-t)x_0+t x_1$, yielding a simple simulation-free $\ell_2$-regression objective, 
\begin{equation}
\label{eq:flow_matching}
    \gL_{\rm{FM}} = \mathbb{E}_{(x_0, x_1)\sim \pi(x_0,x_1),\, t \sim \nu(t)}\left[\|v_t^\theta(x_t) - v^*_t(x_t|x_0, x_1)\|^2\right], \quad v^*_t(x_t|x_0, x_1) = x_1 - x_0,
\end{equation}
\looseness=-1
where $\nu(t)$ is a distribution in the interval $(0,1)$.
Despite scalable training of~\cref{eq:flow_matching}, likelihood evaluation is expensive due to computing the  divergence~\citep{grathwohl2018ffjord,meyer2021hutch++}.

\looseness=-1
\xhdr{Flow Maps}
To reduce the number of sampling steps at inference, it is possible to learn a \emph{flow map} $X_{s,t}:[0,1]^2\times\mathbb{R}^d\mapsto \mathbb{R}^d$~\citep{boffi2025flow, boffi2025how}. The flow map is the operator that uniquely satisfies the jump condition $X_{s,t}(x_s)=x_t$, for all $(s,t)\in[0,1]^2$ when $(x_t)_{t\in[0,1]}$ is any solution to the probability flow ODE. Flow maps are often parametrized by the \emph{average velocity} $v^{\theta}_{s,t}(x_s)$:
\begin{equation}
    \label{eq:flow_map_avg_velocity}
    X^{\theta}_{s,t}(x_s)=x_s+(t-s)v^{\theta}_{s,t}(x_s), \quad v^{\theta}_{s,t}(x_s)=\frac{1}{t-s}\int_{s}^{t}v_{\tau}(x_{\tau})d\tau.  
\end{equation}
\looseness=-1
Furthermore, the true flow map simultaneously satisfies the following consistency conditions,
\begin{equation}
\label{eq:flow_map_conditions}
    \underbrace{
    \partial_t X_{s,t}(x_s)
    = v_{t,t}\!\left(X_{s,t}(x_s)\right)
}_{\text{Lagrangian}},\,
\underbrace{
    \partial_s X_{s,t}(x_s)
    = -v_{s,s}(x_s)\nabla X_{s,t}(x_s)
}_{\text{Eulerian}},\,
\underbrace{
    X_{s,t}\!\left(X_{r,s}(x_r)\right)
    = X_{r,t}(x_r)
}_{\text{Semi-group}}.
\end{equation}
\looseness=-1
Typical flow map objectives convert the above conditions into distillation losses with stop-gradients~\citep{frans2025one,geng2025mean,song2023consistency,sabour2025align}.

\subsection{Normalizing Flows}
\label{subsec:NF}
\looseness=-1
Normalizing Flows (NFs)~\citep{tabak_density_2010,tabak_family_2013, rezende_variational_2015} parametrize a diffeomorphism between distributions using exactly invertible architectures. The key benefit is exact likelihood evaluation via the classical change of variables formula,
\begin{equation}
    \label{eq:change_of_var}
    \log p_0^{\theta}(x_0)
    =
    \log p_1(x_1)
    - \log \left|
        \det \frac{\partial f^{\theta}(x_1)}{\partial x_1}
    \right|, \quad \text{where} \, p_1(x_1) := \gN(0,I).
\end{equation}
\looseness=-1
Certain classes of NFs are provably universal density approximators~\citep{teshima2020coupling, huang2018neural} and can be constructed through the composition of $m$ elementary diffeomorphisms, $f^{\theta}=f^{\theta,m}\circ f^{\theta,{m-1}}\circ\cdots \circ f^{\theta,1}$, and
trained using Maximum Likelihood Estimation (MLE) according to~\cref{eq:change_of_var}. To scale modern NFs, current approaches leverage a causal Transformer backbone with Autoregressive flows (TARFlow)~\citep{zhai2025normalizing}, interchanged with permutation layers. 
The flow transformations are parametrized by scale and shift functions
$\alpha^{\theta,m}, \mu^{\theta,m}:
\R^{n\times d} \to \R^{n\times d}$.
Specifically, at position $k \in [n]$ in the sequence, the $m$-th transformation is
$
f_k^{\theta,m}(x)
=
\left(x_k-\mu_k^{\theta,m}(x_{<k})\right)
\odot
\exp\left(-\alpha_k^{\theta,m}(x_{<k})\right).
$

\section{Method}
\label{sec:normalizing_flow_maps}

\looseness=-1
We introduce \namelong (\nameshort), a new generative model class built from two-time \emph{pushforward maps} that learn generative transport along the probability path of a stochastic interpolant.
More precisely, \nameshorts generalize classical normalizing flows across time, combining the large transport-step capabilities of jump operators while endowing previous flow maps with tractable inverses that enable exact sample likelihoods, i.e., $p_0(x_0)$. 
We first define the model and its parameterizations (\S\ref{sec:model_definition}), then expose the inherent computational properties at the particle and distributional level. These principles yield (self)-distillation objectives (\S\ref{sec:distillation_losses}) resulting in \nfmr that is constructed from pointwise matching of the probability flow and \nfmd that leverages distributional matching of its marginals. Finally, we demonstrate RLHF-based post-training in \S\ref{subsec:posttraining}.

\subsection{Normalizing Flow Map}
\label{sec:model_definition}

\looseness=-1
Let $p_t=\operatorname{Law}(x_t)$ denote the marginals of a prescribed forward (corruption) process, induced by a stochastic interpolant
introduced in
\S\ref{sec:stochastic_interpolants}.
We seek to perform few-step transport along this probability path through
a shared family of two-time maps with tractable inverses and
exact change of variables at data time $t=0$. This motivates the construction of the following generative model, defined as:

\begin{nfmdefinition}[Normalizing Flow Map]
\label{def:nfm}
\looseness=-1
Let $\mathcal{T}\subseteq\{(s,t):0\leq t<s\leq 1\}$
be a set of admissible time pairs containing $(1,0)$.
A \emph{normalizing flow map} is a jointly
parameterized family
\begin{equation}
\label{eq:nfm_model_class}
    \mathcal{F}_{\theta}
    =
    \left\{
        f^\theta_{s,t}:\mathbb{R}^d\rightarrow\mathbb{R}^d
    \right\}_{(s,t)\in\mathcal{T}},
\end{equation}
\looseness=-1
where, for every admissible $\theta$ and $(s,t)\in\mathcal{T}$,
$f^\theta_{s,t}$ is a $\mathcal{C}^\infty$ diffeomorphism satisfying
\begin{equation}
\label{eq:nfm_time_invertibity}
    (f^\theta_{s,t})^{-1}\circ f^\theta_{s,t}
    =
    f^\theta_{s,t}\circ(f^\theta_{s,t})^{-1}
    =
    \mathrm{Id}.
\end{equation}

\looseness=-1
Additionally, given a source density $p_s$, the map induces the density
$p^\theta_{s\to t}:=(f^\theta_{s,t})_\#p_s$ through
\begin{equation}
    \label{eq:nfm_change_of_var}
    \log p^\theta_{s\to t}(x_t)
    =
    \log p_s(x_s)
    -
   \log\left|
        \det
        \frac{\partial f^\theta_{s,t}(x_s)}{\partial x_s}
    \right|,
    \qquad
    x_t=f^{\theta}_{s,t}(x_s),\,x_s\sim p_s.
\end{equation}
\end{nfmdefinition}

\looseness=-1
Each supported function $f_{s,t}(x_s)$ that maps $\;(s,t,x_s)\mapsto x_t$ provides an invertible transformation between intermediate times $s \to t$. Consequently, density evaluation at time $t$ is possible whenever the source density $p_s(x_s)$ is tractable.
Moreover, starting from the prior $p_1$,
compositions of admissible jumps support few-step sampling
and exact density evaluation through repeated application
of~\cref{eq:nfm_change_of_var}.
A map of particular interest is the one-step map which tractably admits the density $p^{\theta}_{1 \to 0}(x_0)$,
\begin{equation}
\label{eq:nfm_one_step}
    x_1\sim p_1(x_1),
    \qquad
    x_0=f^\theta_{1,0}(x_1),
    \qquad
    p^\theta_{1 \to 0}
    =
    (f^\theta_{1,0})_{\#}p_1.
\end{equation}
\looseness=-1
We note that~\cref{eq:nfm_one_step} provides one-step sample likelihoods. We now make the following remark.

\begin{nfmremark}
\label{rem:nfm-normalizing-flows}
\looseness=-1
Restricting \nameshort to $\mathcal{T}=\{(1,0)\}$ recovers
the structure of a normalizing flow.
\end{nfmremark}

\looseness=-1
\xhdr{Differences to modern NF MLE training}
To scale state-of-the-art NF's via MLE training~\citep{zhai2025normalizing,chen_iterative_2026} it is crucial to smooth out the data distribution $p_0$ via noise injection $p_{\sigma}$, i.e., $x_\sigma=x_0+\epsilon$, $\epsilon\sim\mathcal{N}(0,\sigma^2I)$, which prevents well-known training stability issues, e.g., poor performance, and \emph{numerical} loss of invertibility~\citep{behrmann2021understanding}. While the resulting flow remains invertible and provides exact likelihoods for $x_\sigma$, clean samples are recovered at inference time using Tweedie's posterior mean estimator at noise level $\sigma$,
$\hat{x}_0=\mathbb{E}[x_0\mid x_\sigma]=x_\sigma+\sigma^2\nabla_{x_\sigma}\log p^\theta_\sigma(x_\sigma)$~\citep{miyasawa1961empirical}.
This additional denoising transformation is not covered by the change-of-variables guarantee and thus likelihoods over clean samples are not available.

\begin{nfmremark}
\label{rem:tweedie}
NFs trained with noise-augmented MLE provide exact likelihoods at $p_\sigma$, whereas \nameshorts amortize transport across noise levels and provide clean sample likelihoods $p_0$.
\end{nfmremark}

\looseness=-1 
We provide further details of practical training instability of MLE without noise injection in~\S\ref{app:likelihood_tweedie}.

\looseness=-1
\xhdr{\nameshort parametrizations}
We construct \nameshorts from a base NF architecture $\Phi_{s,t}^{\theta}$ and use it to instantiate three distinct \nameshort parametrizations $(\text{A}) - (\text{C})$. The simplest parametrization is the \emph{dual-time \nameshort}, which directly conditions the flow on the source and target times,
$
(\text{A}): f^\theta_{s,t}(x_s)=\Phi_{s,t}^\theta(x_s, s,t).
$ 
This construction is invertible for every admissible pair $(s,t)$, but does not enforce the flow map identity $f^\theta_{s,s}(x_s)=x_s$.
When the limit $t\to s$ is needed, as in pointwise matching, the identity can be incorporated into the parametrization. To this end, we use $\Phi^\theta_{s,t}$ to parametrize a two-time denoiser $\delta^\theta_{s,t}$, corresponding to a generalized form of $x$-prediction~\citep{li2026basicsletdenoisinggenerative,lee2026flowmaplanguagemodels,lu2026onestep},
\begin{equation}
\label{eq:nfm_denoiser_param}
(\text{B}): \,  \delta^\theta_{s,t}(x)
\coloneqq
\Phi_{s,t}^\theta(x,s,t),
\quad
v^\theta_{s,t}(x)
\coloneqq
\frac{x-\delta^\theta_{s,t}(x)}{s},
\quad
f^\theta_{s,t}(x_s)
\coloneqq
\frac{t}{s}x+\frac{s-t}{s}\delta^\theta_{s,t}(x).
\end{equation}
\looseness=-1
This \emph{denoiser NFM} satisfies $f^\theta_{s,s}(x_s)=x_s$ by construction, while its diagonal velocity $v^\theta_{s,s}$ can be trained to match the instantaneous velocity of the probability flow. Moreover, at $t=0$ we have $f^\theta_{t,0}=\delta^\theta_{t,0}$, ensuring that the endpoint map, i.e., from any time $t\to0$, remains solely an invertible NF. Finally, for base NFs composed of affine transformations, such as TARFlow (cf.~\S\ref{subsec:NF}), we can enforce the diagonal identity by gating the scale and shift functions with $(s-t)$:
\begin{equation}
    (\text{C}):
    f_k^{\theta,m}(x, s, t)
    =
    \left(x_k-(s-t)\mu_k^{\theta,m}(x_{<k}, s, t)\right)
    \odot
    \exp\left(-(s-t)\alpha_k^{\theta,m}(x_{<k}, s, t)\right),
\end{equation}
Interleaving these transformations with permutation layers yields the \emph{gated dual-time} \nameshort.

\looseness=-1

\subsection{Pointwise and Distributional Matching}
\label{sec:model_computational_properties}

\looseness=-1
An \nameshort describes generative transport with respect to a target stochastic interpolant, through both its
action on particles and its induced pushforward density. At optimality, the particle and distributional perspectives must be \emph{consistent} in describing the same stochastic interpolant, yielding two complementary matching laws that describe transport along an \nameshort. We now develop these pointwise and distributional matching principles which, in turn, result in self-distillation losses in \S\ref{sec:distillation_losses}.

\looseness=-1
\xhdr{Pointwise matching laws}
At the particle level, an \nameshort resembles a conventional flow map operator and jumps along the stochastic interpolant $I_t$, with a target conditional velocity field, $v^*_t(x_t|x_0, x_1)$, as defined in~\Cref{eq:flow_matching}. In particular, for the time intervals $\gT$, an \nameshort $f^{\theta}_{s,t}$ parameterizes a conventional flow map restricted to admissible invertible maps. In pointwise matching, we seek to recover the particle correspondence specified by the probability-flow ODE. This is achieved via losses based on the flow map consistency conditions (\Cref{eq:flow_map_conditions}), which often involve the average and instantaneous velocities of the flow map, both of which can be recovered from the \nameshort parametrization.

\looseness=-1
\xhdr{Distributional matching laws}
At the distributional level, we aim to learn the pushforward distribution to match the \emph{marginal law} of the forward process $p_t(x_t)$. Specifically, given a time pair $(s,t) \in \gT$, the \nameshort induced pushforward distribution is required to match 
$p^\theta_{s\to t}:=(f^\theta_{s,t})_\#q_s$.
 Exploiting the invertibility of \nameshort, we can write equivalent characterizations as the pushforward or pullback; in other words, we have that $ \left(f^\theta_{s,t}\right)_\#p_s=p_t
    \Longleftrightarrow
    \left(\left(f^\theta_{s,t}\right)^{-1}\right)_\#p_t=p_s.$
This law can be quantified under the KL divergence, $\KL\left(
        p_t \,\middle\|\, p^\theta_{s\to t}
    \right)$.
Thus, matching the generated distribution to the destination marginal
is equivalent to matching the inverse-transformed destination
distribution to the source marginal.
When $s=1$, the tractable reference density $p_1$ enables exact
evaluation of $p^\theta_{1\to t}$ through the change of variables afforded by an \nameshort at intermediate times instead of optimizing for a weighted Evidence Lower Bound (ELBO) as in diffusion models~\citep{kingma2021variational}.

\subsection{Distillation Losses}
\label{sec:distillation_losses}

\looseness=-1
The matching laws developed in \S\ref{sec:model_computational_properties} yield two complementary perspectives on an \nameshort. We next convert these into training frameworks, resulting in two model variants \nameshort-Regression (\nfmr) and \nameshort-Distribution (\nfmd) that exploit pointwise and distributional laws, respectively.

\xhdr{\nameshort Regression}
\looseness=-1
To design \nfmr, we restrict our focus to the denoiser parametrization of an \nameshort~\cref{eq:nfm_denoiser_param}.
Furthermore, we instantiate a variant of the Eulerian consistency with conditional velocity $v^*_t(x_t | x_0, x_1)$ and the stopgradient operator ($\mathrm{sg}$) as introduced in improved Mean Flow (iMF) and Pixel MeanFlow (pMF)~\citep{geng2026improved,lu2026onestep},
\begin{equation*}
\label{eq:imf}
    \mathcal{L}_{\rm R}=\mathbb{E}_{(s,t) \sim \nu(s,t) ,(x_0,x_1) \sim \pi}\left[\left\lVert v^*_t +\mathrm{sg}((t-s)(\partial_s v^{\theta}_{s,t}(x_s)+\nabla v^{\theta}_{s,t}(x_s)v^{\theta}_{s,s}(x_s))) -v^{\theta}_{s,t}(x_s)\right\rVert^{2}\right].
\end{equation*}

\looseness=-1 We highlight that the loss $\gL_{\rm R}$ implicitly trains the one-step \nameshort map $f_{s,0}^\theta$ through intermediate time regression. Although the density of multi-step samples is inaccessible, \nfmr natively supports multi-step sampling. Crucially, inference can be performed using the one-step map $f^{\theta}_{1 \to 0}$, which enjoys exact sample likelihoods as it corresponds to inverting the NF backbone $\delta^{\theta}_{1 \to 0}(x_1) = \Phi^{\theta}_{1,0}(x_1)$. Additionally, as proven in \S\ref{app:imf_deriv_and_eulerian_connection}, $\mathcal{L}_{\mathrm R}$ directly controls the $2$-Wasserstein distance between $p_0$ and $p^{\theta}_{1\to 0}$, providing endpoint control for pointwise distillation.

\xhdr{\nameshort Distribution}
\label{sec:nfmd}
\looseness=-1
\nfmd exploits distributional matching across interpolation times to learn the marginal probability path. We first construct a single source time objective, and then provide its extension to intermediate source times.
Specifically, we fix $s=1$ and consider the restricted time pair set
$\gT_{\mathrm S}:=\{(s,t)\in\gT\mid s=1\}\subset\gT$. We seek to minimize the time-averaged marginal KL discrepancy $\E_{\gT_{\mathrm S}} \left[\KL\left(p_t \,\middle\|\, p^\theta_{1\to t} \right)\right]$. We exploit conditioning to circumvent the generally intractable marginal $p_t$, and introduce the conditional single-time objective, akin to SoftFlow~\citep{kim2020softflow},
\begin{equation}
\label{eq:nfmd_single_conditional}
\mathcal{L}_{\mathrm{D}}
\!\coloneqq\!
\mathbb{E}_{t,x_0}
\KL\left(
p_t(\cdot\mid x_0)
\middle\|
p^\theta_{1\to t}
\right)
\!\overset{\mathrm c}{=}\!
\mathbb{E}_{t,x_t}\!
\left[
\frac{1}{2}\left\|(f^\theta_{1,t})^{-1}(x_t)\right\|^2
\!-\!\log\left|
\det
\frac{\partial (f^\theta_{1,t})^{-1}(x_t)}
     {\partial x_t}
\right|
\right],
\end{equation}
\looseness=-1
where $\overset{\mathrm c}{=}$ denotes equality up to an additive constant independent of $\theta$. As stated in the following proposition, conditional supervision upper-bounds the marginal objective while preserving its minimizers, allowing us to learn the marginal pushforward without evaluating $p_t$.
\begin{mdframed}[style=MyFrame2]
\begin{restatable}[Conditional \nfmd controls marginal \nfmd]
{proposition}{conditionalnfmd}
\label{conditional_bound}
\looseness=-1
Assume that
$\mathbb{E}_t[\mathcal{I}(x_0;x_t)]<\infty$, where $\mathcal{I}$ denotes mutual information. Then, for every $\theta$,
\[
\mathcal{L}_{\mathrm{D}}
=
\E_{\gT_{\mathrm S}} \left[\KL\left(p_t \,\middle\|\, p^\theta_{1\to t} \right)\right]
+
\mathbb{E}_{\gT_{\mathrm S}}\left[\mathcal{I}(x_0;x_t)\right]
\geq
\E_{\gT_{\mathrm S}} \left[\KL\left(p_t \,\middle\|\, p^\theta_{1\to t} \right)\right].
\]
Consequently, the conditional and marginal objectives share the same minimizers,
\[
\operatorname*{arg\,min}_\theta
\mathcal{L}_{\mathrm{D}}(\theta)
=
\operatorname*{arg\,min}_\theta
\E_{\gT_{\mathrm S}} \left[\KL\left(p_t \,\middle\|\, p^\theta_{1\to t} \right)\right].
\]
\end{restatable}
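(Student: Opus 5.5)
The argument is the standard variational decomposition of the expected conditional KL. I would first carry it out for a fixed $t$ and then average over $t$ under the distribution on $\gT_{\mathrm S}$.

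Fix $t$ and write $q:=p^\theta_{1\to t}$. The marginal is $p_t(x_t)=\int p_t(x_t\mid x_0)\,p_0(x_0)\,dx_0$. For each $x_0$, insert $p_t(x_t)$ into the log-ratio:
\[
\log\frac{p_t(x_t\mid x_0)}{q(x_t)}=\log\frac{p_t(x_t\mid x_0)}{p_t(x_t)}+\log\frac{p_t(x_t)}{q(x_t)}.
\]
Take expectations over $x_0\sim p_0$ and $x_t\sim p_t(\cdot\mid x_0)$, i.e. over the joint law of $(x_0,x_t)$.

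\begin{itemize}
\item The first term integrates to $\E_{x_0}\KL(p_t(\cdot\mid x_0)\,\|\,p_t)=\mathcal{I}(x_0;x_t)$.
\item In the second term $x_0$ enters only through the joint law. The $x_t$-marginal of that law is $p_t$, so the term integrates to $\KL(p_t\,\|\,q)$.
\end{itemize}

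This gives, for each $t$,
\[
\E_{x_0}\KL\bigl(p_t(\cdot\mid x_0)\,\|\,p^\theta_{1\to t}\bigr)=\KL\bigl(p_t\,\|\,p^\theta_{1\to t}\bigr)+\mathcal{I}(x_0;x_t).
\]
Averaging over $t$ yields the claimed identity for $\mathcal{L}_{\mathrm D}$. The inequality follows from $\mathcal{I}\ge 0$.

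The main technical point is splitting the logarithm of a product into a sum of two expectations without an $\infty-\infty$ issue. This is exactly where the hypothesis $\E_t[\mathcal{I}(x_0;x_t)]<\infty$ is used. Both summands are nonnegative as (expected) KL divergences, so the expectation of the sum is the sum of the expectations in $[0,\infty]$; I would justify this via Tonelli, or the chain rule for KL. Finiteness of the mutual-information term then ensures:

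\begin{itemize}
\item the identity holds even when $\KL(p_t\|p^\theta_{1\to t})=\infty$ (both sides are then infinite);
\item subtracting the mutual information is legitimate.
\end{itemize}

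I would also note that $p_t(\cdot\mid x_0)\ll p_t$ for $p_0$-a.e. $x_0$. This holds automatically, since the Gaussian interpolant gives a conditional density that is positive everywhere for $t>0$. The case $t=0$, where the mutual information may be infinite, is excluded from $\gT_{\mathrm S}$ or handled by the hypothesis.

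For the minimizers: $\E_{\gT_{\mathrm S}}[\mathcal{I}(x_0;x_t)]$ depends only on the fixed forward process, not on $\theta$, and it is finite. Hence $\mathcal{L}_{\mathrm D}(\theta)$ and the marginal objective differ by a finite constant, and so their argmin sets coincide.

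Finally, the equality $\overset{\mathrm c}{=}$ in \cref{eq:nfmd_single_conditional} is a separate consistency check. Expand $-\E\log p^\theta_{1\to t}(x_t)$ using the inverse change of variables from \cref{eq:nfm_change_of_var} with $p_1=\gN(0,I)$. The $\theta$-independent conditional entropy and the Gaussian normalizer are absorbed into the constant. This step is routine, and I would include it only as a remark.
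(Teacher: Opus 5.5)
Your proposal is correct and follows the paper's proof essentially step for step: insert $p_t(x_t)$ into the log-ratio, identify the two pieces as $\mathcal{I}(x_0;x_t)$ and $\KL(p_t\,\|\,p^\theta_{1\to t})$, average over $t$, and use that the mutual-information term is a finite, $\theta$-independent constant. Your extra measure-theoretic care goes beyond the paper, with one caveat. Tonelli does not apply directly, because the two log-ratio integrands are not pointwise nonnegative. The chain rule for KL does work, and it in fact gives the identity in $[0,\infty]$ without the finiteness hypothesis, which is then needed only for the argmin conclusion.
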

\end{mdframed}
\looseness=-1
The gap between the conditional and marginal objectives is the mutual information, which depends only on the prescribed corruption process. Separately, empirical conditional \nfmd is bounded below (\S\ref{app:theory}), avoiding the unbounded likelihood degeneracy that can arise in empirical MLE. Finally, we also relate accuracy along the corruption path to the quality of the generated distribution.
\newpage
\begin{mdframed}[style=MyFrame2]
\begin{restatable}[Single-time \nfmd endpoint control]
{proposition}{nfmdendpointcontrol}
\label{prop3}
\looseness=-1
Assume $t\mapsto \KL\left(p_t \,\middle\|\, p^\theta_{1\to t} \right)$ is locally Lipschitz at $t=0$, with constant $C$ for $t\in[0,t_0]$.
Suppose the time-sampling density satisfies
$\nu(t)\geq\nu_{\min}>0$ on $[0,t_0]$. If
$\E_{\gT_{\mathrm S}}\left[\KL\left(p_t \,\middle\|\, p^\theta_{1\to t} \right)\right]\leq C\nu_{\min}t_0^2/2$, then
\[
\KL\left(
p_0
\,\middle\|\,
p^\theta_{1\to0}
\right)
\leq
\sqrt{
\frac{2C}{\nu_{\min}}
\E_{\gT_{\mathrm S}}\left[\KL\left(p_t \,\middle\|\, p^\theta_{1\to t} \right)\right]
}.
\]
\end{restatable}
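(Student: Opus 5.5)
Write $g(t):=\KL\left(p_t \,\middle\|\, p^\theta_{1\to t}\right)\geq 0$, $\varepsilon:=g(0)$, and $\bar g:=\E_{\gT_{\mathrm S}}[g(t)]=\int_0^1 g(t)\nu(t)\,dt$. The plan is a standard ``a Lipschitz nonnegative function with large value at a point has large integral'' argument. By the local Lipschitz assumption, $g(t)\geq \varepsilon - Ct$ for all $t\in[0,t_0]$. Since $g\geq 0$ on all of $[0,1]$, we may drop the contribution of $(t_0,1]$ and use $\nu\geq\nu_{\min}$ on $[0,t_0]$ to get
\[
\bar g \;\geq\; \nu_{\min}\int_0^{t_0} \max(\varepsilon - Ct,0)\,dt .
\]

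Next we split into two cases according to whether the triangle $t\mapsto \max(\varepsilon-Ct,0)$ fits inside $[0,t_0]$, i.e.\ whether $\varepsilon/C\leq t_0$. In the first case the integral equals $\varepsilon^2/(2C)$, so $\bar g\geq \nu_{\min}\varepsilon^2/(2C)$, which rearranges to exactly $\varepsilon\leq\sqrt{2C\bar g/\nu_{\min}}$. In the second case, $\varepsilon> Ct_0$, we have $\varepsilon - Ct\geq \varepsilon - Ct_0>0$ on $[0,t_0]$. Hence
\[
\bar g\geq \nu_{\min}\left(\varepsilon t_0 - C t_0^2/2\right) > \nu_{\min}\left(C t_0^2 - Ct_0^2/2\right)= C\nu_{\min}t_0^2/2,
\]
which contradicts the hypothesis $\bar g\leq C\nu_{\min}t_0^2/2$. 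So the second case cannot occur, and only the first case survives, giving the claimed bound. If the hypothesis holds with equality, the boundary value $\varepsilon = Ct_0$ falls in the first case, so no issue arises there.

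The main obstacle is interpretive rather than technical. The ``locally Lipschitz at $t=0$'' assumption is best read as $|g(t)-g(0)|\leq Ct$ on $[0,t_0]$, which is the only form used. We also need $g$ to be measurable and nonnegative on $[0,1]$ so that restricting the integral to $[0,t_0]$ is valid, and nonnegativity is automatic for a KL divergence. If $C=0$, then $g$ is constant equal to $\varepsilon$ on $[0,t_0]$ and the hypothesis forces $\bar g\leq 0$. This gives $\varepsilon\,\nu_{\min}t_0\leq 0$, so $\varepsilon=0$ and the bound holds trivially. I would handle this degenerate case separately at the start.
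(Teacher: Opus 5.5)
Your argument is correct and is essentially the paper's proof. The paper also bounds $\bar g\ge\nu_{\min}\bigl(h\varepsilon-Ch^2/2\bigr)$ for $h\in(0,t_0]$, then takes $h^\star=\sqrt{2\bar g/(C\nu_{\min})}$ and uses the hypothesis $\bar g\le C\nu_{\min}t_0^2/2$ to guarantee $h^\star\le t_0$; you instead cut at $h=\varepsilon/C$ and use the same hypothesis to rule out $\varepsilon>Ct_0$ by contradiction, which is the same argument with the bookkeeping reordered, and your separate treatment of $C=0$ is a small improvement, since the paper's formula for $h^\star$ divides by $C$.
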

\end{mdframed}
\looseness=-1
Thus, minimizing the single-time \nfmd objective provides direct control of the generated endpoint distribution, provided the marginal discrepancy is locally regular. We provide the proof in \S\ref{app:theory}.

\looseness=-1
\xhdr{Two-time extension}
The single-time objective learns transport from the tractable reference $p_1$ to each marginal $p_t$. Extending likelihood-based training to intermediate source times requires evaluating $p_s$, which is generally
intractable. We therefore use the single-time model as a tractable surrogate source density,
$\bar p_s:=p^{\bar\theta}_{1\to s}$, with frozen teacher parameters $\bar\theta$. The resulting surrogate pushforward is
$p_{s\to t}^{\theta,\bar\theta}
:=(f_{s,t}^{\theta})_\#\bar p_s$.
Its density can be evaluated exactly by mapping $x_t$ back to the intermediate source and then to the Gaussian reference, yielding the objective (see \S\ref{app:theory}):
\begin{equation}
\label{eq:nfmd_two_time_loss}
\mathcal{L}_{\rm D,2}
=
\mathbb{E}_{s,t,x_t}
\left[
\frac{1}{2}
\left\|\Phi(x_t)\right\|^2
-
\left.
\log\left|
\det
\frac{\partial (f_{1,s}^{\bar\theta})^{-1}({x}_s)}
     {\partial {x}_s}
\right|_{x_s=\Tilde{x}_s}
\right|
-
\log\left|
\det
\frac{\partial (f_{s,t}^{\theta})^{-1}(x_t)}
     {\partial x_t}
\right|
\right],
\end{equation}
\looseness=-1 where
$\tilde{x}_s=(f_{s,t}^{\theta})^{-1}(x_t)$ and
$\Phi:=(f_{1,s}^{\bar\theta})^{-1}\circ(f_{s,t}^{\theta})^{-1}$.
When $\bar p_s=p_s$, this objective equals
$\mathbb{E}_{s,t}\KL(p_t\|p^\theta_{s\to t})$,
up to an additive constant independent of $\theta$.
Otherwise, it matches $p_t$ using the teacher's approximate source
density. Although the teacher parameters are frozen, gradients
through its input are retained, allowing both the reference-density
and Jacobian terms to train $f^\theta_{s,t}$.
In practice, the single- and two-time maps share the same
architecture, and we jointly optimize
$\mathcal{L}_{\mathrm D}+\lambda\mathcal{L}_{\mathrm{D},2}$.
While one-step sampling already achieves strong sample quality
(\S~\ref{sec:image_experiments}), few-step sampling allows the strength
and timing of classifier-free guidance to be adjusted throughout
generation, as in diffusion models (\S~\ref{app:image_generation}).
Access to tractable intermediate densities also opens the door to
likelihood-based inference-time steering methods, such as adaptations of
twisted sequential Monte Carlo.

\subsection{Post-training \namelong}
\label{subsec:posttraining}

\looseness=-1
A key advantage of \nameshort is tractable access to the endpoint density $p_{1\to 0}^\theta(x_0)$. This makes distribution-level post-training objectives directly available:
\begin{equation}
\label{eq:general_rl}
\mathcal{J}(\theta)
= \mathbb{E}_{x_0\sim p_{1\to0}^\theta}[R(x_0)]
+ \beta\,\operatorname{Reg}(p_{1\to 0}^\theta),\, \beta>0.
\end{equation}
\looseness=-1
Here $R:\mathbb{R}^d\to\mathbb{R}$ is a scalar reward function , and $\operatorname{Reg}(p_{1\to 0}^\theta)$ is either the entropy $\mathcal{H}(p_{1\to 0}^\theta)$, as in our maximum-entropy experiments, or $-\KL(p_{1\to 0}^\theta\|p_{\rm ref})$ for a fixed reference density. 

For diffusion and flow matching models, endpoint likelihoods are expensive to evaluate, making~\Cref{eq:general_rl} challenging to optimize directly. Post-training must therefore leverage alternative learning signals, e.g., transition-level policy updates \citep{black2024training,liu2025flow}, adjoint or value-gradient methods~\citep{domingo2025adjoint,liu2025value}, forward process matching~\citep{zheng2026diffusionnft}, and ELBO-based proxies \citep{wallace2024diffusion,mcallister2026flow}.  \nameshorts avoid the complications introduced by these methods as endpoint log densities give direct access to score-function gradients~\citep{williams_simple_1992}, which can be used with non-differentiable rewards, and they provide the density ratios employed by prominent large language model post-training methods like DPO~\citep{rafailov2023direct} and GRPO~\citep{shao2024deepseekmathpushinglimitsmathematical}.

\section{Experiments}
\label{sec:experiments}

\looseness=-1
We empirically validate \nameshort on image generation and RL post-training, as well as on a variety of molecular conformation sampling tasks.
We use the denoiser parametrization (B) for \nfmr, the dual-time parametrization (A) for one-step \nfmd and gated dual-time parametrization (C) for few-step \nfmd (\S~\ref{sec:model_definition}). Both methods use iTARFlow~\citep{lu2026bidirectional} as the base invertible
architecture $\Phi_{s,t}^{\theta}$ (see \S~\ref{app:architecture}),
which also serves as our principal baseline when trained with MLE. Additional ablations and experimental details are reported in \S~\ref{app:results}.

\subsection{Image generation and Post-training}
\label{sec:image_experiments}
\vspace{-5pt}

\begin{figure}[h]
    \centering
    \begin{subfigure}[t]{0.24\textwidth}
        \headlinepanel{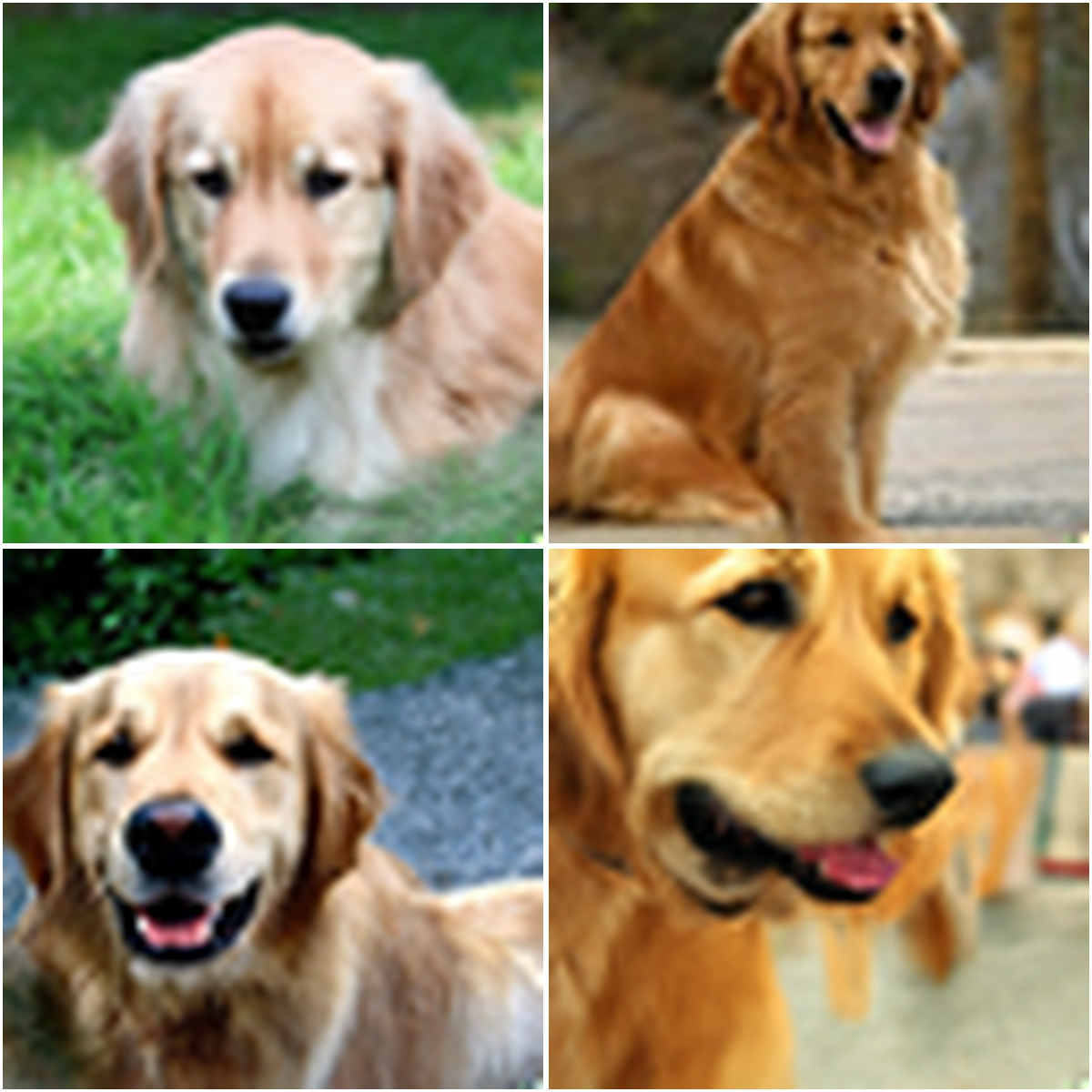}{(a) NFM-D}
        \phantomsubcaption\label{fig:nfmd_retriever}
    \end{subfigure}\hfill
    \begin{subfigure}[t]{0.24\textwidth}
        \headlinepanel{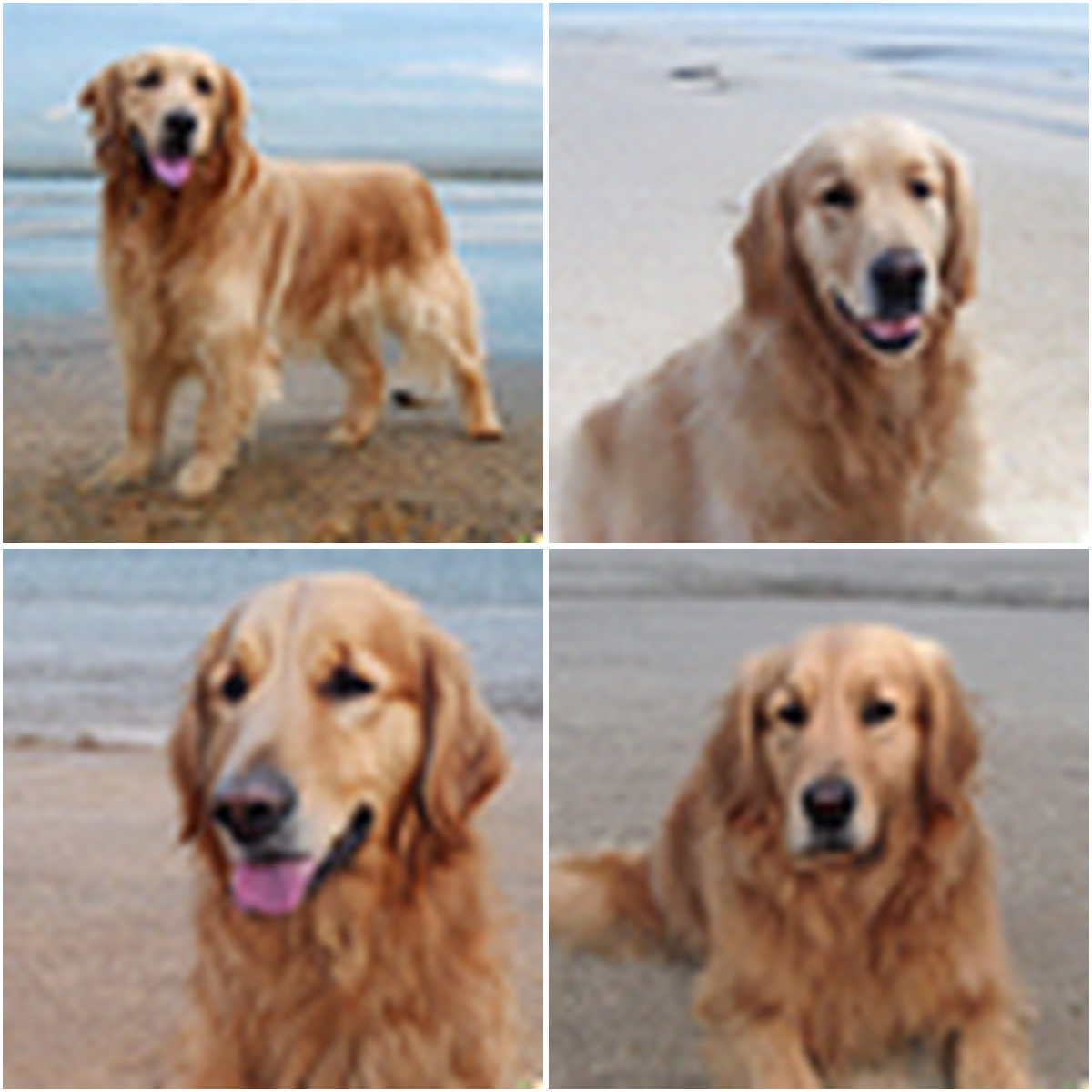}{(b) NFM-D + RL}
        \phantomsubcaption\label{fig:nfmd_retriever_rl}
    \end{subfigure}\hfill
    \begin{subfigure}[t]{0.24\textwidth}
        \headlinepanel{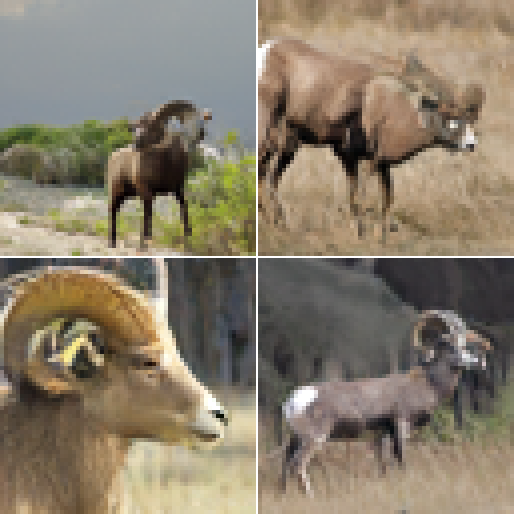}{(c) NFM-R}
        \phantomsubcaption\label{fig:nfmr_sheep}
    \end{subfigure}\hfill
    \begin{subfigure}[t]{0.24\textwidth}
        \headlinepanel{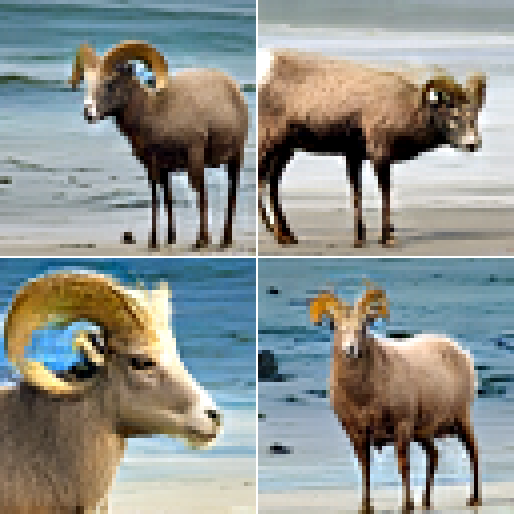}{(d) NFM-R + RL}
        \phantomsubcaption\label{fig:nfmr_sheep_rl}
    \end{subfigure}
    \vspace{-15pt}
    \caption{\small \looseness=-1 \nameshort post-training.
    (a) NFM-D, \emph{golden retriever}; (b) NFM-D post-trained, \emph{a single golden retriever on a sandy beach};
    (c) NFM-R, \emph{bighorn sheep}; (d) NFM-R post-trained, \emph{bighorn sheep standing on a sandy beach beside the sea}.
    }
    \label{fig:before_and_after_rl}
    \vspace{-5pt}
\end{figure}

\looseness=-1
\textbf{Image generation.}
We validate \nfmr and \nfmd on conditional image generation on \cifar~\citep{krizhevsky2009learning} and \imnet~\citep{deng2009imagenet} both in terms of Fr\'echet Inception Distance (FID)~\citep{heusel2017gans} and Feature Likelihood Divergence (FLD)~\citep{jiralerspong2023feature} (Tables~\ref{tab:cifar_10_results}~\&~\ref{tab:imagenet64_results}). We find that \nfmd and \nfmr improve both metrics over MLE trained iTARFlow, and \nfmd outperforms SoftFlow on \cifar. Moreover, \nfmr achieves the best overall performance. Maintaining invertibility with \nfmr requires noise annealing during training and \texttt{FP64} inference.
Additionally, multi-step sampling at inference time improves sample quality. For \nfmr, this comes at the cost of invertibility, while for \nfmd invertibility is preserved across steps. Finally, training iTARFlow on clean data preserves invertibility but substantially worsens sample quality. We report additional experiments and ablations in \S\ref{app:results}.

\begin{table}[h]
\centering
\caption{Class-conditional generation results. Best results are in \textbf{bold}, and second-best are \underline{underlined}.}
\vspace{-5pt}
\begin{subtable}[t]{0.4565\textwidth}
    \centering
    \small
    \setlength{\tabcolsep}{2.5pt}
    \renewcommand{\arraystretch}{1.05}

    \caption{\small \cifar results.}
    \label{tab:cifar_10_results}
    \vspace{-5pt}
    \resizebox{\linewidth}{!}{%
    \begin{tabular}{lcc}
        \toprule
        \textbf{Model}
        & \textbf{FID} $\downarrow$
        & \textbf{FLD} $\downarrow$ \\
        \midrule

        \rowcolor{gray!15}
        \textit{\textbf{Non-Invertible NFs}} & & \\

        iTARFlow
        & 6.55 & 8.38 \\

        \textbf{\nfmr (Ours)}
        & \textbf{3.87} & \underline{7.00} \\

        \textbf{\nfmr (Ours)} \emph{noise anneal, 2 steps}
        & \underline{4.63} & \textbf{6.89} \\

        \rowcolor{gray!15}
        \textit{\textbf{Invertible NFs}} & & \\

        iTARFlow \emph{no noise}
        & 40.83 & 23.21 \\

        SoftFlow
        & 6.76 & 7.84 \\

        \textbf{\nfmd (Ours)}
        & 6.26 & \underline{7.70} \\

        \textbf{\nfmd (Ours)} \emph{2 steps}
        & \underline{6.05} &  7.93 \\

        \textbf{\nfmr (Ours)} \emph{noise anneal}
        & \textbf{5.17} & \textbf{7.62} \\

        \bottomrule
    \end{tabular}%
    }
\end{subtable}%
\hfill
\begin{subtable}[t]{0.5085\textwidth}
\centering
\small
\setlength{\tabcolsep}{3.5pt}
\renewcommand{\arraystretch}{1.05}

\caption{\small \looseness=-1 \imnet results.}
\vspace{-5pt}
\label{tab:imagenet64_results}

\resizebox{\linewidth}{!}{%
\begin{tabular}{lccc}
    \toprule
    \textbf{Model}
    & \textbf{Param.}
    & \textbf{FID} $\downarrow$
    & \textbf{FLD} $\downarrow$ \\
    \midrule

    \rowcolor{gray!15}
    \textit{\textbf{Diffusion \& Few Step Models}} & & & \\

    \textcolor{refgray}{iDDPM~\citep{pmlr-v139-nichol21a}}
    & \textcolor{refgray}{270M}
    & \textcolor{refgray}{2.92}
    & \textcolor{refgray}{--} \\

    \textcolor{refgray}{EDM~\citep{karras2022elucidating}}
    & \textcolor{refgray}{296M}
    & \textcolor{refgray}{1.36}
    & \textcolor{refgray}{--} \\

    \textcolor{refgray}{CD~\citep{song2023consistency}}
    & \textcolor{refgray}{296M}
    & \textcolor{refgray}{4.70}
    & \textcolor{refgray}{--} \\

    \textcolor{refgray}{Iterative TARFlow~\citep{chen_iterative_2026}}
    & \textcolor{refgray}{350M}
    & \textcolor{refgray}{2.05}
    & \textcolor{refgray}{--} \\

    \rowcolor{gray!15}
    \textit{\textbf{Non-Invertible NFs}} & & & \\

    \textcolor{refgray}{TARFlow\footnotemark~\citep{zhai2025normalizing}}
    & \textcolor{refgray}{466M}
    & \textcolor{refgray}{2.90}
    & \textcolor{refgray}{--} \\

    iTARFlow
    & 122M
    & \underline{4.89}
    & \underline{6.98} \\

    \textbf{\nfmr (Ours)}
    & 127M
    & \textbf{3.60}
    & \textbf{3.15} \\

    \rowcolor{gray!15}
    \textit{\textbf{Invertible NFs}} & & & \\

    \textbf{\nfmd (Ours)}
    & 123M
    & \textbf{4.59}
    & \textbf{6.78} \\

    \bottomrule
\end{tabular}%
}

\end{subtable}
\end{table}

\footnotetext{\looseness=-1 We were unable to reproduce these results and could not find publicly available official checkpoints or successful independent reproductions. We leave this current number for transparency reasons.}

\begin{wrapfigure}{r}{0.46\columnwidth}
  \centering
  \vspace{-5pt}
  \includegraphics[width=\linewidth]{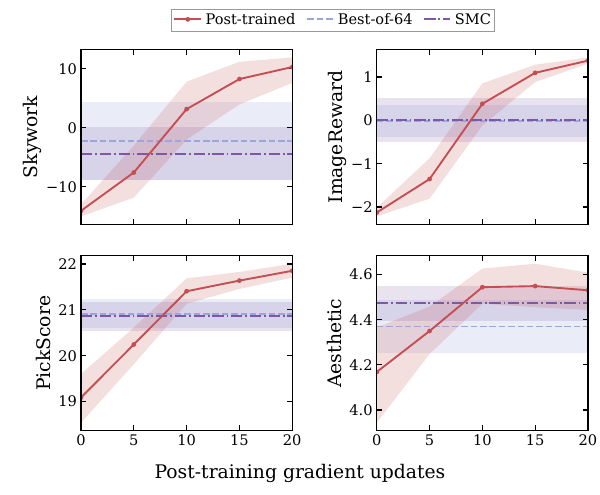}
  \vspace{-20pt}
  \caption{\small \looseness=-1 Post-trained \nfmd vs. best-of-$64$ and SMC baselines.}
  \vspace{-5pt}
  \label{fig:nfmd_b2_dog_metrics}
\end{wrapfigure}
\looseness=-1 \noindent\xhdr{RL Post-training}
\nameshort models provide access to exact sample likelihoods, enabling RL-based post-training. Here we post-train  \emph{purely class-conditional} \nameshort models to align their samples with a text-caption supplied to a Visual Language Model (VLM) via on-policy RL. Specifically, we optimize~\cref{eq:general_rl}; \nfmr uses maximum entropy regularization~\citep{jaynes1957,pmlr-v80-haarnoja18b} and pathwise gradients, while we train \nfmd with KL-regularized REINFORCE~\citep{williams_simple_1992}. We use a suite of rewards for training and evaluation: ImageReward~\citep{imagereward}, PickScore~\citep{pickscore}, HPSV2.1~\citep{wu2023human}, and Skywork-VL~\citep{wang2025skyworkvlrewardeffectivereward} (See \S~\ref{app:rl_finetuning} for more details). \Cref{fig:before_and_after_rl} clearly demonstrates the effect of RL fine-tuning on the models' output, and in~\Cref{fig:nfmd_b2_dog_metrics} we observe clear gains on multiple rewards against best-of-$N$ and sequential Monte Carlo (SMC) baselines over the course of training. We report additional experiments, including on-policy RL post-training on the \cifar \nameshort models with generated samples in \S\ref{app:results}.

\subsection{Boltzmann Generators}
\label{sec:exp_bgs}
\looseness=-1
We now apply \nameshorts to equilibrium sampling with Boltzmann Generators~\citep{noe2019boltzmann}~(BGs), which are designed to sample molecular states at thermodynamic equilibrium. BGs aim to sample from the \emph{Boltzmann distribution}: $\mu_{\text{target}}(x) \propto \exp\left(-\mathcal{E}(x)/k_{\text{B}}T\right)$, where $\mathcal{E}(x)$ is the potential energy of a conformation $x \in \mathbb{R}^{n \times 3}$ of a molecular system with $n$ atoms, and $k_{\text{B}}T$ denotes the thermal energy at temperature $T$. Observables $\phi(x)$ under $\mu_{\text{target}}$ are commonly 
estimated using self-normalized importance sampling (IS)~\citep{owen2013monte} and sequential Monte Carlo (SMC)~\citep{tan2025scalable}. We refer to \S\ref{app:BGs} for a detailed discussion on BGs, including the baselines we consider.

\begin{figure}[h]
    \centering
    \vspace{-10pt}
    \includegraphics[width=1\linewidth]{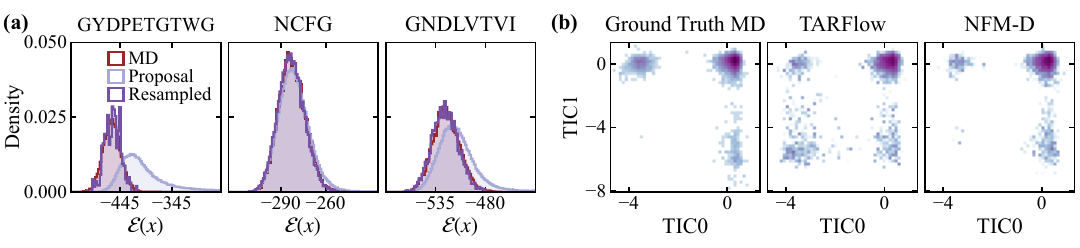}
    \vspace{-20pt}
    \caption{\small \looseness=-1 \textbf{a)} Energy histograms of MD data, the \nfmd proposal, and the resampled distribution, for a single-system model (left) and a transferable model evaluated on a $4$- and $8$-residue peptides (middle, right). \textbf{b)} TICA projections for the ground truth MD, TARFlow, and \nfmd on NHQYGSDP.}
    \label{fig:energy_distributoions}
    \vspace{-5pt}
\end{figure}

\looseness=-1
\xhdr{Metrics}
We evaluate BGs on single-peptide systems using the $2$-Wasserstein distance between energy distributions ($\mathcal{E}$-$\mathcal{W}_2$) and between dihedral-angle distributions ($\mathbb{T}$-$\mathcal{W}_2$). Transferable models are additionally evaluated with TICA-$\mathcal{W}_2$ on the first two TICA components. Following \citet{rehman2026autoregressive}, we omit effective sample size. The MLE counterpart of \nameshort is SBG~\citep{tan2025scalable}. We report \nfmd throughout, as it outperforms \nfmr. Further details in \S\ref{app:BGs_metrics} and \S\ref{app:results_BGs}.

\begin{wraptable}[12]{r}{0.55\linewidth}
    \vspace{-12pt}
    \centering
    \small
    \setlength{\tabcolsep}{2.5pt}
    \caption{\small \looseness=-1 Performance comparison on Chignolin. SBG-iTARFlow refers to our reimplementation of SBG, used to control for architectural, optimizer, and training-recipe changes. Best results are in \textbf{bold}; second-best results are \underline{underlined}.}
    \label{tab:chig_results}
    \vspace{-8pt}
    \begin{tabular}{@{}lcc@{}}
        \toprule
        Algorithm
        & $\mathcal{E}$-$\mathcal{W}_2 \downarrow$
        & $\mathbb{T}$-$\mathcal{W}_2 \downarrow$ \\
        \midrule
        ArBG~\citep{rehman2026autoregressive} & $7.94 \pm 1.05$ & $\underline{2.78 \pm 0.11}$ \\
        \midrule
        SBG SMC~\citep{tan2025scalable} & $10.82 \pm 7.21$ & $3.78 \pm 0.44$ \\
        SBG-iTARFlow SMC & $10.87 \pm 10.18$ & $3.56 \pm 0.92$ \\
        \textbf{\nfmd IS (Ours)} & $\underline{5.12 \pm 0.26}$ & $\mathbf{2.71 \pm 0.05}$ \\
        \textbf{\nfmd SMC (Ours)} & $\mathbf{3.42 \pm 0.27}$ & $2.81 \pm 0.12$ \\
        \bottomrule
    \end{tabular}
\end{wraptable}

\looseness=-1
\vspace{-0.0em}\noindent\textbf{Single-peptide Systems}
We evaluate \nameshort on equilibrium conformation sampling for Chignolin 
(GYDPETGTWG, 10 residues), with results reported in Table~\ref{tab:chig_results}. Figure~\ref{fig:energy_distributoions} shows the corresponding energy distributions. Chignolin is the most challenging single-peptide benchmark due to the presence of the $\beta$-hairpin secondary structure. 
\nfmd substantially outperforms SBG, and achieves performance competitive with the autoregressive ArBG baseline. To our knowledge, no other continuous flow baseline has reported comparable performance on Chignolin.

\begin{table*}[h]
\centering

\small
\setlength{\tabcolsep}{3pt}
\renewcommand{\arraystretch}{1.05}
\caption{
\small \looseness=-1 Transferable models comparison on unseen 4- and 8-residue peptides,
using a budget of $10^4$ energy evaluations and $T=1$.
Best results are in \textbf{bold}; second-best results are \underline{underlined}. 
}
\label{tab:transferable_final}
\vspace{-5pt}
\begin{tabular}{lccc ccc}
\toprule
& \multicolumn{3}{c}{4AA (30 systems)}
& \multicolumn{3}{c}{8AA (30 systems)} \\
\cmidrule(lr){2-4}
\cmidrule(lr){5-7}

Model
& $\mathcal{E}$-$\mathcal{W}_2 \downarrow$
& $\mathbb{T}$-$\mathcal{W}_2 \downarrow$
& TICA-$\mathcal{W}_2 \downarrow$
& $\mathcal{E}$-$\mathcal{W}_2 \downarrow$
& $\mathbb{T}$-$\mathcal{W}_2 \downarrow$
& TICA-$\mathcal{W}_2 \downarrow$ \\
\midrule

\textcolor{refgray}{TimeWarp~\citep{klein2023timewarp}}
& \textcolor{refgray}{$7.237$}
& \textcolor{refgray}{$2.204$}
& \textcolor{refgray}{$0.993$}
& \textcolor{refgray}{---}
& \textcolor{refgray}{---}
& \textcolor{refgray}{---}
\\
\textcolor{refgray}{BioEmu~\citep{lewis2025scalable}}
& \textcolor{refgray}{$90.079$}
& \textcolor{refgray}{$2.037$}
& \textcolor{refgray}{$1.479$}
& \textcolor{refgray}{$193.873$}
& \textcolor{refgray}{$4.638$}
& \textcolor{refgray}{$1.601$}
\\
\textcolor{refgray}{UniSim~\citep{yu2025unisim}}
& \textcolor{refgray}{$>10^4$}
& \textcolor{refgray}{$2.766$}
& \textcolor{refgray}{$1.733$}
& \textcolor{refgray}{$>10^3$}
& \textcolor{refgray}{$6.156$}
& \textcolor{refgray}{$1.495$}
\\
\midrule
ECNF++~\citep{tan2025amortized}
& $10.032$
& $1.121$
& $0.572$
& ---
& ---
& ---
\\
Prose~\citep{tan2025amortized}
& $\underline{0.932}$
& $\underline{0.752}$
& $\mathbf{0.367}$
& $10.038$
& $2.456$
& $0.988$
\\
Robin~\citep{rehman2026autoregressive}
& $1.077$
& $0.900$
& $0.447$
& $\mathbf{6.373}$
& $\underline{2.440}$
& $\underline{0.986}$
\\
\midrule
TARFlow~\citep{tan2025amortized}
& $1.260$
& $0.924$
& $0.492$
& $11.298$
& $2.733$
& $1.087$
\\
\textbf{\nfmd (Ours)}
& $\mathbf{0.907}$
& $\mathbf{0.750}$
& $\underline{0.387}$
& $\underline{7.396}$
& $\mathbf{2.314}$
& $\mathbf{0.890}$\\
\bottomrule
\end{tabular}
\end{table*}

\begin{wraptable}{r}{0.65\textwidth}
\centering
\small
\setlength{\tabcolsep}{2.5pt}
\renewcommand{\arraystretch}{1.05}
\vspace{-10pt}
\caption{
\small \looseness=-1 Performance of transferable models with temperature tuning.
Best results are in \textbf{bold}; second-best results are
\underline{underlined}.
}
\label{tab:temperature}
\vspace{-5pt}
\resizebox{\linewidth}{!}{%
\begin{tabular}{lccc ccc}
\toprule
& \multicolumn{3}{c}{4AA (30 systems)}
& \multicolumn{3}{c}{8AA (30 systems)} \\
\cmidrule(lr){2-4}
\cmidrule(lr){5-7}

Model
& $\mathcal{E}$-$\mathcal{W}_2 \downarrow$
& $\mathbb{T}$-$\mathcal{W}_2 \downarrow$
& TICA-$\mathcal{W}_2 \downarrow$
& $\mathcal{E}$-$\mathcal{W}_2 \downarrow$
& $\mathbb{T}$-$\mathcal{W}_2 \downarrow$
& TICA-$\mathcal{W}_2 \downarrow$ \\
\midrule

Robin 
& $1.168$
& $0.886$
& $0.471$
& $\underline{4.251}$
& $2.325$
& $\underline{0.943}$ \\

Robin SMC
& $\underline{1.079}$
& $\underline{0.874}$
& $\underline{0.463}$
& $4.263$
& $\underline{2.315}$
& $0.977$ \\

\textbf{\nfmd (Ours)}
& $\mathbf{0.865}$
& $\mathbf{0.745}$
& $\mathbf{0.398}$
& $\mathbf{4.148}$
& $\mathbf{2.296}$
& $\mathbf{0.892}$ \\

\bottomrule
\end{tabular}%
}
\vspace{-10pt}
\end{wraptable}
\looseness=-1
\noindent\xhdr{Transferable generation}
We evaluate zero-shot generalization to unseen peptides~\citep{klein2024transferable} using a transferable
\nfmd trained on ManyPeptidesMD~\citep{tan2025amortized}, with results
reported in Table~\ref{tab:transferable_final}. \nfmd consistently improves
over its MLE-trained TARFlow counterpart, demonstrating the benefit of our training objective in the
transferable setting. \nfmd also outperforms Prose and Robin on five of six metrics. Finally \nfmd uses substantially fewer parameters than Prose ($118$M vs.\ $288$M). Figure~\ref{fig:energy_distributoions} shows a representative TICA projection,
illustrating the improved agreement of \nfmd with the ground truth
distribution. Figure~\ref{fig:energy_distributoions} shows additional energy distributions.

\looseness=-1
\noindent\xhdr{Temperature tuning} Table~\ref{tab:temperature} shows that temperature tuning can further improve
transferable performance. For \nfmd, we tune the temperature of the base
distribution, $p_{1,T} \coloneqq \mathcal{N}(0, TI)$, while Robin tunes its
attention temperature. With their respective tuned temperatures, \nfmd
achieves the best performance, including a substantial
improvement in $\mathcal{E}$-$\mathcal{W}_2$ on 8-residue peptides.

\section{Related Work}
\label{sec:related_work}

\looseness=-1
\xhdr{Normalizing flows}
NFs~\citep{tabak_density_2010,tabak_family_2013, rezende_variational_2015,dinh_density_2017} are invertible generative models whose \emph{exact} likelihood can be evaluated via the change-of-variables formula. Despite this appealing property, scaling to tasks like image generation remains challenging due
to numerical instabilities~\citep{xu2023embracing, andrade2024stable} and architectural constraints~\citep{chen2019residual}. Recently, TARFlows~\citep{zhai2025normalizing} have demonstrated impressive capabilities in image~\citep{gu2026starflow,shen_starflow2_2026} and video generation~\citep{gu2025starflowvideo}, as well as in sampling from Boltzmann distributions~\citep{midgley2023flow,tan2025scalable, tan2025amortized}.  While TARFlow is trained via noise augmented MLE~\citep{loaiza2023denoising, zhai2025normalizing}, alternative training strategies include training over a probability path~\citep{kim2020softflow, chen_iterative_2026, gu2026normalizing}, and direct regression-based training~\citep{regflow}.

\looseness=-1
\xhdr{Stochastic interpolants and flow maps}
Diffusion~\citep{sohl-dickstein_deep_2015, ho_denoising_2020, song2021scorebased} and Flow Matching~\citep{lipman2023flow,liu2023flow, albergo2023building,tong_improving_2024} models have achieved state-of-the-art sample fidelity across numerous modalities~\citep{rombach2022high, abramson2024accurate, zeni2025generative}. Generating samples and evaluating likelihoods with these models
typically require costly numerical integration~\citep{salimans2022progressive,chenneuralode, grathwohl2018ffjord}, motivating the development of few-step models~\citep{song2023consistency,song2024improved,frans2025one, geng2025mean, geng2026improved}, and the flow map framework \citep{boffi2025flow,boffi2025how,sabour2025align}. While flow maps accelerate sampling, likelihood evaluation remains costly~\citep{rehman2026falcon}. Alternatively, proxy likelihood evaluation can be amortized using an auxiliary divergence predictor \citep{ai2026joint,scallop_2026}.

\section{Conclusion}
\label{sec:conclusion}
\looseness=-1 In this paper, we introduce \namelong, a family of invertible transport maps that combine few-step sampling with exact change-of-variables densities for one-step endpoint samples. We demonstrate the application of these cheap likelihoods in Boltzmann generators through importance sampling, and straightforward RL-based post-training of image generation models. Furthermore, we demonstrate superior sample quality to existing NF methods and avoid the use of Tweedie denoising which requires spatial gradients and breaks invertibility guarantees. While \nfmr obtains the highest sample quality, it is prone to becoming numerically non-invertible, which impedes RL methods relying on the score-function or KL-regularization. This points to the importance of designing training schemes that enjoy the benefits of both pointwise and distributional consistency.

\subsection*{AI use statement}

In this work, we used generative AI tools to improve the clarity of the author-written text, and improve the correctness and presentation of mathematical claims; identify potentially relevant literature; assist with implementation of the method; and provide critical feedback on our research ideas and methodology. We did not use generative AI tools to draft sections of the paper or generate synthetic datasets.

The authors reviewed all AI-assisted work. We read the literature
suggested by AI tools and used separate LLM reviewers to suggest code
checks and tests. We take responsibility for the final content of this work, including text, claims or artifacts produced with the aid of generative AI.

\subsection*{Ethics statement}

Our method is a general purpose generative modeling approach. In the image
experiments, we post-train class-conditional generators using text--to--image
reward models. These rewards are imperfect proxies for human preferences:
optimizing them may amplify biases in the reward models or favor images
that score well without matching a user's intent. In the molecular
experiments, we evaluate equilibrium conformation sampling on existing
peptide benchmarks using molecular dynamics data. The molecules studied in this work
are considered safe and non-toxic. Applying the method to other molecular systems would require evaluation
specific to those uses.

\cut{
\subsection*{Reproducibility statement}

We provide the following information to support reproduction of our results:
\begin{itemize}[leftmargin=*, nosep]
    \item \textbf{Method and theory:} Section~\ref{sec:normalizing_flow_maps}
    defines the model, training objectives, and theoretical results.
    \item \textbf{Main experiments:} Section~\ref{sec:experiments} describes
    the image generation, post-training, and Boltzmann generator experiments
    and reports the main results.
    \item \textbf{Implementation and additional experiments:} Appendix~\ref{app:c} and ~\ref{app:results} carefully explains all the implementation details and additional ablations to properly implement \nameshort.
    \item \textbf{Code:} We will release the code in a public GitHub repository.
\end{itemize}
}

\subsubsection*{Acknowledgments}
We would like to thank Alec Sargood, Kriztina Sincovics, Georg Maierhofer,  Soran Ghaderi and Roman Poletukhin for helpful discussions about the content of this work. We acknowledge EuroHPC Joint Undertaking for awarding us access to MareNostrum5 at BSC, Spain and for awarding the project ID EHPC-AIF-2026SC03-183 access to Jupiter at JSC, Germany. The authors acknowledge the use of resources provided by the Isambard-AI National AI Research Resource (AIRR)~\citep{mcintoshsmith2024isambardai}. Isambard-AI is operated by the University of Bristol and is funded by the UK Government’s Department for Science, Innovation and Technology (DSIT) via UK Research and Innovation; and the Science and Technology Facilities Council [ST/AIRR/I-A-I/1023]. AITHYRA is supported by the Austrian Academy of Sciences and the not-for-profit Boehringer Ingelheim Stiftung. DR received financial support from the Natural Sciences and Engineering Research Council’s (NSERC) Banting Postdoctoral Fellowship under Funding Reference No. 198506. M.B. is partially supported by the EPSRC Turing AI World-Leading Research Fellowship No. EP/X040062/1 and EPSRC AI Hub No. EP/Y028872/1.

\clearpage
\bibliography{iclr2027_conference}

@inproceedings{meyer2021hutch++,
  title={Hutch++: Optimal stochastic trace estimation},
  author={Meyer, Raphael A and Musco, Cameron and Musco, Christopher and Woodruff, David P},
  booktitle={Symposium on Simplicity in Algorithms (SOSA)},
  pages={142--155},
  year={2021},
  organization={SIAM}
}

@article{boffi2025flow,
title={Flow map matching with stochastic interpolants: A mathematical framework for consistency models},
author={Nicholas Matthew Boffi and Michael Samuel Albergo and Eric Vanden-Eijnden},
journal={Transactions on Machine Learning Research},
issn={2835-8856},
year={2025},
url={https://openreview.net/forum?id=cqDH0e6ak2},
note={}
}

@inproceedings{boffi2025how,
title={How to build a consistency model: Learning flow maps via self-distillation},
author={Nicholas Matthew Boffi and Michael Samuel Albergo and Eric Vanden-Eijnden},
booktitle={The Thirty-ninth Annual Conference on Neural Information Processing Systems},
year={2025},
url={https://openreview.net/forum?id=Di5apl8HSH}
}

@inproceedings{ai2026joint,
  title={Joint distillation for fast likelihood evaluation and sampling in flow-based models},
  author={Ai, Xinyue and He, Yutong Kelly and Gu, Albert and Salakhutdinov, Russ and Kolter, Zico and Boffi, Nicholas and Simchowitz, Max},
  booktitle={International Conference on Learning Representations},
  volume={2026},
  pages={44940--44977},
  year={2026}
}

@inproceedings{
    geng2025mean,
    title={Mean Flows for One-step Generative Modeling},
    author={Zhengyang Geng and Mingyang Deng and Xingjian Bai and J Zico Kolter and Kaiming He},
    booktitle={The Thirty-ninth Annual Conference on Neural Information Processing Systems},
    year={2025},
    url={https://openreview.net/forum?id=uWj4s7rMnR}
}

@inproceedings{
    frans2025one,
    title={One Step Diffusion via Shortcut Models},
    author={Kevin Frans and Danijar Hafner and Sergey Levine and Pieter Abbeel},
    booktitle={The Thirteenth International Conference on Learning Representations},
    year={2025},
    url={https://openreview.net/forum?id=OlzB6LnXcS}
}

@inproceedings{
    rehman2026falcon,
    title={{FALCON}: Few-step Accurate Likelihoods for Continuous Flows},
    author={Danyal Rehman and Tara Akhound-Sadegh and Artem Gazizov and Yoshua Bengio and Alexander Tong},
    booktitle={The Fourteenth International Conference on Learning Representations},
    year={2026},
    url={https://openreview.net/forum?id=FbssShlI4N}
}

@inproceedings{regflow,
    title={Efficient Regression-based Training of Normalizing Flows for Boltzmann Generators},
    author={Danyal Rehman and Oscar Davis and Jiarui Lu and Jian Tang and Michael M. Bronstein and Yoshua Bengio and Alexander Tong and Joey Bose},
    booktitle={The Fourteenth International Conference on Learning Representations},
    year={2026},
    url={https://openreview.net/forum?id=ctdnzPxDI3}
}

@article{chen2019residual,
  title={Residual flows for invertible generative modeling},
  author={Chen, Ricky TQ and Behrmann, Jens and Duvenaud, David K and Jacobsen, J{\"o}rn-Henrik},
  journal={Advances in neural information processing systems},
  volume={32},
  year={2019}
}

@article{xu2023embracing,
  title={Embracing the chaos: analysis and diagnosis of numerical instability in variational flows},
  author={Xu, Zuheng and Campbell, Trevor},
  journal={Advances in Neural Information Processing Systems},
  volume={36},
  pages={32360--32386},
  year={2023}
}

@article{andrade2024stable,
  title={Stable training of normalizing flows for high-dimensional variational inference},
  author={Andrade, Daniel},
  journal={arXiv preprint arXiv:2402.16408},
  year={2024}
}

@InProceedings{tan2025scalable,
  title = 	 {Scalable Equilibrium Sampling with Sequential Boltzmann Generators},
  author =       {Tan, Charlie B. and Bose, Joey and Lin, Chen and Klein, Leon and Bronstein, Michael M. and Tong, Alexander},
  booktitle = 	 {Proceedings of the 42nd International Conference on Machine Learning},
  pages = 	 {58467--58498},
  year = 	 {2025},
  editor = 	 {Singh, Aarti and Fazel, Maryam and Hsu, Daniel and Lacoste-Julien, Simon and Berkenkamp, Felix and Maharaj, Tegan and Wagstaff, Kiri and Zhu, Jerry},
  volume = 	 {267},
  series = 	 {Proceedings of Machine Learning Research},
  month = 	 {13--19 Jul},
  publisher =    {PMLR},
  url = 	 {https://proceedings.mlr.press/v267/tan25a.html}
}

@inproceedings{tan2025amortized,
title={Amortized Sampling with Transferable Normalizing Flows},
author={Charlie B. Tan and Majdi Hassan and Leon Klein and Saifuddin Syed and Dominique Beaini and Michael M. Bronstein and Alexander Tong and Kirill Neklyudov},
booktitle={The Thirty-ninth Annual Conference on Neural Information Processing Systems},
year={2025},
url={https://openreview.net/forum?id=JenfC3ovzU}
}

@article{kim2020softflow,
  title={Softflow: Probabilistic framework for normalizing flow on manifolds},
  author={Kim, Hyeongju and Lee, Hyeonseung and Kang, Woo Hyun and Lee, Joun Yeop and Kim, Nam Soo},
  journal={Advances in Neural Information Processing Systems},
  volume={33},
  pages={16388--16397},
  year={2020}
}

@article{gu2026normalizing,
  title={Normalizing Trajectory Models},
  author={Gu, Jiatao and Chen, Tianrong and Shen, Ying and Berthelot, David and Zhai, Shuangfei and Susskind, Josh},
  journal={arXiv preprint arXiv:2605.08078},
  year={2026}
}

@inproceedings{loaiza2023denoising,
  title={Denoising deep generative models},
  author={Loaiza-Ganem, Gabriel and Ross, Brendan Leigh and Wu, Luhuan and Cunningham, John Patrick and Cresswell, Jesse C and Caterini, Anthony L},
  booktitle={Proceedings on "I Can't Believe It's Not Better!  - Understanding Deep Learning Through Empirical Falsification" at NeurIPS 2022 Workshops},
  pages={41--50},
  year={2023},
  organization={PMLR}
}

@InProceedings{rombach2022high,
    author    = {Rombach, Robin and Blattmann, Andreas and Lorenz, Dominik and Esser, Patrick and Ommer, Bj\"orn},
    title     = {High-Resolution Image Synthesis With Latent Diffusion Models},
    booktitle = {Proceedings of the IEEE/CVF Conference on Computer Vision and Pattern Recognition (CVPR)},
    month     = {June},
    year      = {2022},
    pages     = {10684-10695}
}

@article{abramson2024accurate,
  title={Accurate structure prediction of biomolecular interactions with AlphaFold 3},
  author={Abramson, Josh and Adler, Jonas and Dunger, Jack and Evans, Richard and Green, Tim and Pritzel, Alexander and Ronneberger, Olaf and Willmore, Lindsay and Ballard, Andrew J and Bambrick, Joshua and others},
  journal={Nature},
  volume={630},
  number={8016},
  pages={493--500},
  year={2024},
  publisher={Nature Publishing Group UK London}
}

@article{zeni2025generative,
  title={A generative model for inorganic materials design},
  author={Zeni, Claudio and Pinsler, Robert and Z{\"u}gner, Daniel and Fowler, Andrew and Horton, Matthew and Fu, Xiang and Wang, Zilong and Shysheya, Aliaksandra and Crabb{\'e}, Jonathan and Ueda, Shoko and others},
  journal={Nature},
  volume={639},
  number={8055},
  pages={624--632},
  year={2025},
  publisher={Nature Publishing Group UK London}
}

@inproceedings{geng2026improved,
  title={Improved mean flows: On the challenges of fastforward generative models},
  author={Geng, Zhengyang and Lu, Yiyang and Wu, Zongze and Shechtman, Eli and Kolter, J Zico and He, Kaiming},
  booktitle={Proceedings of the IEEE/CVF Conference on Computer Vision and Pattern Recognition},
  pages={30467--30476},
  year={2026}
}

@inproceedings{
salimans2022progressive,
title={Progressive Distillation for Fast Sampling of Diffusion Models},
author={Tim Salimans and Jonathan Ho},
booktitle={International Conference on Learning Representations},
year={2022},
url={https://openreview.net/forum?id=TIdIXIpzhoI}
}

@article{grathwohl2018ffjord,
  title={Ffjord: Free-form continuous dynamics for scalable reversible generative models},
  author={Grathwohl, Will and Chen, Ricky TQ and Bettencourt, Jesse and Sutskever, Ilya and Duvenaud, David},
  journal={arXiv preprint arXiv:1810.01367},
  year={2018}
}

@inproceedings{huang2018neural,
  title={Neural autoregressive flows},
  author={Huang, Chin-Wei and Krueger, David and Lacoste, Alexandre and Courville, Aaron},
  booktitle={International conference on machine learning},
  pages={2078--2087},
  year={2018},
  organization={PMLR}
}

@article{teshima2020coupling,
  title={Coupling-based invertible neural networks are universal diffeomorphism approximators},
  author={Teshima, Takeshi and Ishikawa, Isao and Tojo, Koichi and Oono, Kenta and Ikeda, Masahiro and Sugiyama, Masashi},
  journal={Advances in Neural Information Processing Systems},
  volume={33},
  pages={3362--3373},
  year={2020}
}

@article{albergo2025stochastic,
  title={Stochastic interpolants: A unifying framework for flows and diffusions},
  author={Albergo, Michael and Boffi, Nicholas M and Vanden-Eijnden, Eric},
  journal={Journal of Machine Learning Research},
  volume={26},
  number={209},
  pages={1--80},
  year={2025}
}

@article{noe2019boltzmann,
  title={Boltzmann generators: Sampling equilibrium states of many-body systems with deep learning},
  author={No{\'e}, Frank and Olsson, Simon and K{\"o}hler, Jonas and Wu, Hao},
  journal={Science},
  volume={365},
  number={6457},
  pages={eaaw1147},
  year={2019},
  publisher={American Association for the Advancement of Science}
}

@misc{owen2013monte,
  title={Monte Carlo theory, methods and examples},
  author={Owen, Art B},
  year={2013},
  publisher={Stanford}
}

@inproceedings{rehman2026autoregressive,
title={Autoregressive Boltzmann Generators},
author={Danyal Rehman and Charlie B. Tan and Yoshua Bengio and Joey Bose and Alexander Tong},
booktitle={Forty-third International Conference on Machine Learning},
year={2026},
url={https://openreview.net/forum?id=75AYDsndHP}
}

@inproceedings{lu2026bidirectional,
  title={Bidirectional normalizing flow: From data to noise and back},
  author={Lu, Yiyang and Sun, Qiao and Wang, Xianbang and Jiang, Zhicheng and Zhao, Hanhong and He, Kaiming},
  booktitle={Proceedings of the IEEE/CVF Conference on Computer Vision and Pattern Recognition},
  pages={2069--2078},
  year={2026}
}

@inproceedings{lu2026onestep,
    title={One-step Latent-free Image Generation with Pixel Mean Flows},
    author={Yiyang Lu and Susie Lu and Qiao Sun and Hanhong Zhao and Zhicheng Jiang and Xianbang Wang and Tianhong Li and Zhengyang Geng and Kaiming He},
    booktitle={Forty-third International Conference on Machine Learning},
    year={2026},
    url={https://openreview.net/forum?id=WUK8JIeetF}
}

@misc{lee2026flowmaplanguagemodels,
      title={Flow Map Language Models: One-step Language Modeling via Continuous Denoising}, 
      author={Chanhyuk Lee and Jaehoon Yoo and Manan Agarwal and Sheel Shah and Jerry Huang and Aditi Raghunathan and Seunghoon Hong and Nicholas M. Boffi and Jinwoo Kim},
      year={2026},
      eprint={2602.16813},
      archivePrefix={arXiv},
      primaryClass={cs.CL},
      url={https://arxiv.org/abs/2602.16813}, 
}

@misc{li2026basicsletdenoisinggenerative,
      title={Back to Basics: Let Denoising Generative Models Denoise}, 
      author={Tianhong Li and Kaiming He},
      year={2026},
      eprint={2511.13720},
      archivePrefix={arXiv},
      primaryClass={cs.CV},
      url={https://arxiv.org/abs/2511.13720}, 
}

@inproceedings{
    sabour2025align,
    title={Align Your Flow: Scaling Continuous-Time Flow Map Distillation},
    author={Amirmojtaba Sabour and Sanja Fidler and Karsten Kreis},
    booktitle={The Thirty-ninth Annual Conference on Neural Information Processing Systems},
    year={2025},
    url={https://openreview.net/forum?id=pzHuesCvcO}
}

@inproceedings{
    karras2022elucidating,
    title={Elucidating the Design Space of Diffusion-Based Generative Models},
    author={Tero Karras and Miika Aittala and Timo Aila and Samuli Laine},
    booktitle={Advances in Neural Information Processing Systems},
    editor={Alice H. Oh and Alekh Agarwal and Danielle Belgrave and Kyunghyun Cho},
    year={2022},
    url={https://openreview.net/forum?id=k7FuTOWMOc7}
}

@InProceedings{pmlr-v139-nichol21a,
  title = 	 {Improved Denoising Diffusion Probabilistic Models},
  author =       {Nichol, Alexander Quinn and Dhariwal, Prafulla},
  booktitle = 	 {Proceedings of the 38th International Conference on Machine Learning},
  pages = 	 {8162--8171},
  year = 	 {2021},
  editor = 	 {Meila, Marina and Zhang, Tong},
  volume = 	 {139},
  series = 	 {Proceedings of Machine Learning Research},
  month = 	 {18--24 Jul},
  publisher =    {PMLR},
  url = 	 {https://proceedings.mlr.press/v139/nichol21a.html}
}

@article{williams_simple_1992,
	title = {Simple statistical gradient-following algorithms for connectionist reinforcement learning},
	volume = {8},
	issn = {1573-0565},
	url = {https://doi.org/10.1007/BF00992696},
	doi = {10.1007/BF00992696},
	number = {3},
	journal = {Machine Learning},
	author = {Williams, Ronald J.},
	month = may,
	year = {1992},
	pages = {229--256},
}

@article{tabak_density_2010,
	title = {Density estimation by dual ascent of the log-likelihood},
	volume = {8},
	issn = {15396746, 19450796},
	url = {https://link.intlpress.com/JDetail/1806265554799382529},
	doi = {10.4310/CMS.2010.v8.n1.a11},
	language = {en},
	number = {1},
	urldate = {2026-08-17},
	journal = {Communications in Mathematical Sciences},
	author = {Tabak, Esteban G. and Vanden-Eijnden, Eric},
	year = {2010},
	pages = {217--233},
}

@article{tabak_family_2013,
	title = {A {Family} of {Nonparametric} {Density} {Estimation} {Algorithms}},
	volume = {66},
	copyright = {Copyright © 2012 Wiley Periodicals, Inc.},
	issn = {1097-0312},
	url = {https://onlinelibrary.wiley.com/doi/abs/10.1002/cpa.21423},
	doi = {10.1002/cpa.21423},
	language = {en},
	number = {2},
	urldate = {2026-08-17},
	journal = {Communications on Pure and Applied Mathematics},
	author = {Tabak, E. G. and Turner, Cristina V.},
	year = {2013},
	pages = {145--164},
}

@article{tong_improving_2024,
title={Improving and generalizing flow-based generative models with minibatch optimal transport},
author={Alexander Tong and Kilian FATRAS and Nikolay Malkin and Guillaume Huguet and Yanlei Zhang and Jarrid Rector-Brooks and Guy Wolf and Yoshua Bengio},
journal={Transactions on Machine Learning Research},
issn={2835-8856},
year={2024},
url={https://openreview.net/forum?id=CD9Snc73AW},
note={Expert Certification}
}

@inproceedings{chen_iterative_2026,
    title={Normalizing Flows with Iterative Denoising},
    author={Tianrong Chen and Jiatao Gu and David Berthelot and Joshua M. Susskind and Shuangfei Zhai},
    booktitle={Forty-third International Conference on Machine Learning},
    year={2026},
    url={https://openreview.net/forum?id=oUVzFuemEn}
}

@misc{shen_starflow2_2026,
	title = {{STARFlow2}: {Bridging} {Language} {Models} and {Normalizing} {Flows} for {Unified} {Multimodal} {Generation}},
	shorttitle = {{STARFlow2}},
	url = {http://arxiv.org/abs/2605.08029},
	doi = {10.48550/arXiv.2605.08029},
	urldate = {2026-08-17},
	publisher = {arXiv},
	author = {Shen, Ying and Chen, Tianrong and Gao, Yuan and Zhang, Yizhe and Wang, Yuyang and Bautista, Miguel {\'A}ngel and Zhai, Shuangfei and Susskind, Joshua M. and Gu, Jiatao},
	month = may,
	year = {2026},
	note = {arXiv:2605.08029 [cs.CV]},
}

@inproceedings{rezende_variational_2015,
	title = {Variational {Inference} with {Normalizing} {Flows}},
	issn = {1938-7228},
	url = {https://proceedings.mlr.press/v37/rezende15.html},
	language = {en},
	urldate = {2026-08-15},
	booktitle = {Proceedings of the 32nd {International} {Conference} on {Machine} {Learning}},
	publisher = {PMLR},
	author = {Rezende, Danilo and Mohamed, Shakir},
	month = jun,
	year = {2015},
	pages = {1530--1538},
}

@inproceedings{dinh_density_2017,
title={Density estimation using Real {NVP}},
author={Laurent Dinh and Jascha Sohl-Dickstein and Samy Bengio},
booktitle={International Conference on Learning Representations},
year={2017},
url={https://openreview.net/forum?id=HkpbnH9lx}
}

@inproceedings{sohl-dickstein_deep_2015,
	title = {Deep {Unsupervised} {Learning} using {Nonequilibrium} {Thermodynamics}},
	issn = {1938-7228},
	url = {https://proceedings.mlr.press/v37/sohl-dickstein15.html},
	language = {en},
	urldate = {2026-08-16},
	booktitle = {Proceedings of the 32nd {International} {Conference} on {Machine} {Learning}},
	publisher = {PMLR},
	author = {Sohl-Dickstein, Jascha and Weiss, Eric and Maheswaranathan, Niru and Ganguli, Surya},
	month = jun,
	year = {2015},
	pages = {2256--2265},
}

@inproceedings{ho_denoising_2020,
	title = {Denoising {Diffusion} {Probabilistic} {Models}},
	volume = {33},
	url = {https://proceedings.neurips.cc/paper/2020/hash/4c5bcfec8584af0d967f1ab10179ca4b-Abstract.html},
	urldate = {2026-08-16},
	booktitle = {Advances in {Neural} {Information} {Processing} {Systems}},
	publisher = {Curran Associates, Inc.},
	author = {Ho, Jonathan and Jain, Ajay and Abbeel, Pieter},
	year = {2020},
	pages = {6840--6851},
}

@inproceedings{
    song2021scorebased,
    title={Score-Based Generative Modeling through Stochastic Differential Equations},
    author={Yang Song and Jascha Sohl-Dickstein and Diederik P Kingma and Abhishek Kumar and Stefano Ermon and Ben Poole},
    booktitle={International Conference on Learning Representations},
    year={2021},
    url={https://openreview.net/forum?id=PxTIG12RRHS}
}

@inproceedings{
    lipman2023flow,
    title={Flow Matching for Generative Modeling},
    author={Yaron Lipman and Ricky T. Q. Chen and Heli Ben-Hamu and Maximilian Nickel and Matthew Le},
    booktitle={The Eleventh International Conference on Learning Representations },
    year={2023},
    url={https://openreview.net/forum?id=PqvMRDCJT9t}
}

@inproceedings{
    liu2023flow,
    title={Flow Straight and Fast: Learning to Generate and Transfer Data with Rectified Flow},
    author={Xingchao Liu and Chengyue Gong and qiang liu},
    booktitle={The Eleventh International Conference on Learning Representations },
    year={2023},
    url={https://openreview.net/forum?id=XVjTT1nw5z}
}

@InProceedings{zhai2025normalizing,
  title = 	 {Normalizing Flows are Capable Generative Models},
  author =       {Zhai, Shuangfei and Zhang, Ruixiang and Nakkiran, Preetum and Berthelot, David and Gu, Jiatao and Zheng, Huangjie and Chen, Tianrong and Bautista, Miguel \'{A}ngel and Jaitly, Navdeep and Susskind, Joshua M.},
  booktitle = 	 {Proceedings of the 42nd International Conference on Machine Learning},
  pages = 	 {74348--74369},
  year = 	 {2025},
  editor = 	 {Singh, Aarti and Fazel, Maryam and Hsu, Daniel and Lacoste-Julien, Simon and Berkenkamp, Felix and Maharaj, Tegan and Wagstaff, Kiri and Zhu, Jerry},
  volume = 	 {267},
  series = 	 {Proceedings of Machine Learning Research},
  month = 	 {13--19 Jul},
  publisher =    {PMLR},
  url = 	 {https://proceedings.mlr.press/v267/zhai25d.html}
}

@misc{scallop_2026,
	title = {Few-{Step} {Boltzmann} {Generators} via {Scalable} {Likelihood} {Flow} {Maps}},
	url = {http://arxiv.org/abs/2606.29110},
	doi = {10.48550/arXiv.2606.29110},
	urldate = {2026-08-18},
	publisher = {arXiv},
	author = {OuYang, RuiKang and Yu, Hanlin and Ai, Xinyue and He, Yutong and Boffi, Nicholas M. and Ravikumar, Pradeep and Hernandez-Lobato, Jose Miguel and Simchowitz, Max and Miller, Benjamin Kurt and Chehab, Omar},
	month = jun,
	year = {2026},
	note = {arXiv:2606.29110 [cs.LG]},
}

@article{gu2026starflow,
  title={STARFlow: Scaling Latent Normalizing Flows for High-resolution Image Synthesis},
  author={Gu, Jiatao and Chen, Tianrong and Berthelot, David and Zheng, Huangjie and Wang, Yuyang and Zhang, Ruixiang and Dinh, Laurent and Bautista, Miguel Angel and Susskind, Joshua and Zhai, Shuangfei},
  journal={Advances in Neural Information Processing Systems},
  volume={38},
  pages={120986--121022},
  year={2026}
}

@InProceedings{gu2025starflowvideo,
    author    = {Gu, Jiatao and Shen, Ying and Chen, Tianrong and Dinh, Laurent and Wang, Yuyang and Bautista, Miguel Angel and Berthelot, David and Susskind, Josh and Zhai, Shuangfei},
    title     = {STARFlow-V: End-to-End Video Generative Modeling with Autoregressive Normalizing Flows},
    booktitle = {Proceedings of the IEEE/CVF Conference on Computer Vision and Pattern Recognition (CVPR)},
    month     = {June},
    year      = {2026},
    pages     = {9084-9094}
}

@inproceedings{
    albergo2023building,
    title={Building Normalizing Flows with Stochastic Interpolants},
    author={Michael Samuel Albergo and Eric Vanden-Eijnden},
    booktitle={The Eleventh International Conference on Learning Representations },
    year={2023},
    url={https://openreview.net/forum?id=li7qeBbCR1t}
}

@article{chenneuralode,
  title={Neural ordinary differential equations},
  author={Chen, Ricky TQ and Rubanova, Yulia and Bettencourt, Jesse and Duvenaud, David K},
  journal={Advances in neural information processing systems},
  volume={31},
  year={2018}
}

@inproceedings{
    song2024improved,
    title={Improved Techniques for Training Consistency Models},
    author={Yang Song and Prafulla Dhariwal},
    booktitle={The Twelfth International Conference on Learning Representations},
    year={2024},
    url={https://openreview.net/forum?id=WNzy9bRDvG}
}

@InProceedings{song2023consistency,
  title = 	 {Consistency Models},
  author =       {Song, Yang and Dhariwal, Prafulla and Chen, Mark and Sutskever, Ilya},
  booktitle = 	 {Proceedings of the 40th International Conference on Machine Learning},
  pages = 	 {32211--32252},
  year = 	 {2023},
  editor = 	 {Krause, Andreas and Brunskill, Emma and Cho, Kyunghyun and Engelhardt, Barbara and Sabato, Sivan and Scarlett, Jonathan},
  volume = 	 {202},
  series = 	 {Proceedings of Machine Learning Research},
  month = 	 {23--29 Jul},
  publisher =    {PMLR},
  url = 	 {https://proceedings.mlr.press/v202/song23a.html}
}

@article{lee2026reward,
  title={Reward Score Matching: Unifying Reward-based Fine-tuning for Flow and Diffusion Models},
  author={Lee, Jeongjae and Chang, Jinho and Kim, Jeongsol and Ye, Jong Chul},
  journal={arXiv preprint arXiv:2604.17415},
  year={2026}
}

@article{schulman2017proximal,
  title={Proximal policy optimization algorithms},
  author={Schulman, John and Wolski, Filip and Dhariwal, Prafulla and Radford, Alec and Klimov, Oleg},
  journal={arXiv preprint arXiv:1707.06347},
  year={2017}
}

@inproceedings{deng2009imagenet,
  title={Imagenet: A large-scale hierarchical image database},
  author={Deng, Jia and Dong, Wei and Socher, Richard and Li, Li-Jia and Li, Kai and Fei-Fei, Li},
  booktitle={2009 IEEE conference on computer vision and pattern recognition},
  pages={248--255},
  year={2009},
  organization={Ieee}
}

@article{krizhevsky2009learning,
  title={Learning multiple layers of features from tiny images},
  author={Krizhevsky, Alex},
  year={2009},
  publisher={Toronto, ON, Canada}
}

@article{heusel2017gans,
  title={Gans trained by a two time-scale update rule converge to a local nash equilibrium},
  author={Heusel, Martin and Ramsauer, Hubert and Unterthiner, Thomas and Nessler, Bernhard and Hochreiter, Sepp},
  journal={Advances in neural information processing systems},
  volume={30},
  year={2017}
}

@article{jiralerspong2023feature,
  title={Feature likelihood divergence: evaluating the generalization of generative models using samples},
  author={Jiralerspong, Marco and Bose, Joey and Gemp, Ian and Qin, Chongli and Bachrach, Yoram and Gidel, Gauthier},
  journal={Advances in Neural Information Processing Systems},
  volume={36},
  pages={33095--33119},
  year={2023}
}

@article{kynkaanniemi2024applying,
  title={Applying guidance in a limited interval improves sample and distribution quality in diffusion models},
  author={Kynk{\"a}{\"a}nniemi, Tuomas and Aittala, Miika and Karras, Tero and Laine, Samuli and Aila, Timo and Lehtinen, Jaakko},
  journal={Advances in Neural Information Processing Systems},
  volume={37},
  pages={122458--122483},
  year={2024}
}

@article{ho2022classifier,
  title={Classifier-free diffusion guidance},
  author={Ho, Jonathan and Salimans, Tim},
  journal={arXiv preprint arXiv:2207.12598},
  year={2022}
}

@article{rafailov2023direct,
  title={Direct preference optimization: Your language model is secretly a reward model},
  author={Rafailov, Rafael and Sharma, Archit and Mitchell, Eric and Manning, Christopher D and Ermon, Stefano and Finn, Chelsea},
  journal={Advances in neural information processing systems},
  volume={36},
  pages={53728--53741},
  year={2023}
}

@article{klein2023timewarp,
  title={Timewarp: Transferable acceleration of molecular dynamics by learning time-coarsened dynamics},
  author={Klein, Leon and Foong, Andrew and Fjelde, Tor and Mlodozeniec, Bruno and Brockschmidt, Marc and Nowozin, Sebastian and No{\'e}, Frank and Tomioka, Ryota},
  journal={Advances in Neural Information Processing Systems},
  volume={36},
  pages={52863--52883},
  year={2023}
}

@article{lewis2025scalable,
  title={Scalable emulation of protein equilibrium ensembles with generative deep learning},
  author={Lewis, Sarah and Hempel, Tim and Jim{\'e}nez-Luna, Jos{\'e} and Gastegger, Michael and Xie, Yu and Foong, Andrew YK and Satorras, Victor Garc{\'\i}a and Abdin, Osama and Veeling, Bastiaan S and Zaporozhets, Iryna and others},
  journal={Science},
  volume={389},
  number={6761},
  pages={eadv9817},
  year={2025},
  publisher={American Association for the Advancement of Science}
}

@InProceedings{yu2025unisim,
  title = 	 {{U}ni{S}im: A Unified Simulator for Time-Coarsened Dynamics of Biomolecules},
  author =       {Yu, Ziyang and Huang, Wenbing and Liu, Yang},
  booktitle = 	 {Proceedings of the 42nd International Conference on Machine Learning},
  pages = 	 {73097--73114},
  year = 	 {2025},
  editor = 	 {Singh, Aarti and Fazel, Maryam and Hsu, Daniel and Lacoste-Julien, Simon and Berkenkamp, Felix and Maharaj, Tegan and Wagstaff, Kiri and Zhu, Jerry},
  volume = 	 {267},
  series = 	 {Proceedings of Machine Learning Research},
  month = 	 {13--19 Jul},
  publisher =    {PMLR},
  url = 	 {https://proceedings.mlr.press/v267/yu25j.html}
}

@article{klein2023equivariant,
  title={Equivariant flow matching},
  author={Klein, Leon and Kr{\"a}mer, Andreas and No{\'e}, Frank},
  journal={Advances in Neural Information Processing Systems},
  volume={36},
  pages={59886--59910},
  year={2023}
}

@article{klein2024transferable,
  title={Transferable boltzmann generators},
  author={Klein, Leon and No{\'e}, Frank},
  journal={Advances in Neural Information Processing Systems},
  volume={37},
  pages={45281--45314},
  year={2024}
}

@article{liu2025flow,
  title={Flow-grpo: Training flow matching models via online rl},
  author={Liu, Jie and Liu, Gongye and Liang, Jiajun and Li, Yangguang and Liu, Jiaheng and Wang, Xintao and Wan, Pengfei and Zhang, Di and Ouyang, Wanli},
  journal={arXiv preprint arXiv:2505.05470},
  year={2025}
}

@inproceedings{mcallister2026flow,
  title={Flow matching policy gradients},
  author={McAllister, David and Ge, Songwei and Yi, Brent and Kim, Chung Min and Weber, Ethan and Choi, Hongsuk and Feng, Haiwen and Kanazawa, Angjoo},
  booktitle={International Conference on Learning Representations},
  volume={2026},
  pages={36352--36372},
  year={2026}
}

@inproceedings{black2024training,
  title={Training diffusion models with reinforcement learning},
  author={Black, Kevin and Janner, Michael and Du, Yilun and Kostrikov, Ilya and Levine, Sergey},
  booktitle={International Conference on Learning Representations},
  volume={2024},
  pages={4965--4987},
  year={2024}
}

@article{alder1959studies,
  title={Studies in molecular dynamics. I. General method},
  author={Alder, Berni J and Wainwright, Thomas Everett},
  journal={The Journal of Chemical Physics},
  volume={31},
  number={2},
  pages={459--466},
  year={1959},
  publisher={American Institute of Physics}
}

@book{liu2001monte,
  title={Monte Carlo strategies in scientific computing},
  author={Liu, Jun S},
  year={2001},
  publisher={Springer}
}

@article{wirnsberger2020targeted,
  title={Targeted free energy estimation via learned mappings},
  author={Wirnsberger, Peter and Ballard, Andrew J and Papamakarios, George and Abercrombie, Stuart and Racani{\`e}re, S{\'e}bastien and Pritzel, Alexander and Jimenez Rezende, Danilo and Blundell, Charles},
  journal={The Journal of Chemical Physics},
  volume={153},
  number={14},
  year={2020},
  publisher={AIP Publishing}
}

@article{weinan2002string,
  title={String method for the study of rare events},
  author={E, Weinan and Ren, Weiqing and Vanden-Eijnden, Eric},
  journal={Physical Review B},
  volume={66},
  number={5},
  pages={052301},
  year={2002},
  publisher={APS}
}

@book{frenkel1997understanding,
author = {Daan Frenkel and Berend Smit},
title = {Understanding Molecular Simulation: From Algorithms to Applications},
year = {1996},
isbn = {0122673700},
publisher = {Academic Press, Inc.},
address = {USA},
edition = {1st}
}

@article{zuckerman2011equilibrium,
  title={Equilibrium sampling in biomolecular simulations},
  author={Zuckerman, Daniel M},
  journal={Annual review of biophysics},
  volume={40},
  pages={41--62},
  year={2011},
  publisher={Annual Reviews}
}

@book{pathria2017statistical,
  title={Statistical Mechanics: International Series of Monographs in Natural Philosophy},
  author={Pathria, Raj Kumar},
  volume={45},
  year={2017},
  publisher={Elsevier}
}

@book{tuckerman2023statistical,
  title={Statistical mechanics: theory and molecular simulation},
  author={Tuckerman, Mark E},
  year={2023},
  publisher={Oxford university press}
}

@article{watson2023novo,
  title={De novo design of protein structure and function with RFdiffusion},
  author={Watson, Joseph L and Juergens, David and Bennett, Nathaniel R and Trippe, Brian L and Yim, Jason and Eisenach, Helen E and Ahern, Woody and Borst, Andrew J and Ragotte, Robert J and Milles, Lukas F and others},
  journal={Nature},
  volume={620},
  number={7976},
  pages={1089--1100},
  year={2023},
  publisher={Nature Publishing Group UK London}
}

@article{ho2022video,
  title={Video diffusion models},
  author={Ho, Jonathan and Salimans, Tim and Gritsenko, Alexey and Chan, William and Norouzi, Mohammad and Fleet, David J},
  journal={Advances in neural information processing systems},
  volume={35},
  pages={8633--8646},
  year={2022}
}

@InProceedings{liu2023audioldm,
  title = 	 {{A}udio{LDM}: Text-to-Audio Generation with Latent Diffusion Models},
  author =       {Liu, Haohe and Chen, Zehua and Yuan, Yi and Mei, Xinhao and Liu, Xubo and Mandic, Danilo and Wang, Wenwu and Plumbley, Mark D},
  booktitle = 	 {Proceedings of the 40th International Conference on Machine Learning},
  pages = 	 {21450--21474},
  year = 	 {2023},
  editor = 	 {Krause, Andreas and Brunskill, Emma and Cho, Kyunghyun and Engelhardt, Barbara and Sabato, Sivan and Scarlett, Jonathan},
  volume = 	 {202},
  series = 	 {Proceedings of Machine Learning Research},
  month = 	 {23--29 Jul},
  publisher =    {PMLR},
  url = 	 {https://proceedings.mlr.press/v202/liu23f.html}
}

@article{shu2023physics,
  title={A physics-informed diffusion model for high-fidelity flow field reconstruction},
  author={Shu, Dule and Li, Zijie and Farimani, Amir Barati},
  journal={Journal of Computational Physics},
  volume={478},
  pages={111972},
  year={2023},
  publisher={Elsevier}
}

@article{du2024conditional,
  title={Conditional neural field latent diffusion model for generating spatiotemporal turbulence},
  author={Du, Pan and Parikh, Meet Hemant and Fan, Xiantao and Liu, Xin-Yang and Wang, Jian-Xun},
  journal={Nature Communications},
  volume={15},
  number={1},
  pages={10416},
  year={2024},
  publisher={Nature Publishing Group UK London}
}

@article{prabhudesai2023aligning,
  title={Aligning text-to-image diffusion models with reward backpropagation},
  author={Prabhudesai, Mihir and Goyal, Anirudh and Pathak, Deepak and Fragkiadaki, Katerina},
  year={2023}
}

@inproceedings{clark2024directly,
  title={Directly fine-tuning diffusion models on differentiable rewards},
  author={Clark, Kevin and Vicol, Paul and Swersky, Kevin and Fleet, David},
  booktitle={International Conference on Learning Representations},
  volume={2024},
  pages={4793--4822},
  year={2024}
}

@InProceedings{dalle,
  title = 	 {Zero-Shot Text-to-Image Generation},
  author =       {Ramesh, Aditya and Pavlov, Mikhail and Goh, Gabriel and Gray, Scott and Voss, Chelsea and Radford, Alec and Chen, Mark and Sutskever, Ilya},
  booktitle = 	 {Proceedings of the 38th International Conference on Machine Learning},
  pages = 	 {8821--8831},
  year = 	 {2021},
  editor = 	 {Meila, Marina and Zhang, Tong},
  volume = 	 {139},
  series = 	 {Proceedings of Machine Learning Research},
  month = 	 {18--24 Jul},
  publisher =    {PMLR},
  url = 	 {https://proceedings.mlr.press/v139/ramesh21a.html}
}

@inproceedings{dhariwal2021guidance,
 author = {Dhariwal, Prafulla and Nichol, Alexander},
 booktitle = {Advances in Neural Information Processing Systems},
 editor = {M. Ranzato and A. Beygelzimer and Y. Dauphin and P.S. Liang and J. Wortman Vaughan},
 pages = {8780--8794},
 publisher = {Curran Associates, Inc.},
 title = {Diffusion Models Beat GANs on Image Synthesis},
 url = {https://proceedings.neurips.cc/paper_files/paper/2021/file/49ad23d1ec9fa4bd8d77d02681df5cfa-Paper.pdf},
 volume = {34},
 year = {2021}
}

@InProceedings{Bansal_2023_CVPR,
    author    = {Bansal, Arpit and Chu, Hong-Min and Schwarzschild, Avi and Sengupta, Soumyadip and Goldblum, Micah and Geiping, Jonas and Goldstein, Tom},
    title     = {Universal Guidance for Diffusion Models},
    booktitle = {Proceedings of the IEEE/CVF Conference on Computer Vision and Pattern Recognition (CVPR) Workshops},
    month     = {June},
    year      = {2023},
    pages     = {843-852}
}

@article{ma2025inference,
  title={Inference-time scaling for diffusion models beyond scaling denoising steps},
  author={Ma, Nanye and Tong, Shangyuan and Jia, Haolin and Hu, Hexiang and Su, Yu-Chuan and Zhang, Mingda and Yang, Xuan and Li, Yandong and Jaakkola, Tommi and Jia, Xuhui and others},
  journal={arXiv preprint arXiv:2501.09732},
  year={2025}
}

@article{scarpellini2026few,
  title={Few-step Cofolding with All-Atom Flow Maps},
  author={Scarpellini, Gianluca and Shprints, Ron and Holderrieth, Peter and Nam, Juno and Murugan, Pranav and G{\'o}mez-Bombarelli, Rafael and Jaakkola, Tommi and Al-Shedivat, Maruan and Boffi, Nicholas Matthew and Bose, Avishek Joey},
  journal={arXiv preprint arXiv:2606.08375},
  year={2026}
}

@article{wohlwend2025boltz,
	author = {Wohlwend, Jeremy and Corso, Gabriele and Passaro, Saro and Getz, Noah and Reveiz, Mateo and Leidal, Ken and Swiderski, Wojtek and Atkinson, Liam and Portnoi, Tally and Chinn, Itamar and Silterra, Jacob and Jaakkola, Tommi and Barzilay, Regina},
	title = {Boltz-1 Democratizing Biomolecular Interaction Modeling},
	elocation-id = {2024.11.19.624167},
	year = {2025},
	doi = {10.1101/2024.11.19.624167},
	publisher = {Cold Spring Harbor Laboratory},
	URL = {https://www.biorxiv.org/content/early/2025/05/06/2024.11.19.624167},
	eprint = {https://www.biorxiv.org/content/early/2025/05/06/2024.11.19.624167.full.pdf},
	journal = {bioRxiv}
}

@article{shazeer2020glu,
  title={Glu variants improve transformer},
  author={Shazeer, Noam},
  journal={arXiv preprint arXiv:2002.05202},
  year={2020}
}

@article{zhang2019root,
  title={Root mean square layer normalization},
  author={Zhang, Biao and Sennrich, Rico},
  journal={Advances in neural information processing systems},
  volume={32},
  year={2019}
}

@article{su2024roformer,
  title={Roformer: Enhanced transformer with rotary position embedding},
  author={Su, Jianlin and Ahmed, Murtadha and Lu, Yu and Pan, Shengfeng and Bo, Wen and Liu, Yunfeng},
  journal={Neurocomputing},
  volume={568},
  pages={127063},
  year={2024},
  publisher={Elsevier}
}

@InProceedings{peebles2023scalable,
    author    = {Peebles, William and Xie, Saining},
    title     = {Scalable Diffusion Models with Transformers},
    booktitle = {Proceedings of the IEEE/CVF International Conference on Computer Vision (ICCV)},
    month     = {October},
    year      = {2023},
    pages     = {4195-4205}
}

@article{miyasawa1961empirical,
  title={An empirical Bayes estimator of the mean of a normal population},
  author={Miyasawa, Koichi},
  journal={Bull. Inst. Internat. Statist},
  volume={38},
  pages={181--188},
  year={1961}
}

@inproceedings{karras2024analyzing,
  title={Analyzing and improving the training dynamics of diffusion models},
  author={Karras, Tero and Aittala, Miika and Lehtinen, Jaakko and Hellsten, Janne and Aila, Timo and Laine, Samuli},
  booktitle={2024 IEEE/CVF Conference on Computer Vision and Pattern Recognition (CVPR)},
  pages={24174--24184},
  year={2024},
  organization={IEEE}
}

@misc{shao2024deepseekmathpushinglimitsmathematical,
      title={DeepSeekMath: Pushing the Limits of Mathematical Reasoning in Open Language Models}, 
      author={Zhihong Shao and Peiyi Wang and Qihao Zhu and Runxin Xu and Junxiao Song and Xiao Bi and Haowei Zhang and Mingchuan Zhang and Y. K. Li and Y. Wu and Daya Guo},
      year={2024},
      eprint={2402.03300},
      archivePrefix={arXiv},
      primaryClass={cs.CL},
      url={https://arxiv.org/abs/2402.03300}, 
}

@inproceedings{behrmann2021understanding,
  title={Understanding and mitigating exploding inverses in invertible neural networks},
  author={Behrmann, Jens and Vicol, Paul and Wang, Kuan-Chieh and Grosse, Roger and Jacobsen, J{\"o}rn-Henrik},
  booktitle={International Conference on Artificial Intelligence and Statistics},
  pages={1792--1800},
  year={2021},
  organization={PMLR}
}

@misc{wang2025skyworkvlrewardeffectivereward,
      title={Skywork-VL Reward: An Effective Reward Model for Multimodal Understanding and Reasoning}, 
      author={Xiaokun Wang and Peiyu Wang and Jiangbo Pei and Wei Shen and Yi Peng and Yunzhuo Hao and Weijie Qiu and Ai Jian and Tianyidan Xie and Xuchen Song and Yang Liu and Yahui Zhou},
      year={2025},
      eprint={2505.07263},
      archivePrefix={arXiv},
      primaryClass={cs.CV},
      url={https://arxiv.org/abs/2505.07263}, 
}

@inproceedings{
midgley2023flow,
title={Flow Annealed Importance Sampling Bootstrap},
author={Laurence Illing Midgley and Vincent Stimper and Gregor N. C. Simm and Bernhard Sch{\"o}lkopf and Jos{\'e} Miguel Hern{\'a}ndez-Lobato},
booktitle={The Eleventh International Conference on Learning Representations },
year={2023},
url={https://openreview.net/forum?id=XCTVFJwS9LJ}
}

@article{kingma2021variational,
  title={Variational diffusion models},
  author={Kingma, Diederik and Salimans, Tim and Poole, Ben and Ho, Jonathan},
  journal={Advances in neural information processing systems},
  volume={34},
  pages={21696--21707},
  year={2021}
}

@article{jaynes1957,
    author = "Jaynes, E. T.",
    title = "{Information Theory and Statistical Mechanics}",
    doi = "10.1103/PhysRev.106.620",
    journal = "Phys. Rev.",
    volume = "106",
    pages = "620--630",
    year = "1957"
}

@inproceedings{wallace2024diffusion,
  title={Diffusion model alignment using direct preference optimization},
  author={Wallace, Bram and Dang, Meihua and Rafailov, Rafael and Zhou, Linqi and Lou, Aaron and Purushwalkam, Senthil and Ermon, Stefano and Xiong, Caiming and Joty, Shafiq and Naik, Nikhil},
  booktitle={2024 IEEE/CVF Conference on Computer Vision and Pattern Recognition (CVPR)},
  pages={8228--8238},
  year={2024},
  organization={IEEE}
}

@inproceedings{domingo2025adjoint,
  title={Adjoint matching: Fine-tuning flow and diffusion generative models with memoryless stochastic optimal control},
  author={Domingo i Enrich, Carles and Drozdzal, Michal and Karrer, Brian and Chen, Ricky TQ},
  booktitle={International Conference on Learning Representations},
  volume={2025},
  pages={53791--53846},
  year={2025}
}

@inproceedings{liu2025value,
title={Value Gradient Guidance for Flow Matching Alignment},
author={Zhen Liu and Tim Z. Xiao and Carles Domingo-Enrich and Weiyang Liu and Dinghuai Zhang},
booktitle={The Thirty-ninth Annual Conference on Neural Information Processing Systems},
year={2025},
url={https://openreview.net/forum?id=6MmOy2Ji8V}
}

@inproceedings{zheng2026diffusionnft,
  title={Diffusionnft: Online diffusion reinforcement with forward process},
  author={Zheng, Kaiwen and Chen, Huayu and Ye, Haotian and Wang, Haoxiang and Zhang, Qinsheng and Jiang, Kai and Su, Hang and Ermon, Stefano and Zhu, Jun and Liu, Ming-Yu},
  booktitle={International Conference on Learning Representations},
  volume={2026},
  pages={134129--134150},
  year={2026}
}

@inproceedings{blasingame2025greed,
title={Greed is Good: A Unifying Perspective on Guided Generation},
author={Zander W. Blasingame and Chen Liu},
booktitle={The Thirty-ninth Annual Conference on Neural Information Processing Systems},
year={2025},
url={https://openreview.net/forum?id=s14pdQgoLb}
}

@InProceedings{ben-hamu2024dflow,
  title = 	 {D-Flow: Differentiating through Flows for Controlled Generation},
  author =       {Ben-Hamu, Heli and Puny, Omri and Gat, Itai and Karrer, Brian and Singer, Uriel and Lipman, Yaron},
  booktitle = 	 {Proceedings of the 41st International Conference on Machine Learning},
  pages = 	 {3462--3483},
  year = 	 {2024},
  editor = 	 {Salakhutdinov, Ruslan and Kolter, Zico and Heller, Katherine and Weller, Adrian and Oliver, Nuria and Scarlett, Jonathan and Berkenkamp, Felix},
  volume = 	 {235},
  series = 	 {Proceedings of Machine Learning Research},
  month = 	 {21--27 Jul},
  publisher =    {PMLR},
  url = 	 {https://proceedings.mlr.press/v235/ben-hamu24a.html}
}

@inproceedings{imagereward,
author = {Xu, Jiazheng and Liu, Xiao and Wu, Yuchen and Tong, Yuxuan and Li, Qinkai and Ding, Ming and Tang, Jie and Dong, Yuxiao},
title = {ImageReward: learning and evaluating human preferences for text-to-image generation},
year = {2023},
publisher = {Curran Associates Inc.},
address = {Red Hook, NY, USA},
booktitle = {Proceedings of the 37th International Conference on Neural Information Processing Systems},
articleno = {700},
numpages = {33},
location = {New Orleans, LA, USA},
series = {NIPS '23}
}

@inproceedings{pickscore,
 author = {Kirstain, Yuval and Polyak, Adam and Singer, Uriel and Matiana, Shahbuland and Penna, Joe and Levy, Omer},
 booktitle = {Advances in Neural Information Processing Systems},
 doi = {10.52202/075280-1594},
 editor = {A. Oh and T. Naumann and A. Globerson and K. Saenko and M. Hardt and S. Levine},
 pages = {36652--36663},
 publisher = {Curran Associates, Inc.},
 title = {Pick-a-Pic: An Open Dataset of User Preferences for Text-to-Image Generation},
 url = {https://proceedings.neurips.cc/paper_files/paper/2023/file/73aacd8b3b05b4b503d58310b523553c-Paper-Conference.pdf},
 volume = {36},
 year = {2023}
}

@InProceedings{pmlr-v80-haarnoja18b,
  title = 	 {Soft Actor-Critic: Off-Policy Maximum Entropy Deep Reinforcement Learning with a Stochastic Actor},
  author =       {Haarnoja, Tuomas and Zhou, Aurick and Abbeel, Pieter and Levine, Sergey},
  booktitle = 	 {Proceedings of the 35th International Conference on Machine Learning},
  pages = 	 {1861--1870},
  year = 	 {2018},
  editor = 	 {Dy, Jennifer and Krause, Andreas},
  volume = 	 {80},
  series = 	 {Proceedings of Machine Learning Research},
  month = 	 {10--15 Jul},
  publisher =    {PMLR},
  url = 	 {https://proceedings.mlr.press/v80/haarnoja18b.html}
}

@article{wu2023human,
  title={Human preference score v2: A solid benchmark for evaluating human preferences of text-to-image synthesis},
  author={Wu, Xiaoshi and Hao, Yiming and Sun, Keqiang and Chen, Yixiong and Zhu, Feng and Zhao, Rui and Li, Hongsheng},
  journal={arXiv preprint arXiv:2306.09341},
  year={2023}
}

@misc{mcintoshsmith2024isambardai,
  author        = {McIntosh-Smith, S. and Alam, S. R. and Woods, C.},
  title         = {{Isambard-AI}: A Leadership Class Supercomputer Optimised Specifically for {Artificial Intelligence}},
  year          = {2024},
  eprint        = {2410.11199},
  archivePrefix = {arXiv},
  doi           = {10.48550/arXiv.2410.11199},
  url           = {https://doi.org/10.48550/arXiv.2410.11199}
}
\bibliographystyle{iclr2027_conference}

\newpage
\appendix

\startcontents[appendix]

{\LARGE\bfseries Appendix\par}
\vspace{2em}

\printcontents[appendix]{}{1}{
    \setcounter{tocdepth}{2}
}

\clearpage
\section{Theoretical Results}
\label{app:theory}

\subsection{\nfmr Proofs}
\label{app:imf_deriv_and_eulerian_connection}

\looseness=-1
In the following, we adapt~\citet{boffi2025flow, boffi2025how} settings to improved Mean Flow.

For the linear interpolant in \cref{eq:flow_matching},
\[
x_s=I_s(x_0,x_1)=(1-s)x_0+sx_1,
\qquad (x_0,x_1)\sim\pi,
\qquad x_s\sim p_s.
\]
The conditional and marginal velocities are, respectively,
\begin{equation}
v_s^*(x_s\mid x_0,x_1)=x_1-x_0,
\qquad
v_s(x)=\mathbb E\!\left[
v_s^*(x_s\mid x_0,x_1)\mid x_s=x
\right].
\label{eq:nfmr_true_velocity}
\end{equation}
From the denoiser parametrization in \cref{eq:nfm_denoiser_param},
\begin{equation}
f^\theta_{s,t}(x)=x+(t-s)v^\theta_{s,t}(x),
\qquad
f^\theta_{s,0}(x)=\delta^\theta_{s,0}(x)=x-sv^\theta_{s,0}(x),
\label{eq:nfmr_endpoint_map_app}
\end{equation}
and $p^\theta_{1\to0}=(f^\theta_{1,0})_\#p_1$.

\xhdr{Regression loss and Eulerian consistency}
\looseness=-1
Let $r^\theta_{s,t}$ denote the regression residual in
\cref{eq:imf}, evaluated on the interpolant sample $x_s$:
\begin{equation}
\begin{aligned}
r^\theta_{s,t}
:={}&v_s^*(x_s\mid x_0,x_1)-v^\theta_{s,t}(x_s)\\
&+(t-s)\left(
\partial_s v^\theta_{s,t}(x_s)
+\nabla_xv^\theta_{s,t}(x_s)v^\theta_{s,s}(x_s)
\right).
\end{aligned}
\label{eq:nfmr_regression_residual}
\end{equation}
The derivative $\partial_s$ acts on the first time argument while
holding the second fixed. Stop-gradient changes the training gradient
but not the loss value, so it is omitted here.
We introduce
\begin{equation}
\mathcal L_{\rm R}(s,t)
=\mathbb E_{(x_0,x_1)\sim\pi}
\left[\|r^\theta_{s,t}\|^2\right],
\qquad
\mathcal L_{\rm R}(\theta)
=\mathbb E_{(s,t)\sim\gT}
\left[\mathcal L_{\rm R}(s,t)\right].
\label{eq:nfmr_fixed_time_loss_app}
\end{equation}

\looseness=-1
At $t=s$, the residual reduces to
$v_s^*(x_s\mid x_0,x_1)-v^\theta_{s,s}(x_s)$.
Identifying the learned velocity $v_s^\theta$ in
\cref{eq:flow_matching} with $v^\theta_{s,s}$, define
\begin{align}
\mathcal L_{\rm FM}
&:=\mathbb E_{s\sim \nu(s)}
\left[\mathcal L_{\rm R}(s,s)\right],
\label{eq:nfmr_fm_loss_app}\\
\mathcal L_{\rm distill}
&:=\mathbb E_{(s,t)\sim\gT,\,s\ne t}
\left[\mathcal L_{\rm R}(s,t)\right],
\label{eq:nfmr_distill_loss_app}\\
\mathcal L_{\rm end}
&:=\int_0^1\mathcal L_{\rm R}(s,0)\,ds.
\label{eq:nfmr_endpoint_loss_app}
\end{align}
Here $\nu(s)$ is the time-sampling density from
\cref{eq:flow_matching}, and the expectation defining
$\mathcal L_{\rm distill}$ is conditional on an off-diagonal pair.
For the formal statement below, suppose the time-sampling law in
\cref{eq:imf} assigns probability $\eta\in(0,1)$ to diagonal pairs
$(s,s)$, with $s\sim \nu(s)$, and probability $1-\eta$ to
off-diagonal pairs. Consequently,
\begin{equation}
\mathcal L_{\rm R}(\theta)
=\eta\mathcal L_{\rm FM}
+(1-\eta)\mathcal L_{\rm distill}.
\label{eq:nfmr_loss_decomposition}
\end{equation}
The diagonal is understood as the continuous extension of the
admissible time pairs used in \cref{eq:imf}.

\begin{mdframed}[style=MyFrame2]
\begin{restatable}[Endpoint control for pointwise distillation]
{proposition}{imfwasserstein}
\label{prop:nfmr_wasserstein_formal}
\looseness=-1
Consider the loss and time-sampling law in
\cref{eq:nfmr_fixed_time_loss_app,eq:nfmr_loss_decomposition}.
Assume $p_0,p_1$ have finite second moments and the marginal velocity
$v_s$ admits a probability flow $X_{s,t}$: for $x_1\sim p_1$,
$s\mapsto X_{1,s}(x_1)$ is almost surely absolutely continuous,
\[
\frac{d}{ds}X_{1,s}(x_1)=v_s\!\left(X_{1,s}(x_1)\right)
\quad\text{a.e.},
\qquad
(X_{1,s})_\#p_1=p_s,
\qquad X_{1,1}=\mathrm{Id}.
\]
Assume the derivatives in \cref{eq:nfmr_regression_residual} exist,
$f^\theta_{s,0}$ is continuously differentiable in $(s,x)$ up to the
time boundaries, $f^\theta_{0,0}=\mathrm{Id}$, and
\[
\int_0^1\mathbb E_{x_s\sim 
\nu_s}
\left[\left\|
\partial_s f^\theta_{s,0}(x_s)
+\nabla_xf^\theta_{s,0}(x_s)v_s(x_s)
\right\|^2\right]ds<\infty.
\]
Suppose $\nu(s)\geq \nu_{\min}>0$ almost everywhere on $(0,1)$ and,
for finite constants $\widehat L,C_{\rm tr}\geq0$ independent of
$\theta$,
\[
\sup_{s\in[0,1],\,x\in\R^d}
\left\|I-\nabla_xf^\theta_{s,0}(x)\right\|_{\rm op}
\leq\widehat L,
\qquad
\mathcal L_{\rm end}\leq C_{\rm tr}\mathcal L_{\rm distill}.
\]
Then
\[
W_2^2\!\left(p_0,p^\theta_{1\to0}\right)
\leq
\left(
\frac{\widehat L^2}{\eta \nu_{\min}}
+\frac{C_{\rm tr}}{1-\eta}
\right)\mathcal L_{\rm R}(\theta).
\]
\end{restatable}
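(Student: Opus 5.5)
The plan is to bound $W_2^2$ by a coupling cost and then control that cost with the endpoint Eulerian residual. Take $x_1\sim p_1$ and couple $x_0^\star:=X_{1,0}(x_1)\sim p_0$ with $\hat x_0:=f^\theta_{1,0}(x_1)\sim p^\theta_{1\to0}$. This gives
\[
W_2^2(p_0,p^\theta_{1\to0})\le \mathbb E\|f^\theta_{1,0}(x_1)-X_{1,0}(x_1)\|^2 .
\]
Along the true flow $x_s:=X_{1,s}(x_1)\sim p_s$, consider $g(s):=f^\theta_{s,0}(x_s)$. Since $f^\theta_{0,0}=\mathrm{Id}$, we have $g(0)=X_{1,0}(x_1)$ and $g(1)=f^\theta_{1,0}(x_1)$. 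By the chain rule (using $C^1$ regularity and absolute continuity of $s\mapsto x_s$),
\[
g(1)-g(0)=\int_0^1 e_s(x_s)\,ds,\qquad e_s:=\partial_s f^\theta_{s,0}+\nabla_x f^\theta_{s,0}\,v_s .
\]
Jensen/Cauchy–Schwarz and the pushforward property $(X_{1,s})_\#p_1=p_s$ then give
\[
W_2^2\le\int_0^1\mathbb E_{p_s}\|e_s\|^2\,ds,
\]
which is finite by assumption. This is the standard flow-map-matching argument of \citet{boffi2025flow}.

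Next I relate $e_s$ to the residual $r^\theta_{s,0}$. Using $f^\theta_{s,0}=x-sv^\theta_{s,0}$, one computes
\[
e_s=-s\bigl(\partial_s v^\theta_{s,0}+\nabla v^\theta_{s,0}\,v_s\bigr)+(I-s\nabla v^\theta_{s,0})v_s-v^\theta_{s,0}.
\]
Now compare with $-r^\theta_{s,0}$ from \cref{eq:nfmr_regression_residual} at $t=0$, which is written with the learned diagonal velocity $v^\theta_{s,s}$ and the conditional velocity $v^*_s$. I add and subtract terms to split $e_s$ into two pieces. The first piece is $-\mathbb E[r^\theta_{s,0}\mid x_s]$, obtained by averaging $v^*_s$ into $v_s$ given $x_s$. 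The second is the mismatch $\nabla_x f^\theta_{s,0}(v_s-v^\theta_{s,s})$ minus the identity part, i.e. $(I-\nabla f^\theta_{s,0})$ times the instantaneous error... Concretely I expect an identity of the form
\[
e_s=-\mathbb E[r^\theta_{s,0}\mid x_s]+\bigl(\nabla_x f^\theta_{s,0}-I\bigr)\bigl(v^\theta_{s,s}-v_s\bigr)
\]
up to sign conventions. The coefficients in $r$ are chosen so that the $I\cdot v$ terms cancel, and this is exactly why the hypothesis is stated for $I-\nabla f$ rather than $\nabla f$. Verifying this algebra carefully, including the sign of $(t-s)$ at $t=0$ and which velocity multiplies which Jacobian, is the main obstacle. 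If the split instead comes out with $\nabla f$ in place of $I-\nabla f$, I would absorb the identity part into the conditional-expectation term.

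Given the split, $(a+b)^2\le 2a^2+2b^2$ would introduce a factor of 2. To obtain the stated constant without it, I would instead keep the two pieces orthogonal-free and use the weighted inequality $\|a+b\|^2\le(1+\lambda)\|a\|^2+(1+\lambda^{-1})\|b\|^2$. Alternatively, I can exploit that the claim is an upper bound by a sum of two separately weighted terms, and aim for the bound
\[
\int_0^1\mathbb E\|e_s\|^2ds\le C_{\rm tr}\mathcal L_{\rm distill}+\widehat L^2\,\mathbb E_s\|v^\theta_{s,s}-v_s\|^2
\]
after possibly rescaling. The pieces are then bounded as follows:
\begin{itemize}
\item For the residual piece, conditional Jensen gives $\int_0^1\mathbb E\|\mathbb E[r^\theta_{s,0}\mid x_s]\|^2ds\le\mathcal L_{\rm end}\le C_{\rm tr}\mathcal L_{\rm distill}$.
\item For the mismatch piece, the operator-norm bound gives at most $\widehat L^2\int_0^1\mathbb E\|v^\theta_{s,s}-v_s\|^2ds$.
\item This last integral is at most $\nu_{\min}^{-1}\mathbb E_{s\sim\nu}\|v^\theta_{s,s}-v_s\|^2\le\nu_{\min}^{-1}\mathcal L_{\rm FM}$, since the flow-matching loss equals the marginal error plus a nonnegative variance term.
\end{itemize}
Finally, \cref{eq:nfmr_loss_decomposition} gives $\mathcal L_{\rm FM}\le\mathcal L_{\rm R}/\eta$ and $\mathcal L_{\rm distill}\le\mathcal L_{\rm R}/(1-\eta)$, which yields the constant $\widehat L^2/(\eta\nu_{\min})+C_{\rm tr}/(1-\eta)$.
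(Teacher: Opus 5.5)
Your coupling, chain-rule identity, split of $e_s$, and the two conditional-Jensen bounds follow the paper's route exactly. The algebra you left open does close, with the conditional-expectation term entering with a plus sign: $e_s=\mathbb E[r^\theta_{s,0}\mid x_s]+(I-\nabla_x f^\theta_{s,0})(v^\theta_{s,s}-v_s)$. To see this, note three things. First, the correct expression is $e_s=v_s-v^\theta_{s,0}-s(\partial_s v^\theta_{s,0}+\nabla v^\theta_{s,0}v_s)$; your displayed formula counts $-s\nabla v^\theta_{s,0}v_s$ twice. Second, $\mathbb E[r^\theta_{s,0}\mid x_s]$ differs from it only by having $v^\theta_{s,s}$ in place of $v_s$ inside the bracket. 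Third, $s\nabla v^\theta_{s,0}=I-\nabla_x f^\theta_{s,0}$. The overall sign flip in your guess is harmless for norms.

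The genuine gap is how you reach the stated constant. The bound you aim for, $\int_0^1\mathbb E\|e_s\|^2ds\le C_{\rm tr}\mathcal L_{\rm distill}+\widehat L^2\int_0^1\mathbb E\|v^\theta_{s,s}-v_s\|^2ds$, does not follow from the two piecewise estimates. It would need the cross term between the two pieces to be nonpositive, and nothing guarantees that. Any valid treatment of the cross term, followed by your separate bounds $\mathcal L_{\rm FM}\le\mathcal L_{\rm R}/\eta$ and $\mathcal L_{\rm distill}\le\mathcal L_{\rm R}/(1-\eta)$, loses. Write $a:=C_{\rm tr}/(1-\eta)$ and $b:=\widehat L^2/(\eta\nu_{\min})$. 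Young's inequality with a fixed $\lambda$ then gives $[(1+\lambda)a+(1+\lambda^{-1})b]\,\mathcal L_{\rm R}$, whose minimum over $\lambda$ is $(\sqrt a+\sqrt b)^2\mathcal L_{\rm R}$. This is strictly larger than the claimed $(a+b)\mathcal L_{\rm R}$ whenever $ab>0$, and the factor-$2$ route gives $2(a+b)\mathcal L_{\rm R}$.

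The missing idea is to never split $\mathcal L_{\rm R}$. Keep $B:=(1-\eta)\mathcal L_{\rm distill}$ and $D:=\eta\mathcal L_{\rm FM}$ together and use $B+D=\mathcal L_{\rm R}$ from \cref{eq:nfmr_loss_decomposition}. The paper applies the triangle inequality in $L^2(ds\,p_s(dx))$ to get $(\int_0^1\mathbb E\|e_s\|^2ds)^{1/2}\le\sqrt{aB}+\sqrt{bD}$. It then uses Cauchy--Schwarz, $(\sqrt{aB}+\sqrt{bD})^2\le(a+b)(B+D)=(a+b)\mathcal L_{\rm R}$. Equivalently, your weighted inequality works if you choose the specific weight $\lambda=b/a$ before bounding the losses. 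Then $(1+\lambda)aB+(1+\lambda^{-1})bD=(a+b)(B+D)$ exactly; if $a$ or $b$ vanishes, one piece is zero and the claim is immediate.
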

\end{mdframed}

\begin{proof}
Define the endpoint Eulerian residual using the marginal velocity
$v_s$ from \cref{eq:nfmr_true_velocity}:
\begin{equation}
R^\theta_s(x)
:=\partial_s f^\theta_{s,0}(x)
+\nabla_x f^\theta_{s,0}(x)v_s(x).
\label{eq:nfmr_true_endpoint_residual}
\end{equation}
Fix a reference sample $x_1\sim p_1$ and consider its true
probability-flow trajectory $s\mapsto X_{1,s}(x_1)$. By definition,
\[
\frac{d}{ds}X_{1,s}(x_1)
=v_s\!\left(X_{1,s}(x_1)\right),
\qquad X_{1,1}(x_1)=x_1.
\]
Applying the chain rule to the predicted endpoint along this
trajectory gives, almost everywhere in $s$,
\begin{align*}
\frac{d}{ds}f^\theta_{s,0}\!\left(X_{1,s}(x_1)\right)
&=\partial_s f^\theta_{s,0}\!\left(X_{1,s}(x_1)\right)
+\nabla_x f^\theta_{s,0}\!\left(X_{1,s}(x_1)\right)
  \frac{d}{ds}X_{1,s}(x_1)\\
&=\partial_s f^\theta_{s,0}\!\left(X_{1,s}(x_1)\right)
+\nabla_x f^\theta_{s,0}\!\left(X_{1,s}(x_1)\right)
  v_s\!\left(X_{1,s}(x_1)\right)\\
&=R^\theta_s\!\left(X_{1,s}(x_1)\right).
\end{align*}
Integrating from $s=0$ to $s=1$ therefore yields
\[
f^\theta_{1,0}\!\left(X_{1,1}(x_1)\right)
-f^\theta_{0,0}\!\left(X_{1,0}(x_1)\right)
=\int_0^1 R^\theta_s\!\left(X_{1,s}(x_1)\right)\,ds.
\]
Using $X_{1,1}=\mathrm{Id}$ and $f^\theta_{0,0}=\mathrm{Id}$,
this becomes
\begin{equation}
f^\theta_{1,0}(x_1)-X_{1,0}(x_1)
=\int_0^1 R^\theta_s\!\left(X_{1,s}(x_1)\right)\,ds.
\label{eq:nfmr_trajectory_endpoint_difference}
\end{equation}

Now $(X_{1,0})_\#p_1=p_0$ and
$(f^\theta_{1,0})_\#p_1=p^\theta_{1\to0}$.
Thus evaluating both maps on the same $x_1\sim p_1$ gives a
coupling of $p_0$ and $p^\theta_{1\to0}$. Since the squared
Wasserstein distance is the infimum of the expected squared
distance over all such couplings,
\begin{align*}
W_2^2\!\left(p_0,p^\theta_{1\to0}\right)
&\leq
\mathbb E_{x_1\sim p_1}
\left[\left\|f^\theta_{1,0}(x_1)-X_{1,0}(x_1)\right\|^2\right]\\
&=
\mathbb E_{x_1\sim p_1}
\left[\left\|
\int_0^1R^\theta_s\!\left(X_{1,s}(x_1)\right)\,ds
\right\|^2\right].
\end{align*}
For almost every $x_1$, the triangle inequality followed by
Cauchy--Schwarz in the time variable gives
\begin{align*}
\left\|\int_0^1R^\theta_s\!\left(X_{1,s}(x_1)\right)\,ds\right\|^2
&\leq
\left(\int_0^1
\left\|R^\theta_s\!\left(X_{1,s}(x_1)\right)\right\|\,ds\right)^2\\
&\leq
\left(\int_0^1 1\,ds\right)
\left(\int_0^1
\left\|R^\theta_s\!\left(X_{1,s}(x_1)\right)\right\|^2\,ds\right)\\
&=\int_0^1
\left\|R^\theta_s\!\left(X_{1,s}(x_1)\right)\right\|^2\,ds.
\end{align*}
Taking expectations and interchanging the nonnegative integrals
by Tonelli's theorem, we obtain
\begin{equation}
\begin{aligned}
W_2^2\!\left(p_0,p^\theta_{1\to0}\right)
&\leq
\int_0^1\mathbb E_{x_1\sim p_1}
\left[\left\|R^\theta_s\!\left(X_{1,s}(x_1)\right)\right\|^2\right]ds\\
&=
\int_0^1\mathbb E_{x_s\sim p_s}
\left[\|R^\theta_s(x_s)\|^2\right]ds.
\end{aligned}
\label{eq:nfmr_endpoint_eulerian}
\end{equation}
The last equality uses $(X_{1,s})_\#p_1=p_s$ for each fixed $s$.
It identifies the one-time distributions of the probability flow
and the stochastic interpolant, not their sample paths.

Differentiating $f^\theta_{s,0}(x)=x-sv^\theta_{s,0}(x)$ and using
\cref{eq:nfmr_regression_residual} at $t=0$ yields
\begin{equation}
\begin{aligned}
R^\theta_s(x_s)
={}&\mathbb E\!\left[r^\theta_{s,0}\mid x_s\right]\\
&+\left(I-\nabla_x f^\theta_{s,0}(x_s)\right)
\left(v^\theta_{s,s}(x_s)-v_s(x_s)\right).
\end{aligned}
\label{eq:nfmr_true_residual_decomp}
\end{equation}
Indeed, conditioning the regression residual on $x_s$ replaces
$v_s^*(x_s\mid x_0,x_1)$ by $v_s(x_s)$, while
$I-\nabla_x f^\theta_{s,0}=s\nabla_xv^\theta_{s,0}$.
Conditional Jensen gives
\begin{align*}
\mathbb E\left[
\left\|\mathbb E[r^\theta_{s,0}\mid x_s]\right\|^2
\right]
&\leq\mathcal L_{\rm R}(s,0),\\
\mathbb E\left[
\left\|v^\theta_{s,s}(x_s)-v_s(x_s)\right\|^2
\right]
&\leq\mathcal L_{\rm R}(s,s).
\end{align*}
Moreover, $\nu(s)\geq \nu_{\min}$ implies
\[
\int_0^1\mathcal L_{\rm R}(s,s)\,ds
\leq\frac{\mathcal L_{\rm FM}}{\nu_{\min}}.
\]
Applying the triangle inequality in $L^2(ds\,p_s(dx))$ to
\cref{eq:nfmr_true_residual_decomp} therefore gives
\begin{equation}
\left(
\int_0^1\mathbb E\left[\|R^\theta_s(x_s)\|^2\right]ds
\right)^{1/2}
\leq\sqrt{\mathcal L_{\rm end}}
+\frac{\widehat L}{\sqrt{\nu_{\min}}}
\sqrt{\mathcal L_{\rm FM}}.
\label{eq:nfmr_endpoint_residual_control}
\end{equation}
Combining \cref{eq:nfmr_endpoint_eulerian,eq:nfmr_endpoint_residual_control}
with the endpoint-loss assumption and Cauchy--Schwarz yields
\begin{align*}
W_2^2\!\left(p_0,p^\theta_{1\to0}\right)
&\leq\left(
\sqrt{C_{\rm tr}\mathcal L_{\rm distill}}
+\frac{\widehat L}{\sqrt{\nu_{\min}}}
\sqrt{\mathcal L_{\rm FM}}
\right)^2\\
&\leq\left(
\frac{C_{\rm tr}}{1-\eta}
+\frac{\widehat L^2}{\eta \nu_{\min}}
\right)
\left((1-\eta)\mathcal L_{\rm distill}
+\eta\mathcal L_{\rm FM}\right)\\
&=\left(
\frac{C_{\rm tr}}{1-\eta}
+\frac{\widehat L^2}{\eta \nu_{\min}}
\right)\mathcal L_{\rm R}(\theta).
\end{align*}
\end{proof}

\begin{remark}[Loss normalization]
If the objective is instead defined as the unweighted sum
$\mathcal L_{\rm R}=\mathcal L_{\rm FM}+\mathcal L_{\rm distill}$,
the same proof gives
\[
W_2^2\!\left(p_0,p^\theta_{1\to0}\right)
\leq\left(C_{\rm tr}+\frac{\widehat L^2}{\nu_{\min}}\right)
\mathcal L_{\rm R}.
\]
For uniform source-time sampling, $\nu(s)=1$, this reduces to
$(C_{\rm tr}+\widehat L^2)\mathcal L_{\rm R}$.
\end{remark}

\subsection{\nfmd Proofs}

Here we provide the proofs and derivations for the results stated in
\S\ref{sec:nfmd}.


\xhdr{Conditional and marginal \nfmd}

\begin{mdframed}[style=MyFrame2]
\conditionalnfmd*
\end{mdframed}

\begin{proof}
Fix $t\in[0,1]$. By inserting the marginal density $p_t(x_t)$ into the
conditional KL, we obtain
\begin{align}
\mathbb{E}_{x_0}
\left[
\KL\left(
p_t(\cdot\mid x_0)
\,\middle\|\,
p^\theta_{1\to t}
\right)
\right]
&=
\mathbb{E}_{x_0,x_t}
\left[
\log
\frac{p_t(x_t\mid x_0)}
     {p^\theta_{1\to t}(x_t)}
\right]
\nonumber\\
&=
\mathbb{E}_{x_0,x_t}
\left[
\log
\frac{p_t(x_t\mid x_0)}
     {p_t(x_t)}
\right]
+
\mathbb{E}_{x_0,x_t}
\left[
\log
\frac{p_t(x_t)}
     {p^\theta_{1\to t}(x_t)}
\right].
\end{align}
The first term is the mutual information
$\mathcal I(x_0;x_t)$. Since the second integrand depends only on $x_t$,
marginalizing over $x_0$ gives
\[
\mathbb{E}_{x_0,x_t}
\left[
\log
\frac{p_t(x_t)}
     {p^\theta_{1\to t}(x_t)}
\right]
=
\KL\left(
p_t
\,\middle\|\,
p^\theta_{1\to t}
\right).
\]
Hence, for every $t$,
\[
\mathbb{E}_{x_0}
\left[
\KL\left(
p_t(\cdot\mid x_0)
\,\middle\|\,
p^\theta_{1\to t}
\right)
\right]
=
\KL\left(
p_t
\,\middle\|\,
p^\theta_{1\to t}
\right)
+
\mathcal I(x_0;x_t).
\]
Averaging over the time distribution yields
\[
\mathcal L_{\mathrm D}
=
\E_{\gT_{\mathrm S}}
\left[
\KL\left(
p_t
\,\middle\|\,
p^\theta_{1\to t}
\right)
\right]
+
\mathbb E_t
\left[
\mathcal I(x_0;x_t)
\right].
\]
The upper bound follows from non-negativity of mutual information.
Moreover, $\mathbb E_t[\mathcal I(x_0;x_t)]$ is determined entirely by
the prescribed corruption process and is independent of $\theta$.
Whenever it is finite, adding this term therefore does not change the
set of minimizers.
\end{proof}


\xhdr{Empirical conditional \nfmd is bounded below}

We next show that the likelihood-form objective in
\cref{eq:nfmd_single_conditional} remains bounded below when evaluated on
the empirical data distribution. Let
\[
\widehat p_0
=
\frac{1}{n}\sum_{i=1}^n\delta_{x_i},
\]
and consider the linear stochastic interpolant
\[
x_t=(1-t)x_0+t x_1,
\qquad
x_0\sim\widehat p_0,
\qquad
x_1\sim p_1=\mathcal N(0,I).
\]
For $t>0$, its conditional distribution is
\[
p_t(\cdot\mid x_0)
=
\mathcal N\left((1-t)x_0,t^2I\right).
\]

\begin{mdframed}[style=MyFrame2]
\begin{proposition}[Empirical conditional \nfmd is bounded below]
\label{prop:nfmd_empirical_bounded}
Suppose the time-sampling density $\nu$ associated with
$\gT_{\mathrm S}$ satisfies
\[
\int_0^1 \nu(t)|\log t|\,dt<\infty.
\]
Then the empirical likelihood-form objective corresponding to
\cref{eq:nfmd_single_conditional},
\[
\widehat{\mathcal L}^{\mathrm{lik}}_{\mathrm D}(\theta)
\coloneqq
\mathbb E_{\substack{t\sim\nu,\,
                     x_0\sim\widehat p_0,\,
                     x_t\sim p_t(\cdot\mid x_0)}}
\left[
\frac{1}{2}
\left\|
(f^\theta_{1,t})^{-1}(x_t)
\right\|^2
-
\log\left|
\det
\frac{\partial(f^\theta_{1,t})^{-1}(x_t)}
     {\partial x_t}
\right|
\right]
\]
satisfies
\[
\widehat{\mathcal L}^{\mathrm{lik}}_{\mathrm D}(\theta)
\geq
\frac{d}{2}
+
d\int_0^1\nu(t)\log t\,dt
>
-\infty
\]
for every $\theta$. Consequently,
\[
\inf_\theta
\widehat{\mathcal L}^{\mathrm{lik}}_{\mathrm C,1}(\theta)
>
-\infty.
\]
\end{proposition}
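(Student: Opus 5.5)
The plan is to recognize the integrand as a negative log-density, so that the objective becomes a cross-entropy and is bounded below by an entropy. By the change of variables formula, the pushforward density is
\[
\log p^\theta_{1\to t}(x_t)=-\tfrac12\|(f^\theta_{1,t})^{-1}(x_t)\|^2+\log\left|\det \partial (f^\theta_{1,t})^{-1}(x_t)/\partial x_t\right|-\tfrac d2\log(2\pi).
\]
This follows from \cref{eq:nfm_change_of_var} applied with $p_1=\gN(0,I)$ and the inverse function theorem. Hence, for each fixed $t\in(0,1]$ and $x_0$, the inner expectation equals the cross-entropy
\[
-\E_{x_t\sim p_t(\cdot\mid x_0)}\bigl[\log p^\theta_{1\to t}(x_t)\bigr]-\tfrac d2\log(2\pi).
\]
The key point is that $p^\theta_{1\to t}$ is a genuine probability density for every $\theta$, because $f^\theta_{1,t}$ is a diffeomorphism.

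Next I apply Gibbs' inequality. The cross-entropy $H(q,p)=H(q)+\KL(q\|p)\geq H(q)$ for $q=p_t(\cdot\mid x_0)=\gN((1-t)x_0,t^2I)$. The entropy of this Gaussian is $\tfrac d2\log(2\pi e)+d\log t$. Subtracting the constant $\tfrac d2\log(2\pi)$ leaves the pointwise bound
\[
\E_{x_t\mid x_0}[\cdot]\geq \tfrac d2+d\log t,
\]
uniformly in $x_0$ and $\theta$. The bound is independent of $x_0$, so averaging over $x_0\sim\widehat p_0$ is trivial. Averaging over $t\sim\nu$ gives the constant $\tfrac d2+d\int_0^1\nu(t)\log t\,dt$. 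This is finite (in particular $>-\infty$) by the integrability assumption $\int\nu|\log t|<\infty$. Taking the infimum over $\theta$ gives the final claim, with $\widehat{\mathcal L}^{\mathrm{lik}}_{\mathrm C,1}$ read as the same objective.

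The main obstacle is measure-theoretic care rather than the computation. First, the set $\{t=0\}$ has $\nu$-measure zero, since $\nu$ is a density, so the degenerate Dirac conditional at $t=0$ can be ignored. Second, the decomposition $H(q,p)=H(q)+\KL(q\|p)$ must hold in $(-\infty,+\infty]$ even when the cross-entropy is infinite. Since $q$ is Gaussian with finite entropy, the negative part of $-\log p^\theta_{1\to t}$ under $q$ causes no trouble: Gibbs' inequality gives the lower bound directly, and $+\infty$ values only help. Third, the outer integral over $t$ needs Tonelli/Fubini. I would handle this by applying it to the integrand minus the integrable lower bound $\tfrac d2+d\log t$, which is nonnegative after integrating out $x_t$.
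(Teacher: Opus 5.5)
Your proposal is correct and follows the paper's proof essentially verbatim. Both rewrite the integrand as $-\log p^\theta_{1\to t}(x_t)-\tfrac d2\log(2\pi)$ and split the resulting cross-entropy into $\KL(p_t(\cdot\mid x_0)\,\|\,p^\theta_{1\to t})+\mathcal H(p_t(\cdot\mid x_0))$. Both then drop the KL by nonnegativity and use $\mathcal H(\gN((1-t)x_0,t^2I))=\tfrac d2\log(2\pi e t^2)$ before averaging over $x_0$ and $t$. Your measure-theoretic remarks (the null set $\{t=0\}$, well-definedness of the cross-entropy in $(-\infty,+\infty]$, and the Fubini--Tonelli step) are extra care the paper omits, and they do not change the argument.
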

\end{mdframed}

\begin{proof}
For fixed $t>0$ and $x_0$, expanding the conditional KL gives
\begin{align}
&\mathbb E_{x_t\sim p_t(\cdot\mid x_0)}
\left[
\frac{1}{2}
\left\|
(f^\theta_{1,t})^{-1}(x_t)
\right\|^2
-
\log\left|
\det
\frac{\partial(f^\theta_{1,t})^{-1}(x_t)}
     {\partial x_t}
\right|
\right]
\nonumber\\
&\qquad =
\KL\left(
p_t(\cdot\mid x_0)
\,\middle\|\,
p^\theta_{1\to t}
\right)
+
\mathcal H\left(p_t(\cdot\mid x_0)\right)
-
\frac{d}{2}\log(2\pi).
\end{align}
Under the linear interpolant,
\[
p_t(\cdot\mid x_0)
=
\mathcal N\left((1-t)x_0,t^2I\right),
\]
and therefore
\[
\mathcal H\left(p_t(\cdot\mid x_0)\right)
=
\frac{d}{2}\log(2\pi e t^2).
\]
Hence, by non-negativity of the KL divergence,
\[
\mathbb E_{x_t\sim p_t(\cdot\mid x_0)}
\left[
\frac{1}{2}
\left\|
(f^\theta_{1,t})^{-1}(x_t)
\right\|^2
-
\log\left|
\det
\frac{\partial(f^\theta_{1,t})^{-1}(x_t)}
     {\partial x_t}
\right|
\right]
\geq
\frac{d}{2}+d\log t.
\]
Averaging over $x_0\sim\widehat p_0$ and $t\sim\nu$ yields
\[
\widehat{\mathcal L}^{\mathrm{lik}}_{\mathrm D}(\theta)
\geq
\frac{d}{2}
+
d\int_0^1\nu(t)\log t\,dt.
\]
The right-hand side is finite under the stated assumption and independent
of $\theta$, proving the result.
\end{proof}

For uniform time sampling, $\nu(t)=1$ on $(0,1)$,
\[
\int_0^1\log t\,dt=-1,
\]
so
\[
\widehat{\mathcal L}^{\mathrm{lik}}_{\mathrm D}(\theta)
\geq
-\frac{d}{2}.
\]


\xhdr{Single-time \nfmd endpoint control}

We first establish a slightly more general bound.

\begin{mdframed}[style=MyFrame2]
\begin{lemma}[General endpoint bound]
\label{lem:nfmd_endpoint_general}
Let
\[
D_\theta(t)
\coloneqq
\KL\left(
p_t
\,\middle\|\,
p^\theta_{1\to t}
\right),
\qquad
\overline D_\theta
\coloneqq
\int_0^1 D_\theta(t)\nu(t)\,dt.
\]
Assume that there exist $C<\infty$ and $t_0>0$ such that
\[
D_\theta(t)
\geq
D_\theta(0)-Ct,
\qquad
t\in[0,t_0],
\]
and that
\[
\nu(t)\geq\nu_{\min}>0,
\qquad
t\in[0,t_0].
\]
Then
\[
D_\theta(0)
\leq
\inf_{0<h\leq t_0}
\left\{
\frac{\overline D_\theta}
     {\nu_{\min}h}
+
\frac{C}{2}h
\right\}.
\]
\end{lemma}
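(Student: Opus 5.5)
Fix an arbitrary $h\in(0,t_0]$. The plan is to lower-bound the time average $\overline D_\theta$ by its contribution from the initial window $[0,h]$, and then use the one-sided Lipschitz hypothesis to replace $D_\theta(t)$ there by the constant $D_\theta(0)$ minus a linear correction. Taking the infimum over $h$ at the end gives the stated bound.

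First, since each $D_\theta(t)$ is a KL divergence, we have $D_\theta(t)\geq0$ for all $t$. Since $\nu\geq0$, dropping the contribution from $(h,1]$ gives
\[
\overline D_\theta\geq\int_0^h D_\theta(t)\nu(t)\,dt .
\]
Next, on $[0,h]\subseteq[0,t_0]$ I will use $\nu(t)\geq\nu_{\min}$ together with $D_\theta(t)\geq0$ to get $\overline D_\theta\geq\nu_{\min}\int_0^h D_\theta(t)\,dt$. I will then apply $D_\theta(t)\geq D_\theta(0)-Ct$ inside this integral, yielding
\[
\overline D_\theta\geq\nu_{\min}\Bigl(hD_\theta(0)-\tfrac{C}{2}h^2\Bigr).
\]
Dividing by $\nu_{\min}h>0$ and rearranging gives $D_\theta(0)\leq \overline D_\theta/(\nu_{\min}h)+\tfrac{C}{2}h$. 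Since $h$ was arbitrary in $(0,t_0]$, taking the infimum proves the lemma.

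The only delicate point is the order of the two pointwise substitutions. The lower bound $\nu\geq\nu_{\min}$ may only be applied while the integrand is still nonnegative. So I first replace $\nu$ by $\nu_{\min}$ using $D_\theta\geq0$, and only afterwards substitute the possibly negative expression $D_\theta(0)-Ct$. Doing it the other way could reverse the inequality when $D_\theta(0)-Ct<0$.

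Measurability and integrability of $D_\theta$ are implicit in the finiteness of $\overline D_\theta$; if $\overline D_\theta=\infty$ the bound is trivial. The case $D_\theta(0)=\infty$ is excluded by the hypothesis, since otherwise the lower bound would force $D_\theta(t)=\infty$ on $[0,t_0]$ and hence $\overline D_\theta=\infty$.

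For Proposition~\ref{prop3}, I would then choose $h^\star=\sqrt{2\overline D_\theta/(C\nu_{\min})}$. The assumption $\overline D_\theta\leq C\nu_{\min}t_0^2/2$ is exactly $h^\star\leq t_0$. Substituting $h^\star$ gives $\sqrt{2C\overline D_\theta/\nu_{\min}}$, using that local Lipschitzness implies the one-sided bound.
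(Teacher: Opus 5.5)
Your proposal is correct and follows the paper's proof essentially line for line. You restrict the time average to $[0,h]$, lower-bound the integrand by $\nu_{\min}(D_\theta(0)-Ct)$, integrate, rearrange, and take the infimum over $h\in(0,t_0]$; your explicit ordering of the substitutions (using $D_\theta\geq 0$ to replace $\nu$ by $\nu_{\min}$ before inserting the possibly negative bound $D_\theta(0)-Ct$) is what justifies the single combined inequality the paper writes at that step.
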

\end{mdframed}

\begin{proof}
For any $h\in(0,t_0]$,
\begin{align}
\overline D_\theta
&=
\int_0^1 D_\theta(t)\nu(t)\,dt
\nonumber\\
&\geq
\int_0^h D_\theta(t)\nu(t)\,dt
\nonumber\\
&\geq
\nu_{\min}
\int_0^h
\left(
D_\theta(0)-Ct
\right)\,dt
\nonumber\\
&=
\nu_{\min}
\left(
hD_\theta(0)-\frac{C}{2}h^2
\right).
\end{align}
Rearranging gives
\[
D_\theta(0)
\leq
\frac{\overline D_\theta}
     {\nu_{\min}h}
+
\frac{C}{2}h.
\]
Taking the infimum over $h\in(0,t_0]$ proves the result.
\end{proof}

\begin{mdframed}[style=MyFrame2]
\nfmdendpointcontrol*
\end{mdframed}

\begin{proof}
The local Lipschitz assumption implies
\[
D_\theta(t)
\geq
D_\theta(0)-Ct,
\qquad
t\in[0,t_0],
\]
so Lemma~\ref{lem:nfmd_endpoint_general} applies. The unconstrained minimizer of
\[
\frac{\overline D_\theta}
     {\nu_{\min}h}
+
\frac{C}{2}h
\]
is
\[
h^\star
=
\sqrt{
\frac{2\overline D_\theta}
     {C\nu_{\min}}
}.
\]
The assumption
\[
\overline D_\theta
\leq
\frac{C\nu_{\min}t_0^2}{2}
\]
ensures $h^\star\leq t_0$. Substituting $h^\star$ into
Lemma~\ref{lem:nfmd_endpoint_general} yields
\[
D_\theta(0)
\leq
\sqrt{
\frac{2C}{\nu_{\min}}
\overline D_\theta
}.
\]
Finally, using
\[
D_\theta(0)
=
\KL\left(
p_0
\,\middle\|\,
p^\theta_{1\to0}
\right),
\qquad
\overline D_\theta
=
\E_{\gT_{\mathrm S}}
\left[
\KL\left(
p_t
\,\middle\|\,
p^\theta_{1\to t}
\right)
\right],
\]
gives the claimed bound.
\end{proof}


\xhdr{Two-time extension of \nfmd}

For an intermediate source time $s$, the marginal density $p_s$ is
generally intractable. We therefore approximate it using a frozen
single-time \nfmd model,
\[
\bar p_s
\coloneqq
p^{\bar\theta}_{1\to s},
\]
where $\bar\theta$ is held fixed while optimizing the two-time map
$f^\theta_{s,t}$. The corresponding surrogate pushforward is
\[
p^{\theta, \bar \theta}_{s\to t}
\coloneqq
(f^\theta_{s,t})_\#\bar p_s.
\]

For $x_t\sim p_t$, define
\[
\widetilde x_s
\coloneqq
(f^\theta_{s,t})^{-1}(x_t).
\]
By change of variables,
\begin{equation}
\label{eq:nfmd_two_time_density}
p^{\theta, \bar \theta}_{s\to t}(x_t)
=
\bar p_s(\widetilde x_s)
\left|
\det
\frac{\partial(f^\theta_{s,t})^{-1}(x_t)}
     {\partial x_t}
\right|.
\end{equation}
Since
\[
\bar p_s
=
(f^{\bar\theta}_{1,s})_\#p_1,
\]
we can further write
\[
\bar p_s(\widetilde x_s)
=
p_1
\left(
(f^{\bar\theta}_{1,s})^{-1}(\widetilde x_s)
\right)
\left|
\det
\frac{\partial(f^{\bar\theta}_{1,s})^{-1}(x_s)}
     {\partial x_s}
\bigg|_{x_s=\widetilde x_s}
\right|.
\]
Defining
\[
\Phi
\coloneqq
(f^{\bar\theta}_{1,s})^{-1}
\circ
(f^\theta_{s,t})^{-1},
\]
we therefore obtain
\begin{align}
\log p^{\theta, \bar \theta}_{s\to t}(x_t)
=
&
\log p_1\!\left(\Phi(x_t)\right)
\nonumber\\
&
+
\left.
\log
\left|
\det
\frac{\partial(f^{\bar\theta}_{1,s})^{-1}(x_s)}
     {\partial x_s}
\right|_{x_s=\widetilde x_s}
\right|
\nonumber\\
&
+
\log
\left|
\det
\frac{\partial(f^\theta_{s,t})^{-1}(x_t)}
     {\partial x_t}
\right|.
\end{align}

Since $p_1=\mathcal N(0,I)$, the corresponding negative log-likelihood,
up to constants independent of $\theta$, is
\begin{align}
\mathcal L_{\rm D, 2}
=
\mathbb E_{s,t,x_t}
\Bigg[
&
\frac{1}{2}
\left\|
\Phi(x_t)
\right\|^2
\nonumber\\
&
-
\left.
\log
\left|
\det
\frac{\partial(f^{\bar\theta}_{1,s})^{-1}(x_s)}
     {\partial x_s}
\right|_{x_s=\widetilde x_s}
\right|
\nonumber\\
&
-
\log
\left|
\det
\frac{\partial(f^\theta_{s,t})^{-1}(x_t)}
     {\partial x_t}
\right|
\Bigg].
\label{eq:nfmd_two_time}
\end{align}

If the frozen single-time model is exact, such that
\[
\bar p_s=p_s,
\]
then the surrogate pushforward coincides with the ideal pushforward
\[
p^\theta_{s\to t}
=
(f^\theta_{s,t})_\#p_s.
\]
In this case,
\begin{align}
\mathbb E_{x_t\sim p_t}
\left[
-\log p^\theta_{s\to t}(x_t)
\right]
&=
\KL\left(
p_t
\,\middle\|\,
p^\theta_{s\to t}
\right)
+
\mathcal H(p_t),
\end{align}
where $\mathcal H(p_t)$ is independent of $\theta$.

When $\bar p_s$ only approximates $p_s$, the frozen single-time model
instead supplies a tractable surrogate source density. Although
$\bar\theta$ is held fixed, gradients with respect to the input of
$(f^{\bar\theta}_{1,s})^{-1}$ are retained, allowing the loss to update
$f^\theta_{s,t}$ through the full composition.

In practice, the single- and two-time maps are evaluations of the same
time-conditioned architecture at different time pairs. We jointly
optimize
\begin{equation}
\mathcal L_{\mathrm{total}}(\theta;\bar\theta)
=
\mathcal L_{\mathrm D}(\theta)
+
\lambda
\mathcal L_{\rm D,2}(\theta;\bar\theta),
\end{equation}
where the single-time objective learns the marginal probability path
from the tractable reference $p_1$, while the two-time objective extends
this supervision to transport between intermediate times using the
tractable surrogate source density $\bar p_s$ supplied by the frozen
single-time model.

\clearpage
\section{Exact likelihood without Tweedie denoising}


In~\Cref{fig:mnist_mle_comparison}, we compare guided MNIST
samples from an iTARFlow trained via MLE without additive
Gaussian noise and an iTARFlow trained with Gaussian noise
augmentation and Tweedie denoising at inference time.
In this experiment, training without noise produces visibly
poor samples, despite retaining the flow's exact invertibility condition. Noise augmentation followed by denoising
substantially improves sample quality. However, the flow's tractable density describes its outputs
before denoising. The subsequent Tweedie transformation is
not guaranteed to be invertible, and its Jacobian determinant
is not accounted for by the flow's likelihood calculation.
Consequently, the likelihood of the denoised output
distribution is not provided by the original flow.
This comparison illustrates the practical difficulty, in our
setting, of obtaining both high-quality samples and tractable
exact likelihoods for the same output distribution.

\label{app:likelihood_tweedie}
\begin{figure}[h]
    \centering

    \begin{subfigure}[t]{0.32\linewidth}
        \centering
        \includegraphics[
            width=\linewidth
        ]{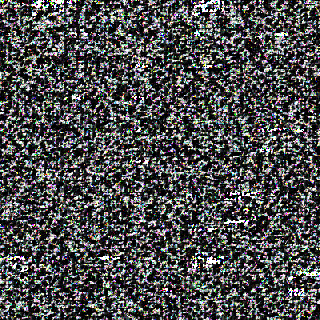}
        \caption{MLE without noise.}
        \label{fig:mnist_mle_no_noise}
    \end{subfigure}
    \hspace{0.04\linewidth}
    \begin{subfigure}[t]{0.32\linewidth}
        \centering
        \includegraphics[
            width=\linewidth
        ]{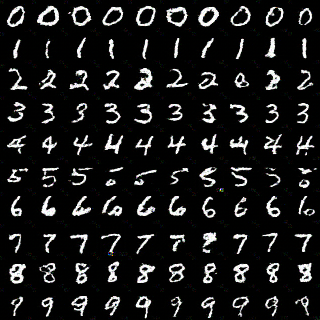}
        \caption{MLE with noise and Tweedie denoising.}
        \label{fig:mnist_mle_tweedie}
    \end{subfigure}

    \caption{Guided MNIST samples from MLE-trained iTARFlow.}
    \label{fig:mnist_mle_comparison}
\end{figure}


\clearpage
\section{Extended background on Boltzmann Generators}
\label{app:BGs}

Sampling from equilibrium distributions is a fundamental problem in statistical
physics~\citep{frenkel1997understanding, tuckerman2023statistical}, since equilibrium configurations provide access to thermodynamic
observables and characterize the macroscopic behavior of physical systems~\citep{pathria2017statistical}. For a configuration $x$ with energy $\mathcal{E}(x)$ at temperature
$T$, the equilibrium distribution is given by the Boltzmann distribution
\begin{equation}
    \mu_{\mathrm{target}}(x)
    =
    \frac{1}{\mathcal{Z}}
    \exp\left(-\beta \mathcal{E}(x)\right),
    \qquad
    \mathcal{Z}
    =
    \int
    \exp\left(-\beta \mathcal{E}(x)\right)\,dx,
    \qquad
    \beta = \frac{1}{k_B T},
\end{equation}
where $\mathcal{Z}$ is the partition function, which is generally intractable
for high-dimensional systems.

A particularly important instance of this problem is the sampling of molecular
configurations~\citep{zuckerman2011equilibrium}. Equilibrium ensembles are required to characterize
thermodynamic and conformational properties of molecular systems and are
relevant to applications including biomolecular modeling and drug discovery~\citep{frenkel1997understanding}. Sampling such systems is challenging because their energy landscapes are
typically high-dimensional and multimodal, with metastable regions separated by
large free-energy barriers. As a consequence, local sampling methods such as
molecular dynamics (MD)~\citep{alder1959studies} and Markov chain Monte Carlo can
exhibit long autocorrelation times and slow transitions between metastable
states~\citep{wirnsberger2020targeted, weinan2002string}.

Boltzmann Generators (BGs)~\citep{noe2019boltzmann} address this challenge by learning a tractable
generative model that approximates the target Boltzmann distribution. In
particular, the model defines a distribution $p_\theta(x)$ such that
\begin{equation}
    p^\theta(x) \approx \mu_{\mathrm{target}}(x),
\end{equation}
while providing both a mechanism for efficiently drawing samples
$x \sim p^\theta$ and a tractable evaluation of their density
$p^\theta(x)$. The former enables direct generation of global proposals in
configuration space, while the latter enables the use of Monte Carlo correction
methods to account for discrepancies between the learned proposal distribution
and the target distribution.

One such correction is self-normalized importance sampling (SNIS)~\citep{liu2001monte}. For an
observable $\phi(x)$, its expectation under the target distribution can be
written as
\begin{equation}
    \mathbb{E}_{\mu_{\text{target}}}[\phi(x)]
    =
    \frac{
        \mathbb{E}_{p^\theta}
        \left[
            w(x)\phi(x)
        \right]
    }{
        \mathbb{E}_{p^\theta}
        \left[
            w(x)
        \right]
    },
    \qquad
    w(x)
    =
    \frac{
        \exp\left(-\beta \mathcal{E}(x)\right)
    }{
        p^\theta(x)
    }.
\end{equation}
The unknown partition function $\mathcal{Z}$ cancels in the ratio. Given
$K$ independent samples $x^1,\ldots,x^K \sim p^\theta$, the expectation can be
approximated as
\begin{equation}
    \mathbb{E}_{\mu_{\text{target}}}[\phi(x)]
    \approx
    \sum_{i=1}^{K}
    \bar{w}^i \phi(x_i),
    \qquad
    \bar{w}^i
    =
    \frac{
        w(x^i)
    }{
        \sum_{j=1}^{K} w(x^j)
    }.
\end{equation}

Beyond importance sampling, the learned model can also be combined with other
Monte Carlo correction schemes. In particular, sequential Monte Carlo methods~\citep{tan2025scalable}
can use the generative model as a proposal while progressively bridging between
the learned and target distributions through intermediate distributions and
resampling steps. We refer to the appendix of~\citep{tan2025amortized} for further discussion.

\subsection{Baselines}
\label{app:BGs_baselines}
We compare \nameshort against a broad set of baselines covering both
single-system and transferable molecular generative modeling. In the
single-peptide setting, we include equivariant CNFs~\citep{klein2023equivariant, klein2024transferable} (ECNF++~\citep{tan2025scalable}),
RegFlow~\citep{regflow}, flow map based variants, FALCON~\citep{rehman2026falcon}, F2D2~\citep{ai2026joint}, and
SCALLOP~\citep{scallop_2026}. We further compare against models trained by maximum likelihood,
including SBG~\citep{tan2025scalable} and ArBG~\citep{rehman2026autoregressive}. In the transferable setting, we benchmark against Timewarp~\citep{klein2023timewarp}, BioEmu~\citep{lewis2025scalable}, UniSim~\citep{yu2025unisim}, TarFlow~\citep{zhai2025normalizing, tan2025amortized}, Prose~\citep{tan2025amortized}, and Robin~\citep{rehman2026autoregressive}.

\subsection{Metrics}
\label{app:BGs_metrics}
We evaluate \nameshort using three complementary metrics that compare generated samples against reference samples obtained from MD. The energy-based metric probes local geometric fidelity, while the torus- and TICA-based metrics assess larger-scale conformational structure and global distributional agreement.

\xhdr{Energy 2-Wasserstein Distance ($\mathcal{E}-\mathcal{W}_2$)}
To compare the energy distributions induced by generated and reference
configurations, we compute the 2-Wasserstein distance between their respective
energy marginals. Let $\rho,\nu \in \mathcal{P}(\mathbb{R})$ denote the
distributions of energy values under the generated and reference ensembles,
respectively, and let $\Pi(\rho,\nu)$ denote the set of couplings with marginals
$\rho$ and $\nu$. We define
\begin{equation}
    \mathcal{E}\text{-}\mathcal{W}_2(\rho,\nu)
    :=
    \left(
    \inf_{\pi \in \Pi(\rho,\nu)}
    \int_{\mathbb{R}\times\mathbb{R}}
    |u-v|^2 \, d\pi(u,v)
    \right)^{1/2}.
\end{equation}
This metric measures discrepancies in the distribution of molecular energies
and is therefore sensitive to local structural distortions that affect the
potential energy, such as changes in bond lengths, bond angles, or short-range
interactions.

\xhdr{Torus 2-Wasserstein Distance ($\mathbb{T}$-$\mathcal{W}_2$)}
To compare the global conformational structure of generated and reference
ensembles, we measure discrepancies in the space of backbone dihedral angles.
For a peptide with $L$ residues, we represent a configuration $x$ by its vector of dihedral angles
\begin{equation}
    \text{Dihedrals}(x)
    =
    \big(
        \phi_1,\psi_1,\ldots,\phi_{L-1},\psi_{L-1}
    \big),
\end{equation}
where $\phi_i$ and $\psi_i$ denote the backbone torsion angles. Since these
variables are periodic, Euclidean distances between angles are not appropriate.
We therefore define the squared distance between two configurations using the
shortest angular displacement on the torus,
\begin{equation}
    c_{\mathbb{T}}(x,y)^2
    =
    \sum_{i=1}^{2(L-1)}
    \left[
        \big(\text{Dihedrals}(x)_i-\text{Dihedrals}(y)_i+\pi\big)\bmod 2\pi-\pi
    \right]^2.
\end{equation}

Let $\rho$ and $\nu$ denote the distributions of backbone dihedral angles
induced by the generated and reference ensembles, respectively. Their torus
2-Wasserstein distance is defined as
\begin{equation}
    \mathbb{T}\text{-}\mathcal{W}_2(\rho,\nu)
    =
    \left(
    \inf_{\pi\in\Pi(\rho,\nu)}
    \int
    c_{\mathbb{T}}(x,y)^2
    \,d\pi(x,y)
    \right)^{1/2}.
\end{equation}
By comparing configurations in torsional space, this metric captures differences in large-scale conformational structure and is particularly informative about mode coverage of the generated
ensemble.

\xhdr{TICA 2-Wasserstein Distance ($\mathrm{TICA}$-$\mathcal{W}_2$)}
To compare the generated and reference ensembles along their slow collective
coordinates, we project configurations onto the leading components obtained by
time-lagged independent component analysis (TICA). TICA identifies
directions of maximal time-lagged autocorrelation from a reference trajectory.

Let $w_1,w_2$ denote the first two TICA projection vectors. We define the
squared distance between two configurations in TICA space as
\begin{equation}
    c_{\mathrm{TICA}}(x,y)^2
    =
    \sum_{j=1}^{2}
    \left(
        w_j^\top x - w_j^\top y
    \right)^2.
\end{equation}
The corresponding 2-Wasserstein distance between the generated and reference
distributions $\rho$ and $\nu$ is
\begin{equation}
    \mathrm{TICA}\text{-}\mathcal{W}_2(\rho,\nu)
    =
    \left(
    \inf_{\pi\in\Pi(\rho,\nu)}
    \int
    c_{\mathrm{TICA}}(x,y)^2
    \,d\pi(x,y)
    \right)^{1/2}.
\end{equation}
This metric measures agreement in the slow conformational subspace and is
therefore informative about large-scale metastable structure.
\clearpage
\section{Implementation details}
\label{app:c}

\subsection{Transformer Autoregressive Flows}
\label{app:tarflows}

\nameshorts are compatible with any NF architecture, however, we focus on Transformer Autoregressive Flows (TARFlows) as they have been established as the state-of-the-art NF architectue in terms of sample fidelity and scalability. TARFlows construct exactly-invertible networks by composing affine transformations parametrized by causal transformers. Except for the first block, each block reverses the sequence, alternating the autoregressive direction. In the interest of concision we omit these permutations below; within block $b$, entry $z_j$'s updated value is given by:
\begin{equation}
z^{b+1}_j = (z^{b}_j-\mu^{b}_{\theta}(z^{b}_{<j}))\odot\exp(-\alpha^{b}_{\theta}(z^{b}_{<j}))
\end{equation}
where $\mu^{b}_{\theta}(z^{b}_j)$ and $\alpha^{b}_{\theta}(z^{b}_{<j})$ are obtained by processing the sequence elements proceeding $z_j$ with a causal transformer. It follows immediately that this transformation may be reversed by autoregressively rolling out,
\begin{equation}
    z^{b}_{j}=z^{b+1}_j\odot\exp(\alpha^{b}_{\theta}(z^{b}_{<j}))+\mu^{b}_{\theta}(z^{b}_{<j})
\end{equation}

\subsection{NFM Architecture}
\label{app:architecture}
Both \nfmd and \nfmr adopt modified versions of the iTARFlow architecture~\citep{lu2026bidirectional}. \nfmd introduces time-conditioning via a sinusoidal timestep embedding module and injects this conditioning along with the class embedding into both the attention and the adaptive layernorm~\citep{peebles2023scalable} modules within each block. 

\nfmr adapts iTARFlow to more closely follow the iMF architecture. The major changes are completely eliminating adaptive layer norm modules in favor of in-context conditioning and 
adopting general-purpose Transformer improvements: SwiGLU~\citep{shazeer2020glu}, RMSnorm~\citep{zhang2019root}, RoPE~\citep{su2024roformer} and QK-normalization.

\xhdr{In-context conditioning}
Following iMF we abandon adaLN-zero~\citep{peebles2023scalable} in favor of in-context conditioning with \emph{multiple} tokens for each condition. This change is particularly helpful given the need for our image models to condition on class, two timesteps and several CFG related conditions (c.f.~\S~\ref{app:imf_deriv_and_eulerian_connection}), something which is better handled by introducing all conditioning into attention than simply summing their embeddings in an adaLN-zero module. Furthermore, adaLN-zero is known to be parameter-heavy~\citep{geng2026improved,lu2026bidirectional}; so this change potentially enables \nfmr to be more parameter efficient. Key to making in-context conditioning competitive with adaLN-zero is the use of multiple tokens per condition. Given a condition $\alpha\in\mathbb{R}$, we jointly learn its embedding map $g_{\theta}:\mathbb{R}\mapsto \mathbb{R}^{n}$ and $m$ embedding offsets $\{\beta_i\}^{m}_{i=1},\beta_i\in\mathbb{R}^n$. Then $\alpha$'s in-context conditioning tokens are given by

\begin{equation}
    \{g_{\theta}(\alpha)+\beta_1, \dots, g_{\theta}(\alpha)+\beta_m\}
\end{equation}

\subsection{Algorithmic aspects}
\label{app:algorithmic_aspects}
\looseness=-1
\subsubsection{Improved Mean Flow objective}
The Mean Flow~\citep{geng2025mean}, improved Mean Flow~\citep{geng2026improved} and pixel Mean Flow~\citep{lu2026onestep} works all derive their objectives from the Mean Flow identity:
\begin{equation}
\label{eq:mean_flow_identity}
v_{s,t}(x_s)=v_{s}(x_s)+(t-s)\frac{d}{ds}v_{s,t}(x_s),    
\end{equation}
where $v_{s,t}(x_s)$ has the usual definition of average velocity $v_{s,t}(x_s)=\frac{1}{t-s}\int_{s}^{t}v_\tau(x_\tau) d\tau$. The identity can be derived by beginning with this definition and applying standard rules of Calculus,
\begin{align}
    &(t-s)v_{s,t}(x_s) = \int_{s}^{t}v_\tau(x_\tau) d\tau\\
    &\frac{d}{ds}(t-s)v_{s,t}(x_s)=\frac{d}{ds}\int_{s}^{t}v_\tau(x_\tau) d\tau\\
    &-v_{s,t}(x_s)+(t-s)\frac{d}{ds}v_{s,t}(x_s)=-v_s(x_s)\\
    &v_{s,t}(x_s)=v_{s}(x_s)+(t-s)\frac{d}{ds}v_{s,t}(x_s)
\end{align}
Expanding $\frac{d}{ds}v_{s,t}(x_s)$ we obtain,
\begin{equation}
    \label{eq:expanded_mean_flow_identity}
    v_{s,t}(x_s)=v_{s}(x_s)+(t-s)(\partial_sv_{s,t}(x_s)+\nabla v_{s,t}(x_s)v_{s}(x_s))
\end{equation}
Equation~\ref{eq:expanded_mean_flow_identity} suggests that we may learn $v_{s,t}(x_s)$ by using stopgrads and matching the RHS. However, when training from scratch we do not have access to the underlying $v_s(x_s)$. In improved Mean Flow the outer $v_s(x_s)$ is replaced by the \emph{sample-conditional} instantaneous velocity\footnote{We adopt a linear interpolant in this work but this objective supports any stochastic interpolant.} and the inner $v_s(x_s)$ by $v^{\theta}_{s,s}(x)$ following the boundary condition,
\begin{equation}
\mathcal{L}_{\rm{iMF}}=\mathbb{E}_{s,t,x_0,x_1}\left[\lVert(x_1-x_0)+\mathrm{sg}((t-s)(\partial_s v^{\theta}_{s,t}(x_s)+\nabla v^{\theta}_{s,t}(x_s)v^{\theta}_{s,s}(x_s))) -v^{\theta}_{s,t}(x_s)\rVert^{2}\right]
\end{equation}

\looseness=-1
\subsubsection{Conditioning and guidance} \nameshorts can be made conditional by incorporating additional arguments (e.g., class label). \nfmd follows TARFlow-style~\citep{zhai2025normalizing} guidance; training with conditioning-dropout, and at inference performs \emph{online} guidance \emph{directly on the sample} (\S~\ref{app:nfmd_guidance}). In contrast, \nfmr utilizes iMF-style guidance wherein the average velocity $v_{s,t}$ is conditional on the classifier-free guidance (CFG) interval~\citep{ho2022classifier,kynkaanniemi2024applying} $[t_{\rm{min}},t_{\rm{max}}]$ and guidance strength $\omega$. For training, this entails adjusting $\mathcal{L}_R$ from \S\ref{sec:distillation_losses}  with guidance of varying $\omega$, $t_{\rm{min}}$ and $t_{\rm{max}}$ (\S~\ref{app:nfmr_guidance}). At inference time, this enables few-step sampling with flexible CFG settings at no additional cost.

\xhdr{NFM-R guidance}
\label{app:nfmr_guidance}
\nfmr utilizes the flexible guidance introduced in iMF. The original Mean Flow~\citep{geng2025mean} derived a relationship between the average velocity and the guided instantaneous velocity typically employed in diffusion and flow matching work~\citep{ho2022classifier}. We begin by defining the guided instantaneous velocity for fixed $\omega$:
\begin{equation}
    v_t^{\rm{cfg}}(x_t|c)\coloneqq \omega v_t(x_t|c) + (1-\omega)v^{u}(x_t,t),
\end{equation}
which is a linear combination of class-conditional and class-unconditional instantaneous velocities:
\begin{equation}
    v_t(x_t|c)\coloneqq\mathbb{E}_{p(x_0,x_1|x_t,c)}[x_1-x_0]\text{ and } v^{u}_t(x_t)\coloneqq \mathbb{E}_c[v_t(x_t|c)]
\end{equation}
Next, from the MeanFlow Identity (Equation~\ref{eq:mean_flow_identity}), it follows that the guided average velocity satisfies,
\begin{equation}
    \label{eq:guided_mean_flow_identity}
    v^{\rm{cfg}}_{s,t}(x_s|c)=v^{\rm{cfg}}_s(x_s|c)+(t-s)\frac{d}{ds}v^{\rm{cfg}}_{s,t}(x_s|c) 
\end{equation}
Hence, it follows that $v_{s,s}^{\rm{cfg}}(x_s|c)=v^{\rm{cfg}}_{s}(x_s|c)$. Next, define $v^{\rm{cfg}}_{s,t}(x_s)\coloneqq\mathbb{E}_c[v^{\rm{cfg}}_{s,t}(x_s|c)]$. Then by leveraging $v^{u}_s(x_s)=\mathbb{E}_c[v^{\rm{cfg}}_s(x_s|c)]$, we have that $v^{\rm{cfg}}_{s,s}(x_s)=v^{u}_{s}(x_s)$. Therefore,

\begin{align}
    &v^{\rm{cfg}}_s(x_s|c)-v^{\rm{cfg}}_{s,s}(x_s|c)=v^{u}_{s}(x_s)-v^{\rm{cfg}}_{s,s}(x_s)\\
    &v^{\rm{cfg}}_{s}(x_s|c)-v^{u}_{s}(x_s)=v^{\rm{cfg}}_{s,s}(x_s|c)-v^{\rm{cfg}}_{s,s}(x_s)\\
    &\omega(v_s(x_s|c)-v^{u}_s(x_s,s))=v^{\rm{cfg}}_{s,s}(x_s|c)-v^{\rm{cfg}}_{s,s}(x_s)\\
    &v_s(x_s|c)-v^{u}_s(x_s,s)=\frac{1}{\omega}(v^{\rm{cfg}}_{s,s}(x_s|c)-v^{\rm{cfg}}_{s,s}(x_s))\\
    &v^{u}_t(x_s,s)=v_s(x_s|c)-\frac{1}{\omega}(v^{\rm{cfg}}_{s,s}(x_s|c)-v^{\rm{cfg}}_{s,s}(x_s))\\
    &v^{u}_s(x_s,s)+(v^{\rm{cfg}}_{s,s}(x_s|c)-v^{\rm{cfg}}_{s,s}(x_s))=v_s(x_s|c)+(1-\frac{1}{\omega})(v^{\rm{cfg}}_{s,s}(x_s|c)-v^{\rm{cfg}}_{s,s}(x_s))\\
    &v^{u}_s(x_s,s)+(v^{\rm{cfg}}_{s}(x_s|c)-v^{u}_s(x_s,s))=v_t(x_s|c)+(1-\frac{1}{\omega})(v^{\rm{cfg}}_{s,s}(x_s|c)-v^{\rm{cfg}}_{s,s}(x_s))\\
    \label{eq:v_cfg_in_terms_of_v_avg}
    &v^{\rm{cfg}}_s(x_s|c)=v_s(x_s|c)+(1-\frac{1}{\omega})(v^{\rm{cfg}}_{s,s}(x_s|c)-v^{\rm{cfg}}_{s,s}(x_s))
\end{align}
Equation~\ref{eq:v_cfg_in_terms_of_v_avg} allows the guided \emph{marginal} instantaneous velocity to be targeted by utilizing the model's own current estimates of $v^{\rm{cfg}}_{s,s}(x_s|c)$ and $v^{\rm{cfg}}_{s,s}(x_s)$, where $v^{\rm{cfg}}_{s,s}(x_s)\approx v^{\theta}_{s,s}(x_s|\emptyset)$.  Furthermore, it can be beneficial to only apply guidance within a specific time interval $[s_{\rm{min}}, s_{\rm{max}}]$~\citep{kynkaanniemi2024applying},
\begin{equation}
    v_s^{\rm{cfg}}(x_s|c) = 
    \begin{cases}
    v_s(x_s|c)+(1-\frac{1}{\omega})(v^{\rm{cfg}}_{s,s}(x_s|c)-v^{\rm{cfg}}_{s,s}(x_s)), & s\in[s_{\rm{min}},s_{\rm{max}}]\\
    v_{s}(x_s|c),&\text{otherwise}.
    \end{cases}
\end{equation}

In order to enable flexible guidance at inference time, iMF makes the network conditional on $\omega$, $s_{\rm{min}}$ and $s_{\rm{max}}$. During training $\omega$ is sampled from a power distribution $\omega\sim p(\omega)\propto \omega^{-\beta}$, $s_{\rm{min}}\sim \mathcal{U}[0.0, 0.5]$ and $s_{\rm{max}}\sim\mathcal{U}[0.5,1.0]$.  For brevity we use $\Omega\coloneqq\{\omega, s_{\rm{min}}, s_{\rm{max}}\}$ to denote all the cfg conditioning. From Equations~\ref{eq:guided_mean_flow_identity} and \ref{eq:v_cfg_in_terms_of_v_avg} we can derive
\begin{align}
    &v^{\rm{cfg}}_{s}(x_s|c,\Omega)-v^{\rm{cfg}}_{s,t}(x_s|c,\Omega)+(t-s)\frac{d}{ds}v^{\rm{cfg}}_{s,t}(x_s|c,\Omega)=0,\\
    &v_s(x_s|c,\Omega)+(1-\frac{1}{\omega})(v^{\rm{cfg}}_{s,s}(x_s|c,\Omega)-v^{\rm{cfg}}_{s,s}(x_s))-v^{\rm{cfg}}_{s,t}(x_s|c,\Omega)+(t-s)\frac{d}{ds}v^{\rm{cfg}}_{s,t}(x_s|c,\Omega)=0.
\end{align}
Replacing $v_s(x_s|c,\Omega)$ with the \emph{sample-conditional} instantaneous velocity and $v_{s,s}(x_s|c,\Omega)$ as in Appendix~\ref{app:imf_deriv_and_eulerian_connection} and applying stopgrads we obtain the guided objective,
\begin{align}
\mathcal{L}^{\rm{cfg}}_{\rm{iMF}}=\mathbb{E}_{s,t,x_0, x_1, c,\Omega}[\lVert(x_1-x_0)&+(1-\frac{1}{\omega}){\rm{sg}}((v^{\theta}_{s,s}(x_s|c,\Omega)-v^{\theta}_{s,s}(x_s))\\
&+{\rm{sg}}((t-s)\partial_s v^{\theta}_{s,t}(x_s|c,\Omega)\\
&+(t-s)\nabla v^{\theta}_{s,t}(x_s|c,\Omega)v^{\theta}_{s,s}(x_s|c,\Omega))\\
&-v^{\theta}_{s,t}(x_s|c,\Omega)\rVert^2]
\end{align}

\xhdr{NFM-D Guidance}
\label{app:nfmd_guidance}
\nfmd utilizes the guidance introduced in TARFlow where the parameters of each affine transformation are a linear combination of guided and unguided $\mu$ and $\alpha$. Given conditioning $c$, null conditioning $\emptyset$ and guidance strength $w$, the guided $\tilde{z}^{t}_{i}$ (block $t$, sequence position $i$) is computed via
\begin{equation}
    \tilde{z}_{j}^{b}= z_j^{b+1}\odot\exp(\tilde{\alpha}^{b}_{j}(\tilde{z}^{b}_{<j};c,w))+\tilde{\mu}_j^{b}(\tilde{z}^b_{<j};c,w).
\end{equation}
Where the guided affine parameters $\tilde{\mu}_{i}^t$ and $\tilde{\alpha}_i^t$ are given by, 
\begin{align}
    &\tilde{\mu}^{b}_{j}(\tilde{z}^{b}_{<j};c,w)=(1+w)\mu^{b}_j(\tilde{z}^{t}_{<j};c)-w\mu_{j}^{b}(\tilde{z}^{b}_{<j};\emptyset),\\
    &\tilde{\alpha}^{b}_j(\tilde{z}^{b}_{<j};c,w)=(1+w)\alpha^b_{j}(\tilde{z}^{b}_{<j};c) - w\alpha^{b}_{j}(\tilde{z}^{b}_{<j};\emptyset).
\end{align}

\subsubsection{Numerical invertibility}
\looseness=-1
\xhdr{Noise annealing and \texttt{FP64} inference}
Theoretically, \nfmr should be exactly invertible; however, in practice, its regression objective combined with the denoiser parametrization can lead to a loss of \emph{numerical} invertibility. As a mitigation strategy, we apply additive Gaussian noise to $x_0$ before computing the interpolant and anneal its contribution to zero over the course of training (see \S\ref{subsec:pseudocode}). Through this strategy, it is possible to recover numerical invertibility in \texttt{FP64} at inference. This strategy enabled the \nfmr \cifar to be finetuned with REINFORCE to preferentially produce green frogs (\S~\ref{app:rl_results}). 

\cut{
\subsubsection{\nameshort parametrization (C): Gated}
\label{app:gated_param}
When the base NF is composed of affine coupling transformations as in TARFlows, the identity condition can be enforced while retaining invertibility for every time pair through gating with $(s-t)$, 

\begin{equation}
\label{eq:nfm_gated_param}
(\mathrm{C}):\qquad
\left[f_{s,t}^{\theta,b}\!\left(z^b\right)\right]_j
=
\left(
z_j^b
-
(s-t)\mu_\theta^b\!\left(z_{<j}^b; s,t\right)
\right)
\odot
\exp\!\left(
-(s-t)\alpha_\theta^b\!\left(z_{<j}^b; s,t\right)
\right).
\end{equation}
\looseness=-1
We term this the \emph{gated dual-time NFM}, as each layer $i$ reduces to the identity at $s=t$ while remaining explicitly invertible.
}
\subsection{Pseudocode}
\label{subsec:pseudocode}
\definecolor{nfmCodeText}{RGB}{25,35,65}
\definecolor{nfmCodeComment}{RGB}{95,155,170}
\definecolor{nfmCodeAccent}{RGB}{245,45,140}
\definecolor{nfmCodeOperator}{RGB}{80,45,255}

\lstdefinestyle{nfmPython}{
    language=Python,
    basicstyle=\ttfamily\footnotesize\color{nfmCodeText},
    keywordstyle=\color{nfmCodeAccent},
    commentstyle=\color{nfmCodeComment},
    stringstyle=\color{nfmCodeAccent},
    morekeywords={sample_reflow_batch,sample_times,sample_cfg,
                  randn_like,jvp,stopgrad,reweight,auxloss, freeze, gamma, aux_perceptual_losses},
    columns=fullflexible,
    keepspaces=true,
    showstringspaces=false,
    numbers=none,
    frame=none,
    breaklines=true,
    tabsize=4,
    aboveskip=0.3em,
    belowskip=0.3em,
    xleftmargin=0.3em,
    literate=*{+}{{{\color{nfmCodeOperator}+}}}1
              {-}{{{\color{nfmCodeOperator}-}}}1
              {*}{{{\color{nfmCodeOperator}*}}}1
              {/}{{{\color{nfmCodeOperator}/}}}1
              {=}{{{\color{nfmCodeOperator}=}}}1
}

\begin{algorithm}[h!]
    \caption[NFM-R: training loss]{NFM-R: training loss.\newline
        \normalfont\footnotesize
        Note: \texttt{jvp} returns the function output and its JVP.}
    \label{alg:nfmr_training}

\begin{lstlisting}[style=nfmPython]
# net: denoiser NF network
# x,c: training and condition batch


# ReFlow-paired data x0 and noise x1
x0, x1, c = sample_reflow_batch()
sigma = sigma_init
if anneal:
    frac = min((step // steps_per_epoch) / anneal_epochs, 1)
    sigma = (1 - frac) * (sigma_init - sigma_min) + sigma_min
x0 = x0 + sigma * randn_like(x0)

s, t, diag = sample_times()
w, a, b = sample_cfg(diag)
z = (1 - s) * x0 + s * x1
v_pair = x1 - x0

# Average velocity from data prediction
def v_fn(z, s, t, w, c, a, b):
    return (z - net(z, s, t, w, c, a, b)) / s

# Unfiltered and interval-filtered instantaneous velocities
v_c = v_fn(z, s, s, w, c, a, b)
v_u = v_fn(z, s, s, w, None, 0, 1)
w_s = where((a <= s) & (s <= b), w, 1)
v_s = v_fn(z, s, s, w_s, c, 0, 1)
v_target = where(diag, v_c, v_s)
v_g = v_pair + (1 - 1 / w) * (v_target - v_u)

# Class dropout; retain the conditional JVP tangent
n_drop = sum(bernoulli(p_drop, len(c)))
drop = arange(len(c)) < n_drop
c_train = where(drop, null, c)
v_g = where(drop, v_pair, v_g)

# JVP in z and s; hold t, w, c_train, a, b fixed
v, dv_ds = jvp(v_fn, (z, s, t, w, c_train, a, b),(v_s, 1, 0, 0, 0, 0, 0))
V = v + (s - t) * stopgrad(dv_ds)
loss =  metric(V, stopgrad(v_g)) + aux_perceptual_losses(z,s,s,w,c,a,b)

\end{lstlisting}
\end{algorithm}

\begin{algorithm}[t]
    \caption[One-time NFM-D: training loss]{One-time NFM-D: training loss.\newline
        \normalfont\footnotesize
        Note: the encoding destination is fixed at $s=1$ (noise).}
    \label{alg:nfmd_one_time_training}

\begin{lstlisting}[style=nfmPython,
    morekeywords={sample_batch,uniform,bernoulli,mean}]
# net: invertible nf.
# x,c: training and condition batch

# Data batch and independent Gaussian noise
x0, c = sample_batch()
x1 = randn_like(x0)
t = uniform(0, t_max, len(c))
x_t = (1 - t) * x0 + t * x1

# Label dropout for classifier-free guidance
mask = bernoulli(p_drop, len(c))
c = where(mask, null, c)

# Map the interpolant to the noise distribution at s=1.
z, logdet = net.invert(x_t, 1, t, c)

# Gaussian prior term and log-Jacobian correction
loss = 0.5 * mean(z**2) - mean(logdet)
\end{lstlisting}
\end{algorithm}

\cut{
\begin{algorithm}[t]
    \caption[Two-time NFM-D: training loss]{Two-time NFM-D: training loss.}
    \label{alg:nfmd_two_time_training}

\begin{lstlisting}[style=nfmPython,
    morekeywords={sample_batch,uniform,bernoulli,mean}]
# net: invertible nf.
# x,c: training and condition batch
# gamma: one-time loss weighting

# Data batch and independent Gaussian noise
x0, c = sample_batch()
x1 = randn_like(x0)
s,t = sample_times()
x_s = (1 - s) * x0 + s * x1

# Label dropout for classifier-free guidance
mask = bernoulli(p_drop, len(c))
c = where(mask, null, c)


#Map from s to 1:
z, log_det_s_1 = net.invert(x_s, 1, s, c)
one_time_loss = 0.5 * mean(z**2) - mean(log_det_s_1)

#Map from s to t:
pred_x_t, log_det_s_t = net(x_s, s, t, c)

#Freeze and map t to 1:
frozen_net = freeze(net)
y, log_det_t_1 = frozen_net.invert(pred_x_t, 1, t, c)
two_time_loss = 0.5 * mean(y**2) - mean(log_det_t_1) - mean(log_det_s_t)

loss = gamma*one_time_loss + two_time_loss
\end{lstlisting}
\end{algorithm}
}

\begin{algorithm}[t]
    \caption[Two-time NFM-D: training loss]{Two-time NFM-D: training loss.}
    \label{alg:nfmd_two_time_training}

\begin{lstlisting}[style=nfmPython,
    morekeywords={sample_batch,uniform,bernoulli,mean}]
# net: invertible nf.
# x,c: training and condition batch
# gamma: one-time loss weighting

# Data batch and label dropout for classifier-free guidance
x0, c = sample_batch()
mask = bernoulli(p_drop, len(c))
c = where(mask, null, c)

# Direct objective
u = uniform(0, 1, len(c))
x_u = (1 - u) * x0 + u * randn_like(x0)
z, logdet_u1 = net.invert(x_u, 1, u, c, params=theta)
one_time_loss = 0.5 * mean(z**2) - mean(logdet_u1)

# Composed objective
s,t = sample_times() # s < t
x_s = (1 - s) * x0 + s * randn_like(x0)
x_t, logdet_st = net.invert(x_s, t, s, c, params=theta)

# Detach current parameters; retain input gradients.
theta_ref = stopgrad_parameters(theta)
z, logdet_t1 = net.invert(x_t, 1, t, c, params=theta_ref)
two_time_loss = (
    0.5 * mean(z**2) - mean(logdet_st) - mean(logdet_t1)
)

loss = 0.5 * one_time_loss + 0.5 * two_time_loss
\end{lstlisting}
\end{algorithm}

\clearpage

\subsection{Reinforcement Learning Finetuning}
\label{app:rl_finetuning}
We perform \emph{on-policy} reinforcement learning finetuning for both \nfmd and \nfmr. When \nameshort models are numerically invertible, we are able to train \emph{without} reward gradients and with an exact KL-penalty computed against the untuned model. This is achieved via a KL-penalized REINFORCE with a leave-one-out baseline (RLOO), thus relying only on $\nabla_{\theta}\log p_{1\to 0}^{\theta}(x)$:
\begin{align}
    \label{eq:rloo_reinforce}
    &J^{\rm{RLOO}}(\theta)=\frac{1}{N}\sum_{i=1}^{N} \log p_{1\to0}^{\theta}(x_i)\rm{sg}(A_i)\\
    &\text{with } A_i = H_i - \frac{1}{n-1}\sum_{j\neq i}H_j\\
    &\text{and } H_i = R(x_i) - \beta(\log p_{1\to 0}^{\theta}(x_i)-\log p^{\theta}_{\mathrm{ref}}(x_i)),\,\beta>0.
\end{align}

When \nfmr becomes numerically non-invertible, the use of the score and KL-penalties becomes infeasible. Instead, we utilize reward gradients and replace the KL-constraint with maximum entropy regularization,
\begin{align}
    &J^{\rm{max ent}}(\theta)=\frac{1}{N}\sum_{i=1}^{N}R(x_i)-\beta\log p_{1\to0}^{\theta}(x_i),\,\beta>0\\
    &\nabla_{\theta}J^{\rm{maxent}}(\theta)=\frac{1}{N}\sum_{i=1}^{N}\nabla_{\theta}[R(x_i)-\beta\log p_{1\to 0}^{\theta}(x_i)]
\end{align}\

\xhdr{CIFAR-10}
For our \cifar RL experiments we train \nfmr and \nfmd to preferentially produce green frogs. We use the $J^{\rm{RLOO}}(\theta)$ objective to train the final bias in each iTARFlow block for 5 steps. Noise-annealed \nfmr uses learning rate $2\times 10^{-2}$, $\beta=1\times 10^{-6}$ and \texttt{FP64} precision and trains without guidance. \nfmd uses learning rate $1\times 10^{-2}$, $\beta=1$, \texttt{FP32} precision and optimizes the guided policy directly. The reward used to incentivize green frogs is:
\begin{equation}
    r_{\rm{green}}(x)=\rm{sigmoid}\left(\frac{g_{\mathrm{score}}(x)-\mu_g}{\sigma_g}\right)
\end{equation}
where $g_{\mathrm{score}}(x)$ center-crops the image and computes the 'green score' via the pixel average of 
\begin{equation}
    2\cdot x_{\rm{green\_channel}}-x_{\rm{red\_channel}}-x_{\rm{blue\_channel}}.
\end{equation}
Finally, $\mu_g$ and $\sigma_g$ are the empirical average and standard deviation of the green score before post-training. See \Cref{fig:green-frogs} for before and after.

\xhdr{ImageNet-64}
The \imnet experiments use a mixture of different rewards; \nfmr uses $R(x)=\rm{Skywork}(x) + 0.1\times\rm{ImageReward}(x)$ and \nfmd uses $R(x)=\rm{ImageReward}(x)+0.1\times\rm{Aesthetic Score}(x) +0.2\times\rm{PickScore}(x)$. \nfmr uses $J^{\rm{max ent}}(\theta)$ with $\beta=0.2$, learning rate $3\times10^{-6}$, batch size $256$, \texttt{FP32} precision and trains without guidance for 40 steps. \nfmd uses $J^{\rm{RLOO}}(\theta)$ with $\beta=0.01$, learning rate $1\times 10^{-6}$, batch size $448$, \texttt{FP32} and trains without guidance for 20 steps. See \Cref{fig:before_and_after_rl} for before and after, and \Cref{fig:nfmd_b2_dog_metrics,fig:nfmr_rl_sheep_metrics} for metrics over the course of training.

\clearpage
\section{Additional Results}
\label{app:results}

\xhdr{Hardware}
Experiments were conducted on NVIDIA L40S, H100, GH200, RTX Pro 6000 Blackwell, and RTX Pro 5000 Blackwell.

\subsection{Image Generation}
\label{app:image_generation}
\xhdr{Datasets} We evaluate our models on both CIFAR-10 and ImageNet-64$\times$64. For \nfmr we use ReFlow couplings from EDM~\citep{karras2022elucidating} for \cifar and from EDM 2~\citep{karras2024analyzing} for \imnet.

\xhdr{Architecture} \nameshorts implement modified iTARFlow architectures (\S~\ref{app:architecture}). All model size configurations are based off the original iTARFlow and the \nfmr recipe follows pixel Mean Flow (pMF)~\citep{lu2026onestep}.

\begin{table}[h]
\centering
\caption{Architectural hyperparameters used for the image experiments.}
\label{tab:image_architecture_configurations}
\begin{tabular}{lccccc}
\toprule
System & Patch Size& Blocks & Layers & Channels & Parameters (M) \\
\midrule
\cifar      &2& 8 & 6 & 256 & $~40$M\\
\midrule
\imnet      &2& 8 & 8 & 384 & $~120$M\\
\bottomrule
\end{tabular}
\end{table}

\xhdr{Training configuration} Following pMF, we use the Muon optimizer with Muon and AdamW weight decays of 0.1 and $1\times10^{-4}$ respectively, and AdamW parameters $(\beta_1,\beta_2)=(0.9,0.95)$. For MLE iTARFlow we ablated the choice of optimizer~\Cref{tab:mle_itarflow_muon_cifar} on \cifar and used AdamW on both datasets. We train the MLE iTARFlow, \nfmd and noise annealed \nfmr \cifar models for 960 epochs adopting a constant learning rate of $4\times 10^{-4}$. The noise annealed \nfmr begins with $\sigma=0.5$ noise injected into the data, and this is annealed to $0$ over the first 300 epochs. The numerically non-invertible \nfmr model follows a different training strategy; first we train solely with a flow matching loss (equivalently, only sampling $s=t$ in \Cref{alg:nfmr_training}) for 450 epochs at learning rate of $4\times10^{-4}$, next we resume training from EMA weights with the full iMF loss for 200 epochs with the same learning rate. This strategy is substantially cheaper and produces better samples. 

Similarly for \imnet, \nfmr begins with flow matching training at a learning rate of $4\times 10^{-4}$ for 200 epochs, and resumes training with the full iMF loss for just 25 epochs at a lower learning rate of $1\times 10^{-4}$. On \imnet both MLE iTARFlow and \nfmd train for 300 epochs with a linear warmup of 25 000 steps up to a learning rate of $4\times 10^{-4}$. For \nfmd we found it beneficial to bring the learning rate down to $1\times 10^{-4}$ with a cosine decay from epoch 150 to 200, after the decay this learning rate is maintained until the end.

One-step \nfmd is trained following~\Cref{alg:nfmd_one_time_training}, where $t_{max}=0.1$ for \cifar and $t_{max} =0.15$ for \imnet.

\xhdr{Evaluation} During training we periodically evaluate the models; for MLE iTARFlow and \nfmd we use a guidance of 2.8 and for \nfmr we use the default guidance settings from pMF. After training, we select the best checkpoint and sweep the guidance setting to find each model's best performance (\Cref{fig:nfmr_cifar_10_guidance_sweeps},~\Cref{fig:nfmd_vs_mle_cifar_10_guidance}).

\xhdr{Baselines} We primarily compare against iTARFlow~\citep{lu2026bidirectional}. To the best of our knowledge, no existing TARFlow works report conditional CIFAR-10 results. Hence, to ensure a fair comparison we ablated improved TARFlow over several learning rates, EMA decays, noise augmentation levels and guidance levels (Table~~\ref{tab:mle_itarflow_cifar}). Furthermore, we also investigated whether the iTARFlow recipe could similarly benefit from the Muon optimizer over AdamW (Table~\ref{tab:mle_itarflow_muon_cifar}). Finally, for ImageNet $64\times 64$ we followed the noise level of $0.05$ reported in the original TARFlow~\citep{zhai2025normalizing} paper and performed two experiments with learning rates of $1\times10^{-4}$ and $4\times 10^{-4}$.

\begin{table}[h]
\centering
\caption{MLE iTARFlow AdamW results on conditional CIFAR-10, results are selected across the learning rates $\{10^{-4}, 4\times 10^{-4}, 10^{-3}\}$ and EMAs $\{0.9999, 0.9995, 0.999\}$. Results computed with the default MLE iTARFlow guidance of 2.8.}
\label{tab:mle_itarflow_cifar}

\resizebox{\linewidth}{!}{%
\begin{tabular}{llcccccc}
\toprule
Setting & Variant & FID $\downarrow$ & FLD $\downarrow$ & KID $\downarrow$ & Prec.\ $\uparrow$ & Rec.\ $\uparrow$ & Invertible? $\downarrow$ \\
\midrule
\multirow{2}{*}{No noise} & Guidance & 40.83 & 23.21 & 2.758 & 0.744 & 0.025 & \checkmark \\
 & No guidance & 55.52 & 26.90 & 4.249 & 0.835 & 0.017 & \checkmark \\
\midrule
\multirow{4}{*}{0.05 noise} & Guidance and Denoise & 6.72 & 8.22 & 0.457 & 0.595 & 0.293 & \xmark \\
 & No guidance & 12.04 & 12.00 & 1.277 & 0.627 & 0.340 & \xmark \\
 & No denoise & 48.34 & 10.05 & 0.923 & 0.676 & 0.122 & \checkmark \\
 & No guid., no denoise & 71.48 & 15.16 & 2.256 & 0.734 & 0.129 & \checkmark \\
\midrule
\multirow{4}{*}{0.1 noise} & Guidance and Denoise & 7.59 & 8.87 & 0.531 & 0.598 & 0.233 & \xmark \\
 & No guidance & 13.99 & 13.22 & 1.513 & 0.625 & 0.273 & \xmark \\
 & No denoise & 79.60 & 12.46 & 1.778 & 0.773 & 0.039 &\checkmark \\
 & No guid., no denoise & 112.64 & 18.39 & 3.886 & 0.862 & 0.023 & \checkmark \\
\midrule
\multirow{4}{*}{0.2 noise} & Guidance and Denoise & 11.46 & 9.89 & 0.800 & 0.631 & 0.159 & \xmark \\
 & No guidance & 17.19 & 14.08 & 1.782 & 0.654 & 0.199 &\xmark \\
 & No denoise & 122.25 & 17.74 & 4.027 & 0.886 & 0.003 & \checkmark\\
 & No guid., no denoise & 163.44 & 24.66 & 6.864 & 0.948 & 0.001 & \checkmark \\
\midrule
\multirow{4}{*}{0.3 noise} & Guidance and Denoise & 15.17 & 11.13 & 1.008 & 0.628 & 0.126 & \xmark \\
 & No guidance & 22.22 & 15.54 & 2.096 & 0.658 & 0.149 & \xmark \\
 & No denoise & 165.36 & 23.71 & 6.915 & 0.938 & 0.000 & \checkmark \\
 & No guid., no denoise & 198.75 & 29.67 & 9.531 & 0.967 & 0.000 & \checkmark \\
\bottomrule
\end{tabular}%
}
\end{table}

\begin{table}[h]
\centering
\caption{MLE iTARFlow results on conditional CIFAR-10 at $0.05$ input noise, comparing the AdamW and Muon optimizers. The AdamW row is selected across the learning rates $\{10^{-4}, 4\times 10^{-4}, 10^{-3}\}$; the Muon row across the same learning rates and Muon weight decays $\{0, 0.1\}$. Both select over EMAs $\{0.9999, 0.9995, 0.999\}$ and steps by FID.  Results computed with the default MLE iTARFlow guidance of 2.8.}
\label{tab:mle_itarflow_muon_cifar}
\resizebox{\linewidth}{!}{%
\begin{tabular}{llcccccc}
\toprule
Optimizer & Variant & FID $\downarrow$ & FLD $\downarrow$ & KID $\downarrow$ & Prec.\ $\uparrow$ & Rec.\ $\uparrow$ &Invertible? \\
\midrule
\multirow{4}{*}{AdamW} & Guidance and Denoise & 6.72 & 8.22 & 0.457 & 0.595 & 0.293 & \xmark \\
 & No guidance & 12.04 & 12.00 & 1.277 & 0.627 & 0.340 & \xmark \\
 & No denoise & 48.34 & 10.05 & 0.923 & 0.676 & 0.122 & \checkmark \\
 & No guid., no denoise & 71.48 & 15.16 & 2.256 & 0.734 & 0.129 & \checkmark \\
\midrule
\multirow{4}{*}{Muon} & Guidance and Denoise & 7.54 & 7.69 & 0.394 & 0.622 & 0.275 & \xmark \\
 & No guidance & 11.62 & 11.35 & 1.177 & 0.630 & 0.322 & \xmark \\
 & No denoise & 44.69 & 9.34 & 0.854 & 0.682 & 0.116 & \checkmark \\
 & No guid., no denoise & 66.66 & 14.18 & 2.075 & 0.732 & 0.121 & \checkmark \\
\bottomrule
\end{tabular}%
}
\end{table}

\clearpage
\begin{figure}[t]
    \centering

    \begin{subfigure}[t]{0.48\textwidth}
        \centering
        \includegraphics[width=\linewidth]
        {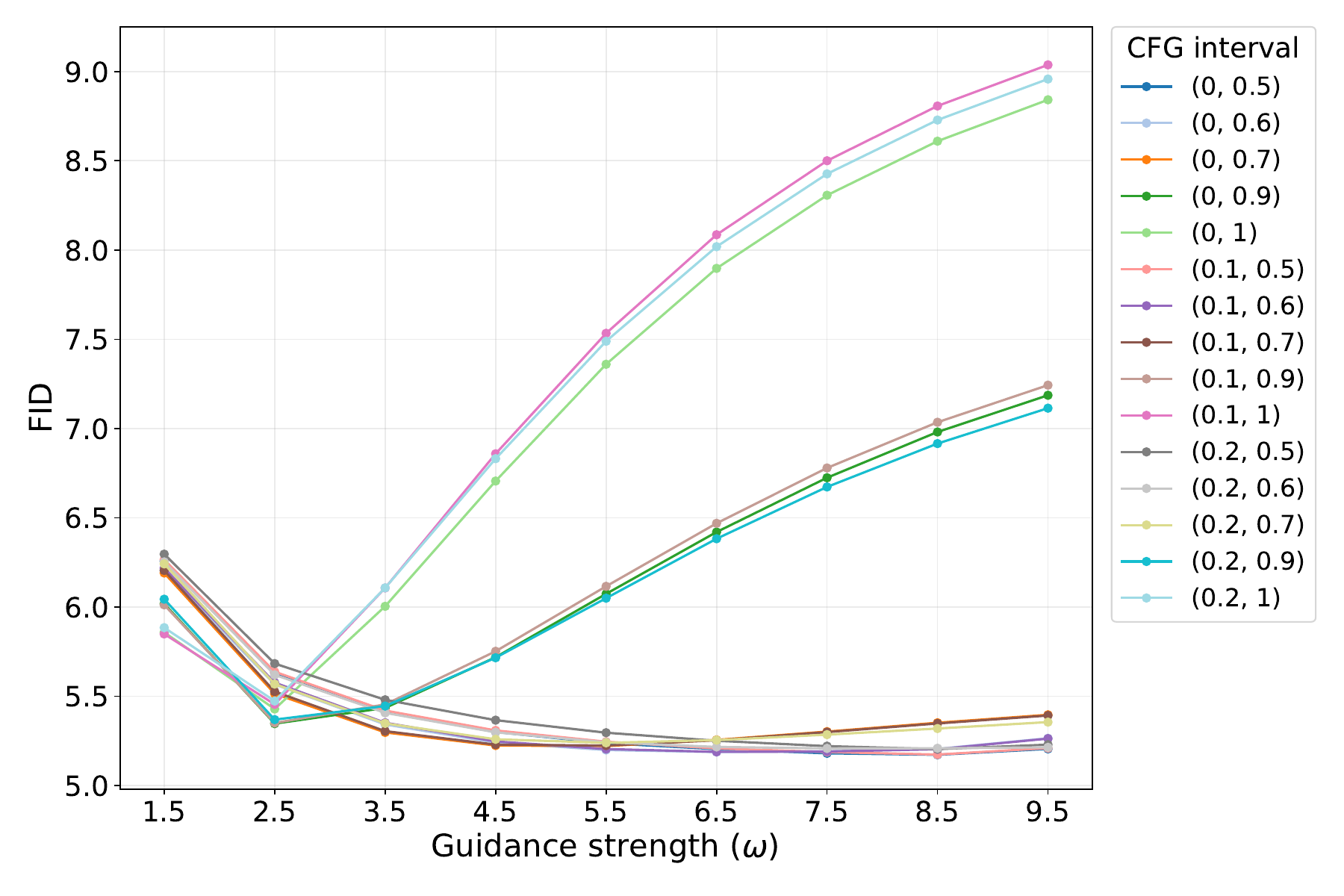}
        \subcaption{1 sampling step.}
        \label{fig:nfmr_cifar_10_k1_guidance}
    \end{subfigure}
    \hfill
    \begin{subfigure}[t]{0.48\textwidth}
        \centering
        \includegraphics[width=\linewidth]
        {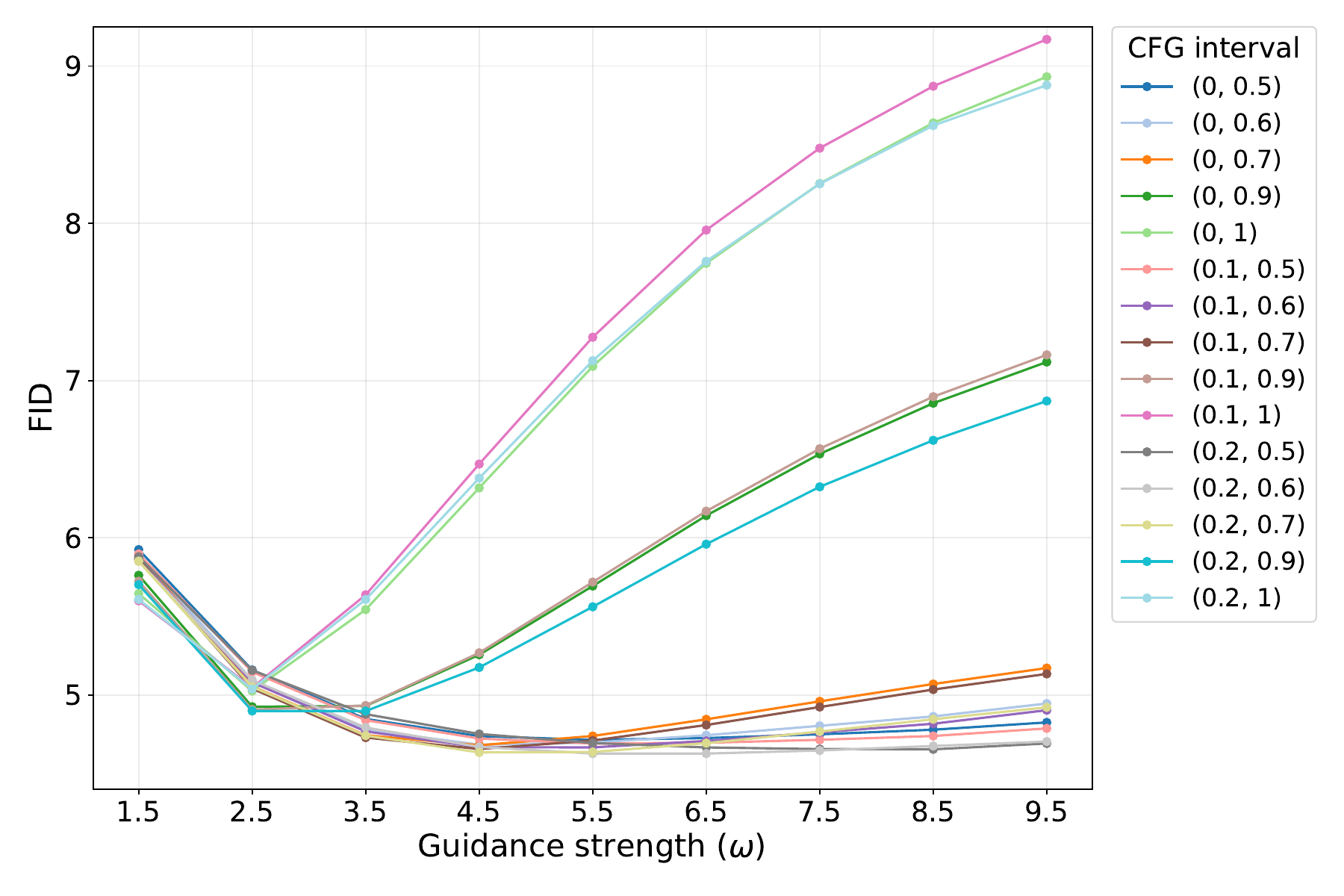}
        \subcaption{2 sampling steps.}
        \label{fig:nfmr_cifar_10_k2_guidance}
    \end{subfigure}

    \caption{\nfmr (\emph{noise annealed}) CIFAR-10 FID as the CFG strength ($\omega$) and the CFG interval
    $[t_{\rm min}, t_{\rm max}]$ vary.}
    \label{fig:nfmr_cifar_10_guidance_sweeps}
\end{figure}

\begin{figure}[t]
    \centering
    \includegraphics[width=0.6\textwidth]
    {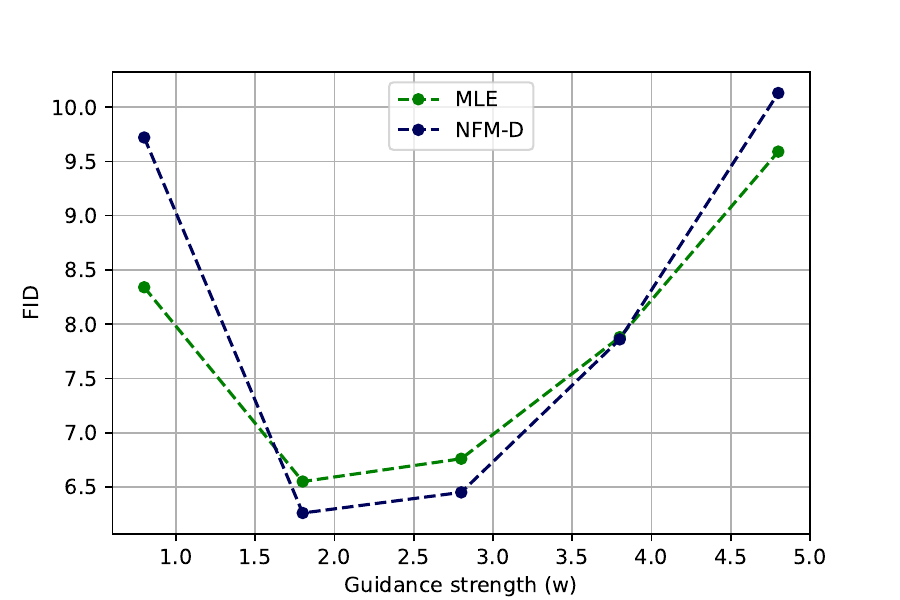}
    \caption{\nfmd and iTARFlow MLE guidance scale (w) vs FID on CIFAR-10}
    \label{fig:nfmd_vs_mle_cifar_10_guidance}
\end{figure}

\xhdr{Multi-step \nfmd}
\looseness=-1 For two-step generation on \cifar, we train the gated dual-time parametrization using equally weighted one-time
and composed likelihood objectives,
\[
\mathcal L
=
\tfrac12\mathcal L_{\mathrm D}
+
\tfrac12\mathcal L_{\mathrm D,2}.
\]
\looseness=-1 Both objectives are evaluated on the full minibatch from the start of training. In the composed objective, the terminal intermediate-to-prior map uses detached current parameters, while retaining gradients through its input. For the one-time objective, we sample $u\sim\mathcal U[0,1]$. For the composed objective, we independently draw two uniform times and order them as $0\leq t<s\leq1$. The two objectives share the same class-dropout mask, with labels removed independently with probability $0.1$.

\looseness=-1 At inference, we use the final checkpoint after $1000$ training epochs with EMA decay $0.9999$.
Sampling follows $1\rightarrow0.75\rightarrow0$: the first step is unconditional, while the second uses
a CFG scale of $3.2$.

\clearpage

\subsubsection{Reinforcement Learning experiments}
\label{app:rl_results}
\xhdr{CIFAR-10: Frogs experiment}
\begin{figure*}[h]
    \centering
    \begin{subfigure}[t]{0.18\textwidth}
        \centering
        \includegraphics[width=\linewidth]{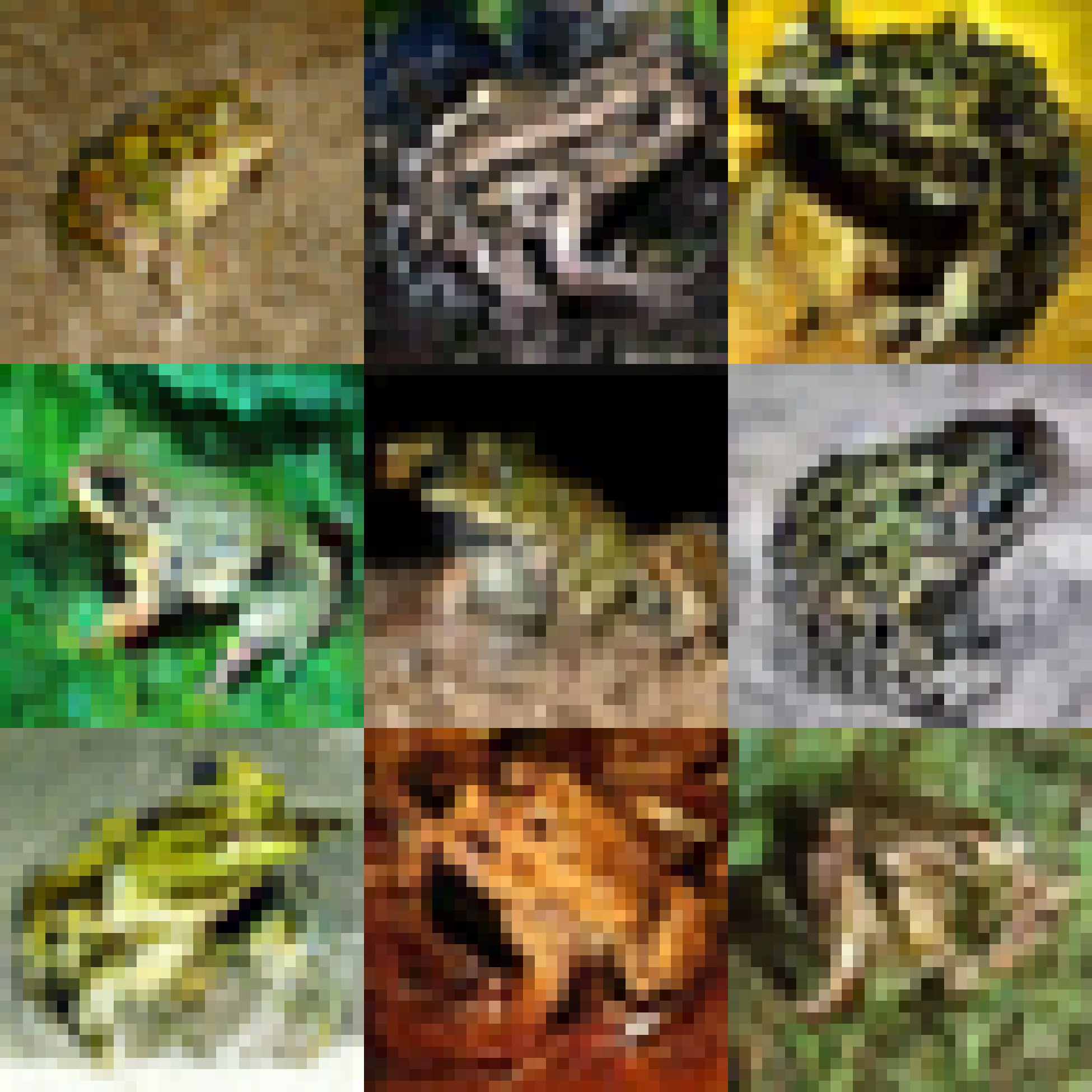}
        \caption{NFM-D, before.}
        \label{fig:green-frogs-nfmd-before}
    \end{subfigure}
    \hfill
    \begin{subfigure}[t]{0.18\textwidth}
        \centering
        \includegraphics[width=\linewidth]{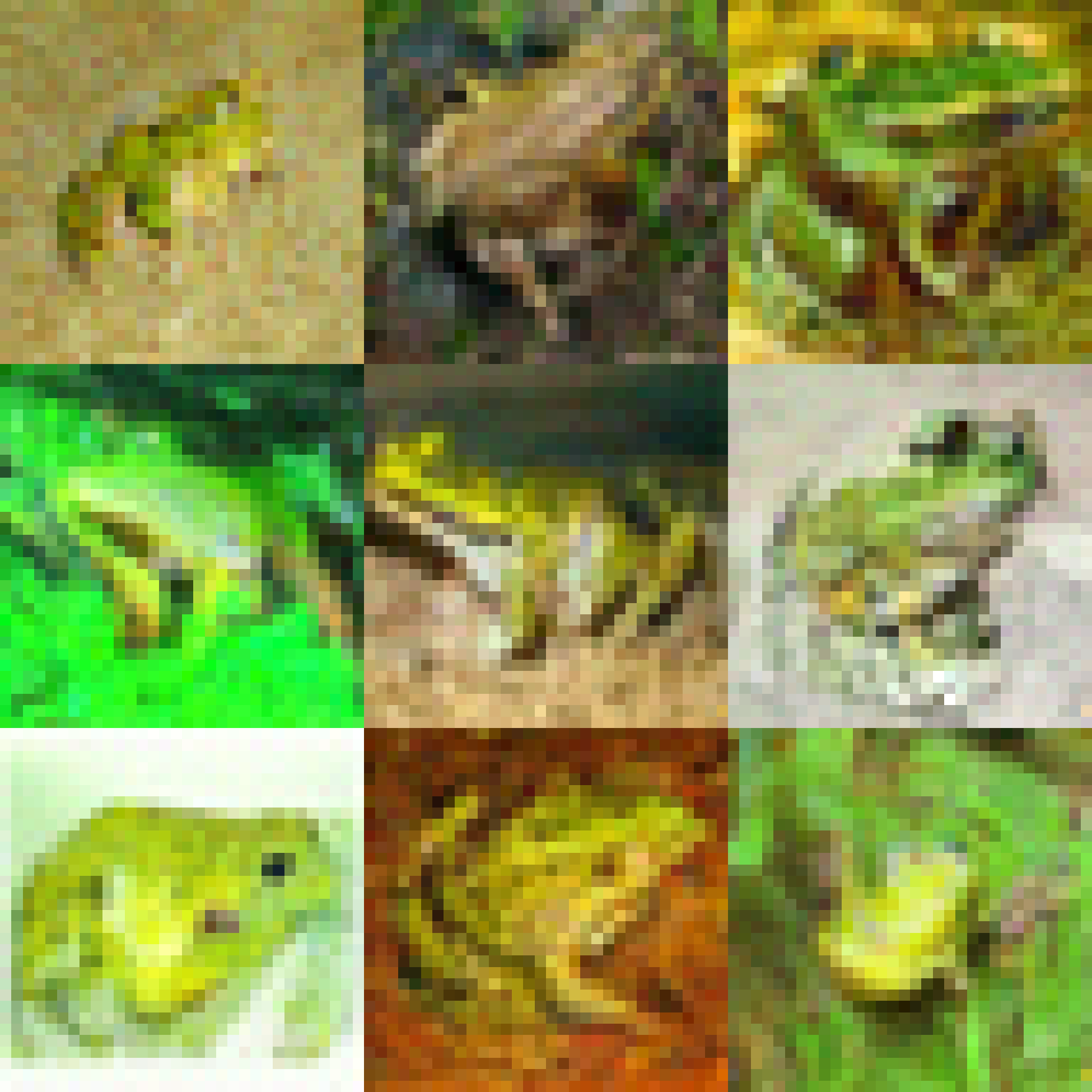}
        \caption{NFM-D, after.}
        \label{fig:green-frogs-nfmd-after}
    \end{subfigure}
    \hfill
    \begin{subfigure}[t]{0.18\textwidth}
        \centering
        \includegraphics[width=\linewidth]{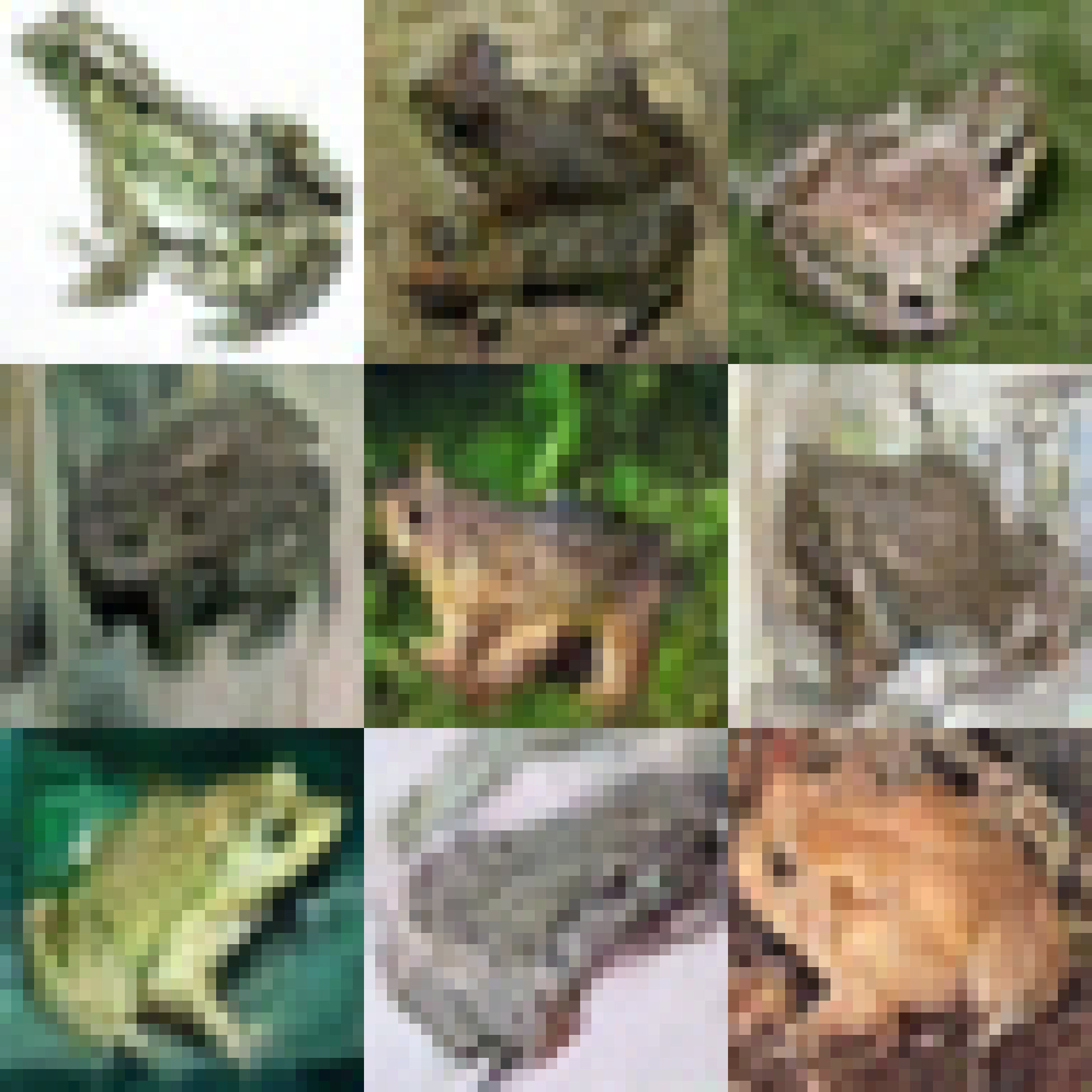}
        \caption{NFM-R, before.}
        \label{fig:green-frogs-nfmr-before}
    \end{subfigure}
    \hfill
    \begin{subfigure}[t]{0.18\textwidth}
        \centering
        \includegraphics[width=\linewidth]{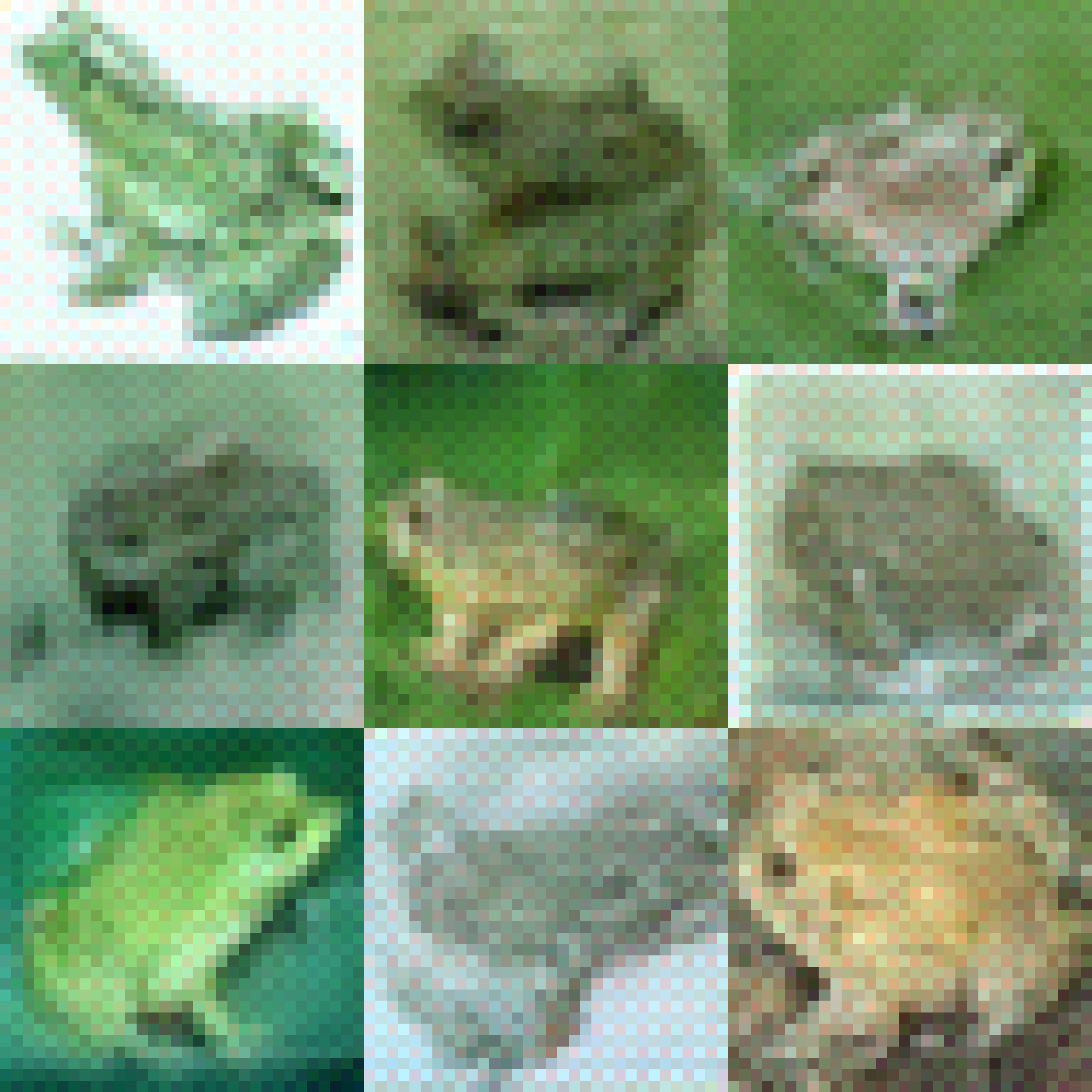}
        \caption{NFM-R, after.}
        \label{fig:green-frogs-nfmr-after}
    \end{subfigure}
    \vspace{-5pt}
    \caption{
        \small \looseness=-1 Samples from \nameshorts pre- and post-REINFORCE fine-tuning
        with a reward favouring green frogs.
    }
    \label{fig:green-frogs}
\end{figure*}

In \Cref{fig:green-frogs} we sample frogs under a fixed noise seed before and after fine-tuning. NFM-D demonstrates a clear shift in model outputs, while NFM-R a subtle adjustment.

\xhdr{ImageNet-64: Text-alignment experiments}

\begin{figure}[h]
  \centering
  \includegraphics[width=0.5\linewidth]{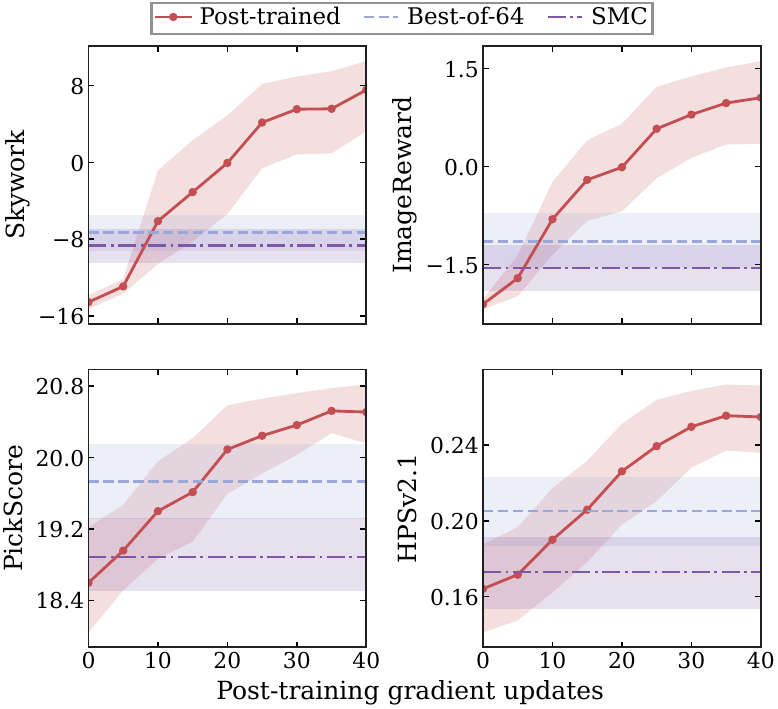}
  \caption{\small \looseness=-1 Post-trained \nfmr vs. best-of-$64$ and SMC baselines. Prompt: \emph{bighorn sheep standing on a sandy beach beside the sea}.}
  \label{fig:nfmr_rl_sheep_metrics}
\end{figure}

In \Cref{fig:nfmr_rl_sheep_metrics} we plot the performance of \nfmr against best-of-N and SMC baselines over 10 seeds and observe a clear improvement over these baselines across all four rewards.

\clearpage

\subsection{Boltzmann Generators}
\label{app:results_BGs}

\xhdr{Datasets}
We evaluate our models in both system-specific and transferable settings. For the system-specific setting, we consider tri-alanine (AL3), tetra-alanine (AL4), and hexa-alanine (AL6), comprising three, four, and six residues, respectively. We additionally evaluate on Chignolin (sequence GYDPETGTWG), a 10-residue miniprotein. For the transferable setting, we use ManyPeptidesMD.

\xhdr{Architecture}
We use the iTARFlow architecture for \nameshort, with modifications to TARFlow detailed in \S~\ref{app:architecture}. Table~\ref{tab:architecture} reports the configurations used in our experiments. The system-specific configurations follow SBG~\citep{tan2025scalable}, except that AL4 uses the smaller AL3 configuration and Chignolin uses the smaller AL6 configuration. The transferable configuration follows transferable TARFlow~\citep{tan2025amortized}.

\begin{table}[h]
\centering
\caption{Architectural hyperparameters used for \nameshort.}
\label{tab:architecture}
\begin{tabular}{lcccc}
\toprule
System & Layers & Blocks & Channels & Parameters (M) \\
\midrule
AL3              & 6 & 6 & 256 & 29.5\\
AL4              & 6 & 6 & 256 & 29.5\\
AL6              & 6 & 6 & 384 & 66.5\\
Chignolin        & 6 & 6 & 384 & 66.5\\
\midrule
Transferable      & 8 & 8 & 384 & 118.6\\
\bottomrule
\end{tabular}
\end{table}

\xhdr{Training configuration}
Following ArBG~\citep{rehman2026autoregressive}, we use the Muon optimizer, departing from the original SBG training configuration. We use a learning rate of $2 \times 10^{-4}$, Nesterov momentum $\mu=0.95$, Muon and AdamW weight decays of $0.1$ and $1\times10^{-4}$, respectively, and AdamW parameters $(\beta_1,\beta_2)=(0.9,0.999)$. We adopt the same cosine-decay learning rate schedule with warmup as SBG~\citep{tan2025scalable}. For the system-specific models, we sweep over 2,000, 2,500, and 3,000 training epochs, while the transferable model is trained for 1,000 epochs. To improve training throughput, we additionally increase the batch size to 2,048. We ablate these training choices in this section.

\subsubsection{Single peptides}
\xhdr{AL3, AL4, and AL6}
Table~\ref{tab:single_peptides} compares \nfmd to different models on AL3, AL4, and AL6. First, \nfmd outperforms SBG. Additionally, it is strong on every baseline, closing the gap to ArBG. We believe there is the need to introduce more single system datasets, including much larger systems (up to 20 residues), that contains equilibrium distributions, as many results are statistically indistinguishable (mostly regarding $\mathbb{T}$-$\mathcal{W}_2$).

\cut{
\begin{table*}[h]
\centering
\caption{Benchmark results on AL3, AL4, and AL6. Reference baselines are shown in gray. Bold values indicate the best result between SBG and \nfmd.}
\resizebox{\textwidth}{!}{%
\begin{tabular}{llcccccc}
\toprule
&
& \multicolumn{2}{c}{Tri-alanine (AL3)}
& \multicolumn{2}{c}{Tetrapeptide (AL4)}
& \multicolumn{2}{c}{Hexa-alanine (AL6)} \\
\cmidrule(lr){3-4}
\cmidrule(lr){5-6}
\cmidrule(lr){7-8}

& Algorithm $\downarrow$
& $\mathcal{E}$-$\mathcal{W}_2 \downarrow$ & $\mathbb{T}$-$\mathcal{W}_2 \downarrow$
& $\mathcal{E}$-$\mathcal{W}_2 \downarrow$ & $\mathbb{T}$-$\mathcal{W}_2 \downarrow$
& $\mathcal{E}$-$\mathcal{W}_2 \downarrow$ & $\mathbb{T}$-$\mathcal{W}_2 \downarrow$ \\
\midrule

\multirow{6}{*}{\rotatebox[origin=c]{90}{\textcolor{refgray}{Regression}}}
& \textcolor{refgray}{ECNF++~\citep{tan2025scalable}}
& \textcolor{refgray}{$2.206 \pm 0.813$}
& \textcolor{refgray}{$0.962 \pm 0.253$}
& \textcolor{refgray}{$5.638 \pm 0.483$}
& \textcolor{refgray}{$1.002 \pm 0.061$}
& \textcolor{refgray}{$10.668 \pm 0.285$}
& \textcolor{refgray}{$1.902 \pm 0.055$} \\

& \textcolor{refgray}{RegFlow~\citep{regflow}}
& \textcolor{refgray}{$0.853 \pm 0.105$}
& \textcolor{refgray}{$1.577 \pm 0.140$}
& \textcolor{refgray}{$3.277 \pm 0.546$}
& \textcolor{refgray}{$2.342 \pm 0.102$}
& \textcolor{refgray}{---}
& \textcolor{refgray}{---} \\

& \textcolor{refgray}{FALCON-A~\citep{rehman2026falcon}}
& \textcolor{refgray}{$1.385 \pm 0.182$}
& \textcolor{refgray}{$0.343 \pm 0.004$}
& \textcolor{refgray}{$2.929 \pm 0.068$}
& \textcolor{refgray}{$1.094 \pm 0.034$}
& \textcolor{refgray}{$1.211 \pm 0.105$}
& \textcolor{refgray}{$1.163 \pm 0.112$} \\

& \textcolor{refgray}{FALCON~\citep{rehman2026falcon}}
& \textcolor{refgray}{$0.544 \pm 0.013$}
& \textcolor{refgray}{$0.452 \pm 0.011$}
& \textcolor{refgray}{$0.686 \pm 0.047$}
& \textcolor{refgray}{$0.858 \pm 0.077$}
& \textcolor{refgray}{$0.892 \pm 0.311$}
& \textcolor{refgray}{$1.256 \pm 0.132$} \\

& \textcolor{refgray}{F2D2~\citep{ai2026joint}}
& \textcolor{refgray}{$2.155 \pm 0.033$}
& \textcolor{refgray}{$0.219 \pm 0.001$}
& \textcolor{refgray}{$4.209 \pm 0.066$}
& \textcolor{refgray}{$1.435 \pm 0.023$}
& \textcolor{refgray}{$10.140 \pm 0.065$}
& \textcolor{refgray}{$1.143 \pm 0.039$} \\

& \textcolor{refgray}{SCALLOP~\citep{scallop_2026}}
& \textcolor{refgray}{$1.596 \pm 0.028$}
& \textcolor{refgray}{$0.212 \pm 0.001$}
& \textcolor{refgray}{$1.545 \pm 0.033$}
& \textcolor{refgray}{$1.399 \pm 0.039$}
& \textcolor{refgray}{$5.845 \pm 0.101$}
& \textcolor{refgray}{$1.120 \pm 0.014$} \\

\midrule

\multirow{3}{*}{\rotatebox[origin=c]{90}{MLE}}
& \textcolor{refgray}{ArBG~\citep{rehman2026autoregressive}}
& \textcolor{refgray}{$0.271 \pm 0.113$}
& \textcolor{refgray}{$0.311 \pm 0.009$}
& \textcolor{refgray}{$0.886 \pm 0.076$}
& \textcolor{refgray}{$0.593 \pm 0.008$}
& \textcolor{refgray}{$0.722 \pm 0.063$}
& \textcolor{refgray}{$1.085 \pm 0.069$} \\

\cmidrule(lr){2-8}

& SBG~\citep{tan2025scalable}
& $0.598 \pm 0.084$
& $0.503 \pm 0.029$
& $1.007 \pm 0.382$
& $1.039 \pm 0.069$
& $1.189 \pm 0.357$
& $1.444 \pm 0.140$ \\

& \textbf{\nfmd (Ours)}
& $\mathbf{0.166 \pm 0.012}$
& $\mathbf{0.428 \pm 0.025}$
& $\mathbf{0.651 \pm 0.145}$
& $\mathbf{0.894 \pm 0.091}$
& $\mathbf{0.325 \pm 0.027}$
& $\mathbf{1.135 \pm 0.004}$ \\

\bottomrule
\end{tabular}%
}
\label{tab:single_peptides}
\end{table*}
}

\begin{table*}[h]
\centering
\caption{Benchmark results for continuous flows on AL3, AL4, and AL6. We report ArBG in gray as an autoregressive model reference. Bold values indicate the best result between SBG and \nfmd.}
\label{tab:single_peptides}
\small
\setlength{\tabcolsep}{3pt}
\resizebox{\textwidth}{!}{%
\begin{tabular}{lcccccc}
\toprule
& \multicolumn{2}{c}{Tri-alanine (AL3)}
& \multicolumn{2}{c}{Tetrapeptide (AL4)}
& \multicolumn{2}{c}{Hexa-alanine (AL6)} \\
\cmidrule(lr){2-3}
\cmidrule(lr){4-5}
\cmidrule(lr){6-7}
Algorithm
& $\mathcal{E}$-$\mathcal{W}_2 \downarrow$
& $\mathbb{T}$-$\mathcal{W}_2 \downarrow$
& $\mathcal{E}$-$\mathcal{W}_2 \downarrow$
& $\mathbb{T}$-$\mathcal{W}_2 \downarrow$
& $\mathcal{E}$-$\mathcal{W}_2 \downarrow$
& $\mathbb{T}$-$\mathcal{W}_2 \downarrow$ \\
\midrule

\textcolor{refgray}{ArBG~\citep{rehman2026autoregressive}}
& \textcolor{refgray}{$0.271 \pm 0.113$}
& \textcolor{refgray}{$0.311 \pm 0.009$}
& \textcolor{refgray}{$0.886 \pm 0.076$}
& \textcolor{refgray}{$0.593 \pm 0.008$}
& \textcolor{refgray}{$0.722 \pm 0.063$}
& \textcolor{refgray}{$1.085 \pm 0.069$} \\

\midrule

ECNF++~\citep{tan2025scalable}
& $2.206 \pm 0.813$
& $0.962 \pm 0.253$
& $5.638 \pm 0.483$
& $1.002 \pm 0.061$
& $10.668 \pm 0.285$
& $1.902 \pm 0.055$ \\

RegFlow~\citep{regflow}
& $0.853 \pm 0.105$
& $1.577 \pm 0.140$
& $3.277 \pm 0.546$
& $2.342 \pm 0.102$
& \textemdash
& \textemdash \\

FALCON-A~\citep{rehman2026falcon}
& $1.385 \pm 0.182$
& $0.343 \pm 0.004$
& $2.929 \pm 0.068$
& $1.094 \pm 0.034$
& $1.211 \pm 0.105$
& $\mathbf{1.163 \pm 0.112}$ \\

FALCON~\citep{rehman2026falcon}
& $\underline{0.544 \pm 0.013}$
& $0.452 \pm 0.011$
& $\mathbf{0.686 \pm 0.047}$
& $\mathbf{0.858 \pm 0.077}$
& $\underline{0.892 \pm 0.311}$
& $\mathbf{1.256 \pm 0.132}$ \\

F2D2~\citep{ai2026joint}
& $2.155 \pm 0.033$
& $\underline{0.219 \pm 0.001}$
& $4.209 \pm 0.066$
& $1.435 \pm 0.023$
& $10.140 \pm 0.065$
& $\mathbf{1.143 \pm 0.039}$ \\

SCALLOP~\citep{scallop_2026}
& $1.596 \pm 0.028$
& $\mathbf{0.212 \pm 0.001}$
& $1.545 \pm 0.033$
& $1.399 \pm 0.039$
& $5.845 \pm 0.101$
& $\mathbf{1.120 \pm 0.014}$ \\

\midrule

SBG~\citep{tan2025scalable}
& $\underline{0.598 \pm 0.084}$
& $0.503 \pm 0.029$
& $1.007 \pm 0.382$
& $1.039 \pm 0.069$
& $1.189 \pm 0.357$
& $1.444 \pm 0.140$ \\

\textbf{\nfmd (Ours)}
& $\mathbf{0.166 \pm 0.012}$
& $0.428 \pm 0.025$
& $\mathbf{0.651 \pm 0.145}$
& $\mathbf{0.894 \pm 0.091}$
& $\mathbf{0.325 \pm 0.027}$
& $\mathbf{1.135 \pm 0.004}$ \\
\bottomrule
\end{tabular}%
}
\end{table*}

\cut{
Table~\ref{tab:sbg_imle} reports a more detailed comparison between SBG and \nfmd. First, we notice our architecture and training configuration improves SBG. Then we find that \nfmd always improves SBG $\mathcal{E}$-$\mathcal{W}_2$, while leaving the  $\mathbb{T}$-$\mathcal{W}_2$ statistically the same to SBG.

\begin{table}[h]
\centering
\caption{Comparison of SBG and \nfmd on AL3, AL4, and AL6. SBG-iTarFlow refers to our reimplementation of SBG, used to control for architectural, optimizer, and training-recipe changes. Best results are in \textbf{bold}. Ties are \underline{underlined}.}
\resizebox{\textwidth}{!}{%
\begin{tabular}{lcccccc}
\toprule
& \multicolumn{2}{c}{Tri-alanine (AL3)}
& \multicolumn{2}{c}{Tetrapeptide (AL4)}
& \multicolumn{2}{c}{Hexa-alanine (AL6)} \\
\cmidrule(lr){2-3}
\cmidrule(lr){4-5}
\cmidrule(lr){6-7}
Algorithm $\downarrow$
& $\mathcal{E}$-$W_2 \downarrow$ & T-$W_2 \downarrow$
& $\mathcal{E}$-$W_2 \downarrow$ & T-$W_2 \downarrow$
& $\mathcal{E}$-$W_2 \downarrow$ & T-$W_2 \downarrow$ \\
\midrule

SBG IS
& $0.758 \pm 0.506$ & $0.502 \pm 0.016$
& $1.068 \pm 0.495$ & $0.969 \pm 0.067$
& $1.021 \pm 0.239$ & $1.431 \pm 0.085$ \\

SBG SMC
& $0.598 \pm 0.084$ & $0.503 \pm 0.029$
& $1.007 \pm 0.382$ & $1.039 \pm 0.069$
& $1.189 \pm 0.357$ & $1.444 \pm 0.140$ \\

\midrule

SBG-iTARFlow IS
& $0.243 \pm 0.023$ & $0.417 \pm 0.021$
& $0.877 \pm 0.127$ & $0.750 \pm 0.056$
& $0.690 \pm 0.127$ & $1.175 \pm 0.048$ \\

\nfmd IS
& $\mathbf{0.166 \pm 0.012}$ & $0.428 \pm 0.025$
& $0.651 \pm 0.145$ & $0.894 \pm 0.091$
& $\mathbf{0.325 \pm 0.027}$ & $1.135 \pm 0.004$ \\

\bottomrule
\end{tabular}%
}
\label{tab:sbg_imle}
\end{table}
}

Finally, we report energy histograms and Ramachandran plots for AL3, AL4 and AL6 in~\cref{fig:energy},~\cref{fig:al3_rama},~\cref{fig:al4_rama}, and~\cref{fig:al6_rama}.
\begin{figure}[h]
    \centering
    \includegraphics[width=1.0\linewidth]{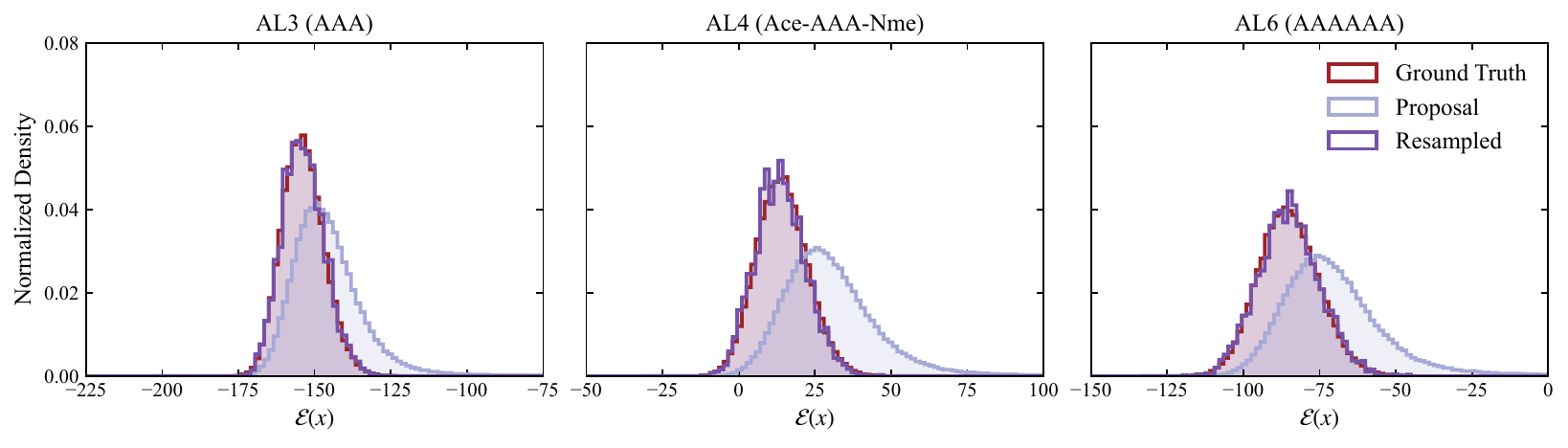} 
    \caption{Energy histograms for the true ground truth MD data, the proposal, and the resampled distribution for AL3, AL4, and AL6.}
    \label{fig:energy}
\end{figure}
\begin{figure}[h]
    \centering
    \includegraphics[width=1.0\linewidth]{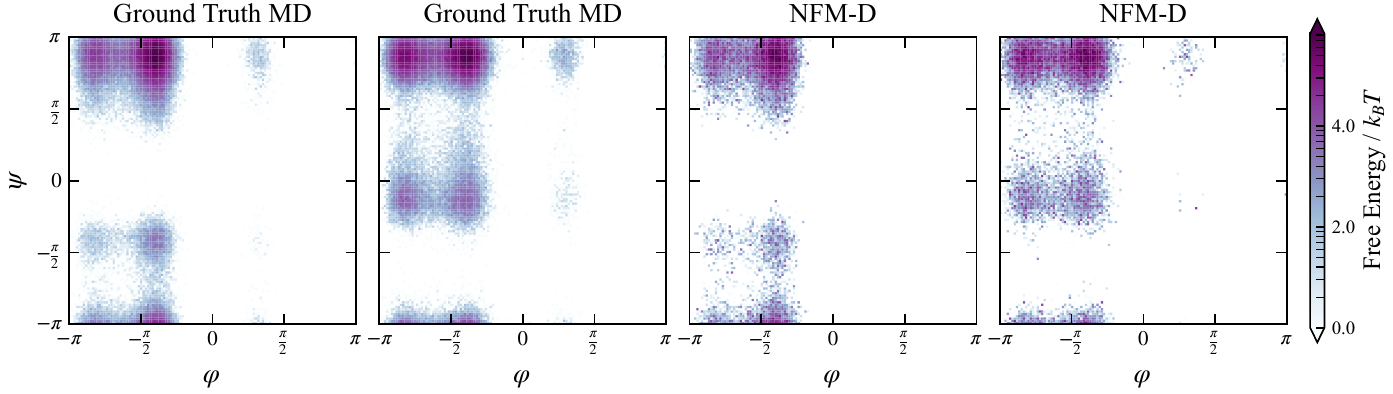}
\caption{Ramachandran plots for tri-alanine (AL3). Left: reference MD data; right: angular distributions generated by NFM-D.}    \label{fig:al3_rama}
\end{figure}
\begin{figure}[h]
    \centering
    \includegraphics[width=1.0\linewidth]{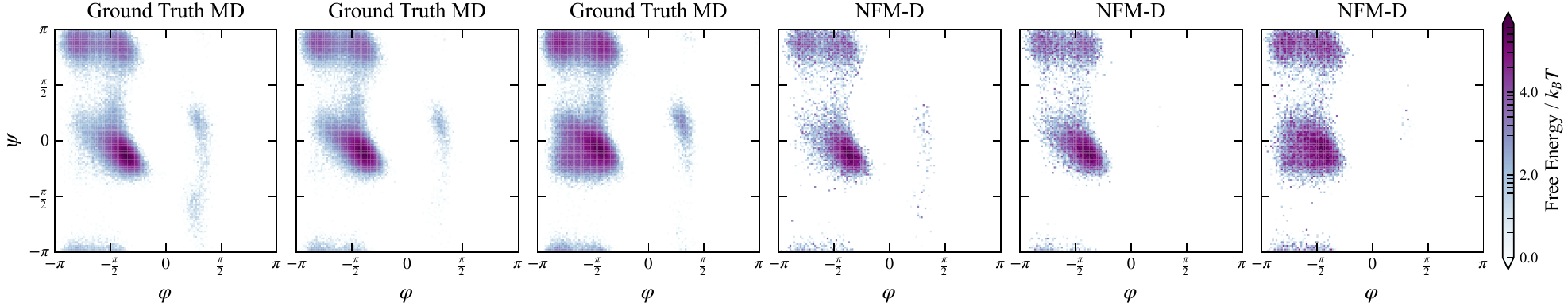}
\caption{Ramachandran plots for tetra-alanine (AL4). Left: reference MD data; right: angular distributions generated by NFM-D.}    \label{fig:al4_rama}
\end{figure}
\begin{figure}[h]
    \centering
    \includegraphics[width=1.0\linewidth]{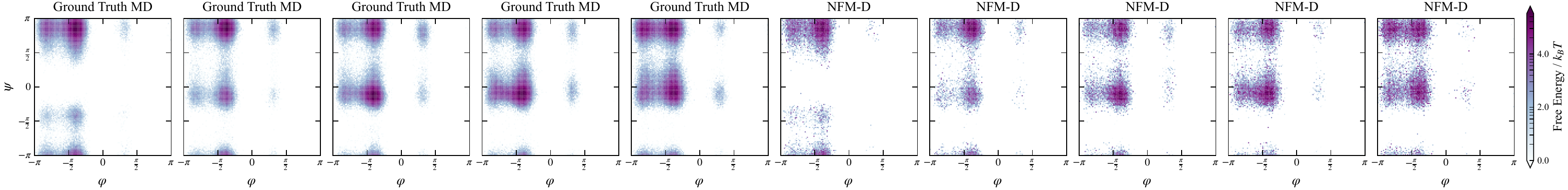}
\caption{Ramachandran plots for hexa-alanine (AL6). Left: reference MD data; right: angular distributions generated by NFM-D.}    \label{fig:al6_rama}
\end{figure}

\clearpage
\xhdr{Comparison between \nfmd and \nfmr}
\Cref{tab:al3_nfm} compares \nfmr and \nfmd on AL3. We observe that the raw metrics are comparable, showing that both \nameshort variants can successfully train NFs. After resampling, however, \nfmd substantially outperforms \nfmr. We attribute this improvement to the better likelihood estimates produced by \nfmd, which enable more effective resampling.

\begin{table}[h]
\centering
\small
\setlength{\tabcolsep}{5pt}
\caption{Comparison of the best \nfmr and \nfmd models on AL3 using raw and resampled metrics. Best results are in \textbf{bold}.}
\label{tab:al3_nfm}
\begin{tabular}{@{}lcccc@{}}
\toprule
& \multicolumn{2}{c}{Raw} & \multicolumn{2}{c}{Resampled} \\
\cmidrule(lr){2-3}\cmidrule(lr){4-5}
Algorithm
& $\mathcal{E}$-$\mathcal{W}_2\downarrow$ & $\mathbb{T}$-$\mathcal{W}_2\downarrow$
& $\mathcal{E}$-$\mathcal{W}_2\downarrow$ & $\mathbb{T}$-$\mathcal{W}_2\downarrow$ \\
\midrule
\nfmr & $>10^9$ & $\mathbf{0.31}$ & $0.54$ & $0.51$ \\
\nfmd & $>10^9$ & $0.32$ & $\mathbf{0.15}$ & $\mathbf{0.40}$ \\
\bottomrule
\end{tabular}
\end{table}

\xhdr{Scaling to decapeptides} Table~\ref{tab:chignolin_additional} compares SBG and \nfmd on Chignolin. The effect of the architecture and training setting is ablated. We train a smaller model compared to SBG to avoid overfitting.

\begin{table}[h]
\centering
\caption{Comparison of SBG and \nfmd on Chignolin. SBG-iTARFlow refers to our reimplementation of SBG, used to control for architectural, optimizer, and training-recipe changes. Best results are in \textbf{bold}; second-best results are \underline{underlined}.}
\begin{tabular}{lccc}
\toprule
Algorithm 
& $\mathcal{E}$-$\mathcal{W}_2 \downarrow$ 
& $\mathbb{T}$-$W_2 \downarrow$
& Parameters (M) \\
\midrule

SBG SMC
& $10.819 \pm 7.206$ & $3.778 \pm 0.440$ & 114 \\

SBG-iTARFlow IS
& $17.015 \pm 11.781$ & $2.845 \pm 0.069$ & $66.5$ \\

SBG-iTARFlow SMC
& $10.874 \pm 10.179$ & $3.558 \pm 0.918$ & $66.5$ \\

\textbf{\nfmd IS (Ours)}
& $\underline{5.118 \pm 0.262}$ & $\mathbf{2.712 \pm 0.050}$ & $66.5$ \\

\textbf{\nfmd SMC (Ours)}
&  $\mathbf{3.419 \pm 0.268}$ & $\underline{2.809 \pm 0.119}$ & $66.5$ \\

\bottomrule
\end{tabular}
\label{tab:chignolin_additional}
\end{table}

\subsection{Transferable Samplers}
\xhdr{Comparison with MLE training}
Table~\ref{tab:ablations_transf} reports the effects of different training settings, architectural choices, and loss functions on transferable TARFlow. Comparing the third row with the preceding baselines, we find that training with Muon and a larger batch size yields modest improvements across most metrics. Replacing TARFlow with the iTARFlow architecture leads to a substantially larger gain, while switching to the \nfmd loss further improves the quality of the learned energy representation.

\begin{table}[h]
\centering
\small
\setlength{\tabcolsep}{2.5pt}
\caption{Quantitative results on unseen 4- and 8-residue peptides, with a budget of $10^4$ energy evaluations at $T=1$. Rows marked with $^\dagger$ denote our reimplementation, used to control for differences in optimizer and training settings. Best results are in \textbf{bold}; second-best results are \underline{underlined}.}
\label{tab:ablations_transf}

\begin{tabular}{lccc|ccc}
\toprule
\multicolumn{1}{l}{Sequence length $\rightarrow$}
& \multicolumn{3}{c|}{4AA $_{\text{(30 systems)}}$}
& \multicolumn{3}{c}{8AA $_{\text{(30 systems)}}$}
\\
\cmidrule(lr){2-4}
\cmidrule(lr){5-7}

Model
& $\mathcal{E}$-$\mathcal{W}_2$ $\downarrow$
& $\mathbb{T}$-$\mathcal{W}_2$ $\downarrow$
& TICA-$\mathcal{W}_2$ $\downarrow$
& $\mathcal{E}$-$\mathcal{W}_2$ $\downarrow$
& $\mathbb{T}$-$\mathcal{W}_2$ $\downarrow$
& TICA-$\mathcal{W}_2$ $\downarrow$
\\
\midrule

MLE-TARFlow (original)
& $1.260$
& $0.924$
& $0.492$
& $11.298$
& $2.733$
& $1.087$
\\

MLE-TARFlow (rerun)
& $1.414$
& $0.933$
& $0.585$
& $12.126$
& $2.808$
& $1.107$
\\

\midrule

MLE-TARFlow$^\dagger$
& $1.362$
& $0.843$
& $0.424$
& $12.723$
& $2.654$
& $0.994$
\\

MLE-iTARFlow
& $\underline{1.068}$
& $\underline{0.757}$
& $\mathbf{0.378}$
& $\underline{8.569}$
& $\underline{2.379}$
& $\underline{0.944}$
\\

\textbf{\nfmd (Ours)}
& $\mathbf{0.907}$
& $\mathbf{0.750}$
& $\underline{0.387}$
& $\mathbf{7.396}$
& $\mathbf{2.315}$
& $\mathbf{0.890}$
\\



\bottomrule
\end{tabular}%
\end{table}

\xhdr{TICA plots for additional unseen octapeptides}
We complement Figure~\ref{fig:energy_distributoions} by reporting additional TICA plots for unseen octapeptides in Figure~\ref{fig:appendix_tica}. This furher demonstrate the zero-shot performance of \nfmd.
\begin{figure}[h]
    \centering
    \includegraphics[width=1.0\linewidth]{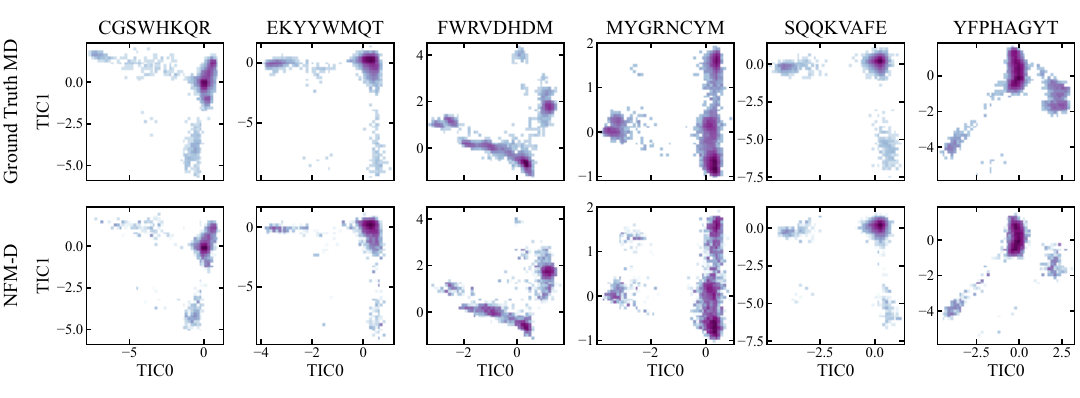}
    \caption{TICA projections for the ground truth MD, and \nfmd on six unseen octapeptides (reported in the titles).}
    \label{fig:appendix_tica}
\end{figure}

\xhdr{Peptide-level performance}
For each 4AA and 8AA peptide in the ManyPeptidesMD test set, we report
$\mathcal{E}$-$\mathcal{W}_2$, $\mathbb{T}$-$\mathcal{W}_2$, and
TICA-$\mathcal{W}_2$. NFM-D generally achieves lower discrepancies than
TARFlow, although the size of the improvement varies across sequences and
metrics.

\begin{figure}[h]
    \centering
    \includegraphics[width=0.8\linewidth]{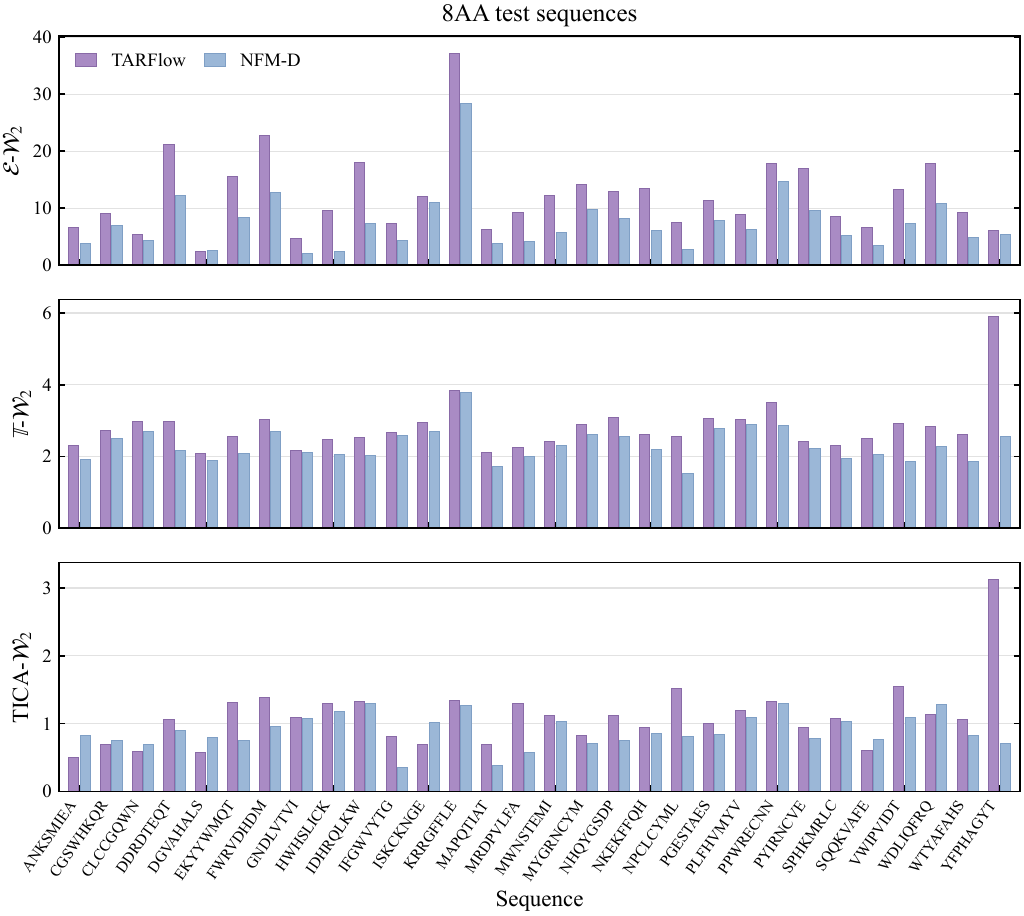}
    \caption{Per-sequence $\mathcal{E}$-$\mathcal{W}_2$,
$\mathbb{T}$-$\mathcal{W}_2$, and TICA-$\mathcal{W}_2$ on the 8AA
ManyPeptidesMD test set. Bars compare TARFlow and NFM-D; lower values
indicate better agreement with the reference distribution.}
    \label{fig:8aa_appendix}
\end{figure}

\begin{figure}[h]
    \centering
    \includegraphics[width=0.8\linewidth]{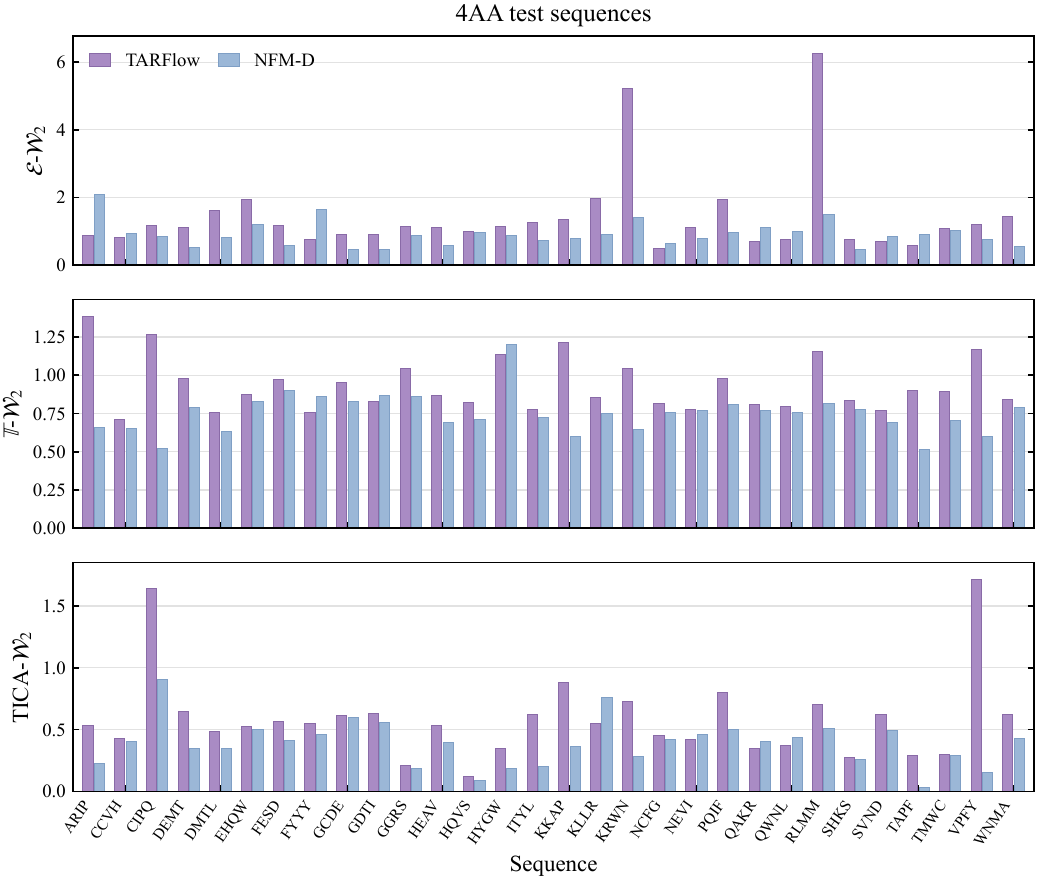}
    \caption{Per-sequence $\mathcal{E}$-$\mathcal{W}_2$,
$\mathbb{T}$-$\mathcal{W}_2$, and TICA-$\mathcal{W}_2$ on the 4AA
ManyPeptidesMD test set. Bars compare TARFlow and NFM-D; lower values
indicate better agreement with the reference distribution.}
    \label{fig:4aa_appendix}
\end{figure}

\clearpage
\section{Additional \imnet Samples}
\label{app:imagenet64-additional-samples}

\begingroup
\newcommand{\NFMAppendixGrid}[3]{%
  \begin{minipage}[t]{0.485\textwidth}
    \centering
    \includegraphics[width=\linewidth]{images/uncurated_samples/#1_final/#1_#2.png}\par
    \vspace{2pt}
    {\small #3\par}
  \end{minipage}%
}
\newcommand{\NFMAppendixPage}[3]{%
  \subsection{Uncurated #2 B/2 Samples}
  \label{app:samples-#1}
  \noindent
  \NFMAppendixGrid{#1}{081}{class 81: ptarmigan}\hfill%
  \NFMAppendixGrid{#1}{108}{class 108: sea anemone}\par\medskip
  \noindent
  \NFMAppendixGrid{#1}{323}{class 323: monarch butterfly}\hfill%
  \NFMAppendixGrid{#1}{327}{class 327: starfish}\par\medskip
  \noindent
  \NFMAppendixGrid{#1}{547}{class 547: electric locomotive}\hfill%
  \NFMAppendixGrid{#1}{628}{class 628: ocean liner}\par\medskip
  \noindent
  \NFMAppendixGrid{#1}{668}{class 668: mosque}\hfill%
  \NFMAppendixGrid{#1}{947}{class 947: mushroom}\par\medskip
  \noindent
  \NFMAppendixGrid{#1}{979}{class 979: valley}\hfill%
  \NFMAppendixGrid{#1}{980}{class 980: volcano}\par
}

\NFMAppendixPage{nfm_r}{NFM-R}{5.5}
\clearpage
\NFMAppendixPage{nfm_d}{NFM-D}{3.3}
\clearpage
\endgroup

\end{document}